\newcommand{\squeezeup}{\vspace{-1mm}}
\DeclareMathOperator*{\Argmin}{arg\;min}
\DeclareMathOperator{\Expect}{\mathbb{E}}
\newlength\dlf  
\theoremstyle{definition}
\newtheorem{remark}{Remark}
\newtheorem{theorem}{Theorem}
\newtheorem{lemma}{Lemma}
\newtheorem{definition}{Definition}
\newtheorem{proposition}{Proposition}
\newtheorem{corollary}{Corollary}
\begin{document}
\title{Sketched Subspace Clustering}
%

\author{Panagiotis~A.~Traganitis,~\IEEEmembership{Student
    Member,~IEEE,}
  and~Georgios~B.~Giannakis,~\IEEEmembership{Fellow,~IEEE}
  \IEEEcompsocitemizethanks{\IEEEcompsocthanksitem Panagiotis A. Traganitis and Georgios B. Giannakis are
    with the Dept.\ of Electrical and Computer Engineering and the
    Digital Technology Center, University of Minnesota,
    Minneapolis, MN 55455, USA.\protect\\     \protect\\
    This work was supported by the NSF grants 1500713 and 1514056.\protect\\
    Preliminary parts of this work appeared at the \emph{ Proc. of the $50$th Asilomar Conf. in Signals, Systems, and Computers}, Pacific Grove, CA, Nov. 2016.   
    E-mails: traga003@umn.edu, georgios@umn.edu}
}
\markboth{IEEE Transactions on Signal Processing 2018 (TO APPEAR)}{Traganitis \MakeLowercase{\textit{et al.}}: Sketched Subspace Clustering }

\IEEEtitleabstractindextext{%
  \begin{abstract}\justifying The immense amount of daily generated and communicated data presents unique challenges in their processing. Clustering, the grouping of data without the presence of ground-truth labels, is an important tool for drawing inferences from data.
  	Subspace clustering (SC) is a relatively recent method that is able to successfully classify nonlinearly 
  	separable data in a multitude of settings. In spite of their high clustering accuracy, SC methods incur prohibitively high computational complexity when processing large volumes of high-dimensional data.
    Inspired by random sketching approaches for dimensionality reduction, the present paper
    introduces a randomized scheme for SC, termed Sketch-SC, tailored for large volumes of high-dimensional data. Sketch-SC accelerates the computationally heavy parts of state-of-the-art SC approaches by compressing the data matrix across both dimensions using random projections, thus enabling fast and accurate large-scale SC. Performance analysis as well
    as extensive numerical tests on real data
    corroborate the potential of Sketch-SC and its competitive
    performance relative to state-of-the-art scalable SC approaches.
  \end{abstract}

  \begin{IEEEkeywords}
    Subspace clustering, big data, random projections,
    sketching.
  \end{IEEEkeywords}}

\maketitle

\IEEEdisplaynontitleabstractindextext


\IEEEpeerreviewmaketitle

\ifCLASSOPTIONcompsoc
\IEEEraisesectionheading{\section{Introduction}}
\else
\section{Introduction}
\label{sec:introduction}
\fi
\IEEEPARstart{T}{he} permeation of the Internet and social networks into our daily life, as well as the ever increasing number of connected devices and highly accurate instruments, has trademarked society and computing research with a ``data deluge''. Naturally, it is desirable to extract information and inferences from the available data. However, the sheer amount of data and their potentially large dimensionality introduces numerous challenges in their processing and analysis, as traditional statistical inference and machine learning processes do not necessarily scale. As the cost of cloud computing is declining, traditional approaches have to be redesigned to take advantage of the flexibility provided by distributed computing across multiple nodes as well as decreasing the computational burden per node, since in many cases each computing node might be an inexpensive machine.

Clustering (a.k.a. unsupervised classification) is a method of grouping data, without having labels available. Also referred to as graph partitioning or community identification, it finds applications in data
mining, signal processing, and machine learning. Arguably, the most popular clustering algorithm is $K$-means due to its
simplicity~\cite{hastie01statisticallearning}. However, $K$-means, as well as
its kernel-based variants, provide meaningful clustering results only
when data, after mapped to an appropriate feature space, form
``tight'' groups that can be separated by
hyperplanes~\cite{hastie01statisticallearning}.

Subspace clustering (SC) on the other hand, is a popular method for clustering
nonlinearly separable data which are generated by a union of (affine)
subspaces in a high-dimensional Euclidean
space~\cite{vidal2010tutorial}. SC has well-documented impact in  applications,
as diverse as image and video segmentation, and identification of
switching linear systems in controls~\cite{vidal2010tutorial}. Recent
advances advocate SC with high clustering performance at
the price of high computational complexity~\cite{vidal2010tutorial}.

The goal of this paper is to introduce a randomized scheme for reducing the computational burden
of SC algorithms when the number of data, and possibly their dimensionality, is prohibitively large, while maintaining high levels of clustering accuracy. Building on random projection (RP) methods, that have been used for dimensionality reduction~\cite{woodruff,boutsidis}, the present paper employs RP matrices to \emph{sketch} and \emph{compress} the available data to a computationally affordable level, while also reducing drastically of optimization variables. In doing so, the proposed method markedly broadens the applicability of high-performing SC algorithms to the big data regime.

Moreover, the present contribution analyzes the performance of Sketch-SC, by leveraging the well-established theory of random matrices and Johnson-Lindenstrauss transforms~\cite{woodruff,JL}. To assess the proposed Sketch-SC scheme, extensive numerical tests on real data are presented, comparing the proposed approach to state-of-the-art SC and large-scale SC methods~\cite{VidalOMP,ORGEN}. Compared to our conference precursor in~\cite{traganitis2016sketchedSC}, comprehensive numerical tests are included here, along with a rigorous performance analysis.

The rest of the paper is organized as
follows. Section~\ref{sec:prelim} provides SC preliminaries along
with notation and prior art.
Section~\ref{sec:proposed_algorithms} introduces the proposed 
Sketch-SC scheme for large-scale datasets, while Section~\ref{sec:performance}
provides pertinent performance bounds. Section~\ref{sec:numerical_tests} presents numerical tests
conducted to evaluate the performance of Sketch-SC in comparison with
state-of-the-art SC and large-scale SC algorithms. Finally, concluding remarks and future
research directions are given in Section~\ref{sec:conclusion}. Proofs of theorems and propositions as well as supporting lemmata are included in Appendix~\ref{app:proofs}.

\noindent\textbf{Notation:} Unless otherwise noted, lowercase bold letters $\bm{x}$ denote
vectors, uppercase bold letters $\mathbf{X}$ represent matrices, and
calligraphic uppercase letters $\mathcal{X}$ stand for sets. The
$(i,j)$th entry of matrix $\mathbf{X}$ is denoted by
$[\mathbf{X}]_{ij}$; $\text{rank}(\mathbf{X})$ and { $\text{range}(\bm{X})$ denote the rank and column span of a matrix $\mathbf{X}$, respectively;} and $\mathbf{X} = \mathbf{U}_\rho\mathbf{\Sigma}_\rho\mathbf{V}_\rho^{\top}$ denotes the singular value decomposition (SVD) of a rank $\rho$, $D\times N$ matrix $\mathbf{X}$, where $\mathbf{U}_\rho$ is $D\times\rho$, $\mathbf{\Sigma}_\rho$ is $\rho\times\rho$, and $\mathbf{V}_\rho$ is $N\times\rho$. For a positive integer $r<\rho$, the SVD of $\mathbf{X}$ can be rewritten as {
\begin{equation}
\begin{aligned}
\mathbf{X} = \mathbf{U}_{\rho}\mathbf{\Sigma}_{\rho}\mathbf{V}_{\rho}^{\top} & = [\mathbf{U}_r \bar{\mathbf{U}}_{r}]
\begin{bmatrix}
\mathbf{\Sigma}_r & \\
& \bar{\mathbf{\Sigma}}_{r}
\end{bmatrix}
\begin{bmatrix}
\mathbf{V}_r^{\top}\\
\bar{\mathbf{V}}_{r}^{\top}
\end{bmatrix} \\
& = \mathbf{X}_r + \bar{\mathbf{X}}_{r}
\end{aligned}
\end{equation}
where $\mathbf{\Sigma}_r$ is an $r\times r$ diagonal matrix with the largest $r$ singular values of $\mathbf{X}$ in descending order, and $\mathbf{X}_r = \mathbf{U}_r\mathbf{\Sigma}_r\mathbf{V}_r^{\top}$ is the best rank-$r$ approximation of $\mathbf{X}$ in the sense that $\mathbf{X}_r$ minimizes $\|\mathbf{X} - \mathbf{X}_r\|_F$. Accordingly, $\bar{\mathbf{\Sigma}}_{r}$ is a $({\rho -r})\times ({\rho -r})$ diagonal matrix containing the remaining singular values of $\mathbf{X}$ and $\bar{\mathbf{X}}_{r} = \bar{\mathbf{U}}_{r}\bar{\mathbf{\Sigma}}_{r}\bar{\mathbf{V}}_{r}^{\top}$.} The
$D$-dimensional real Euclidean space is denoted by $\mathbb{R}^{D}$,  the set of positive real numbers by $\mathbb{R}_{+}$, the set of positive integers by $\mathbb{Z}_{+}$, the
expectation operator by $\Expect[\cdot]$, and the $\ell_2$-norm by $\|\cdot\|$.

\section{Preliminaries}\label{sec:prelim}

\subsection{SC problem statement}\label{ssec:subspaceclustering}

Consider $N$ vectors $\{\bm{x}_{i}\}_{i=1}^{N}$ of size $D\times 1$ drawn from a
union of $K$ affine subspaces, each denoted by
$\mathcal{S}_k$, adhering to the model
\begin{equation}
  \label{eq:pointsubspace}
  \bm{x}_i = \mathbf{C}^{(k)}\bm{y}_{i}^{(k)} + \bm{\mu}^{(k)} + \bm{v}_i\,,\quad
  \forall\bm{x}_i\in\mathcal{S}_k 
\end{equation}
where $d_k$ (possibly with $d_k\ll D$) is the dimensionality of
$\mathcal{S}_k$; {$\mathbf{C}^{(k)}$} is a $D\times d_k$ matrix whose
columns form a basis of $\mathcal{S}_k$; the $d_k$-dimensional vector $\bm{y}_{i}^{(k)}$
is the low-dimensional representation of $\bm{x}_i$ in
$\mathcal{S}_k$ with respect to (w.r.t.) $\mathbf{C}^{(k)}$; the $D\times 1$ vector $\bm{\mu}^{(k)}$ is the ``centroid'' or intercept of
$\mathcal{S}_k$; and, $\bm{v}_i$ denotes the $D\times 1$ noise
vector capturing unmodeled effects. If $\mathcal{S}_k$ is linear, then
$\bm{\mu}^{(k)} = \bm{0}$. 
{
Let also $\bm{\pi}_i$ denote the cluster assignment vector of 
$\bm{x}_i$, and $[\bm{\pi}_{i}]_k$ the $k$th entry of
$\bm{\pi}_i$ that is constrained to satisfy $[\bm{\pi}_{i}]_k\ge 0$ and
$\sum_{k=1}^{K}[\bm{\pi}_{i}]_k = 1$. If $\bm{\pi}_i\in\{0,1\}^{K}$,
then $\bm{x}_i$ lies in only one subspace (hard
clustering), while if $\bm{\pi}_i\in[0,1]^{K}$, then $\bm{x}_i$
can belong to multiple clusters (soft clustering). In the latter case,
$[\bm{\pi}_{i}]_k$ can be thought of as the probability that
$\bm{x}_i$ belongs to $\mathcal{S}_k$. Clearly in the case of hard clustering, \eqref{eq:pointsubspace} can be rewritten as}
\begin{equation}
  \label{eq:allpoints}
  \bm{x}_i = \sum_{k=1}^{K}[\bm{\pi}_{i}]_k\left(\mathbf{C}^{(k)}\bm{y}_{i}^{(k)}
    + \bm{\mu}^{(k)}\right) + \bm{v}_i.
\end{equation}

Given the $D\times N$ data matrix $\mathbf{X} :=
[\bm{x}_1,\bm{x}_2,\ldots,\bm{x}_N]$ and the
number of subspaces $K$, the goal is to find the data-to-subspace
assignment vectors $\{\bm{\pi}_i\}_{i=1}^{N}$, the subspace bases
$\left\{\mathbf{C}^{(k)}\right\}_{k=1}^{K}$, their dimensions
$\{d_k\}_{k=1}^{K}$, the low-dimensional representations
$\{\bm{y}_{i}^{(k)}\}_{i=1}^{N}$, as well as the centroids 
$\{\bm{\mu}^{(k)}\}_{k=1}^{K}$~\cite{vidal2010tutorial}. SC can
be formulated as follows 
\begin{equation} 
  \label{eq:subspaceclustering}
  \begin{aligned}
    &\underset{\mathbf{\Pi},\{\mathbf{C}^{(k)}\},\{\bm{y}_{i}^{(k)}\},\mathbf{M}}{\min} 
    &&\sum_{k=1}^{K}\sum_{i=1}^{N} [\bm{\pi}_{i}]_k\|\bm{x}_i -
    \mathbf{C}^{(k)}\bm{y}_{i}^{(k)} - \bm{\mu}^{(k)} \|_2^2 \\ 
    &\text{subject to (s.to)} && \mathbf{\Pi}^{\top}\bm{1} = \bm{1};
    \quad [\bm{\pi}_{i}]_k\geq 0,\ \forall (i,k)
  \end{aligned}
\end{equation}
where $\mathbf{\Pi} := [\bm{\pi}_1,\ldots,\bm{\pi}_N]$, $\mathbf{M} :=
[\bm{\mu}^{(1)},\bm{\mu}^{(2)},\ldots,\bm{\mu}^{(k)}]$,
and $\bm{1}$ denotes the all-ones vector of matching dimensions.

The problem in \eqref{eq:subspaceclustering} is non-convex as all of
$\mathbf{\Pi}, \{\mathbf{C}^{(k)}\}_{k=1}^{K}, \{d_k\}_{k=1}^{K}, \{\bm{y}_{i}^{(k)}\}$,
and $\mathbf{M}$ are unknown. 
It is known that when $K=1$ and
$\mathbf{C}$ is orthonormal, \eqref{eq:subspaceclustering} boils down
to PCA~\cite{jolliffe2002principal}
\begin{equation} 
  \label{eq:PCAclustering}
  \begin{aligned}
    & \underset{\mathbf{C},\{\bm{y}_{i}\},\bm{\mu}}{\min}
    && \sum_{i=1}^{N} \|\bm{x}_i -
    \mathbf{C}\bm{y}_{i} - \bm{\mu} \|_2^2 \\ 
    &\text{s.to} && \mathbf{C}^{\top} \mathbf{C} = \mathbf{I}
  \end{aligned}
\end{equation}
where $\mathbf{I}$ denotes the identity matrix of appropriate dimension. Notice that for $K=1$, it holds that
$[\bm{\pi}_{i}]_k=1$. Moreover, if
$\mathbf{C}^{(k)} := \bm{0}$, $\forall k$, looking for
$\{\bm{\mu}^{(k)}\}_{k=1}^{K}$, $\{\bm{\pi}_i\}_{i=1}^{N}$ with $K>1$, amounts to $K$-means
clustering
\begin{equation} 
  \label{eq:normalclustering}
  \begin{aligned}
    &\underset{\mathbf{\Pi},\mathbf{M}}{\min}
    &&\sum_{k=1}^{K}\sum_{i=1}^{N} [\bm{\pi}_{i}]_k \|\bm{x}_i -
    \bm{\mu}^{(k)} \|_2^2 \\ 
    &\text{s.to} && \mathbf{\Pi}^{\top}\bm{1} = \bm{1} \,.
  \end{aligned}
\end{equation}
  

\subsection{Prior work}\label{ssec:prior}
Various algorithms have been
developed by the machine learning~\cite{vidal2010tutorial} and
data-mining community~\cite{parsons2004subspace} to solve
\eqref{eq:subspaceclustering}.
Generalizing the ubiquitous $K$-means~\cite{lloydkmeans} the $K$-subspaces algorithm~\cite{ksubspaces} builds on alternating optimization to solve \eqref{eq:subspaceclustering}. For
 $\bm{\Pi}$ and $\{d_k\}_{k=1}^{K}$ fixed, bases of the subspaces can be
 recovered using the SVD on the data associated with each subspace. Indeed, given
 $\mathbf{X}^{(k)} := [\bm{x}_{i_1}, \ldots, \bm{x}_{i_{N_k}}]$, belonging to $\mathcal{S}_k$
 ($\sum_{k=1}^K N_k = N$), a basis $\mathbf{C}^{(k)}$ can be obtained from the first $d_k$ (from
 the left) singular vectors of
 $\mathbf{X}^{(k)} - [\bm{\mu}^{(k)}, \ldots, \bm{\mu}^{(k)}]$, where $\bm{\mu}^{(k)} =
 (1/N_k)\sum_{i\in\mathcal{S}_k}\bm{x}_i$. On the other hand, when
 $\{\mathbf{C}^{(k)}, \bm{\mu}^{(k)}\}_{k=1}^K$ are given, the assignment
 matrix $\bm{\Pi}$ can be recovered in the case of hard clustering by
 finding the closest subspace to each datapoint; that is, $\forall i\in
 \{1,2,\ldots,N\}$, $\forall k\in\{1, \ldots, K\}$, {we obtain
 \begin{equation}
 \label{eq:findclosestsubspace}
 [\bm{\pi}_{i}]_k = \begin{cases}
 1, & \text{if}\ k = \Argmin\limits_{k'\in \{1, \ldots, K\}}
 \left\| \tilde{\bm{x}}_{i}^{(k')} -
 \mathbf{C}^{(k')}{\mathbf{C}^{(k')}}^{\top} \tilde{\bm{x}}_{i}^{(k')}\right\|_2^2\\   
 0, & \text{otherwise}
 \end{cases}  
 \end{equation}
 where $\tilde{\bm{x}}_{i}^{(k)} := \bm{x}_i - \bm{\mu}^{(k)}$ and $\| \tilde{\bm{x}}_{i}^{(k)} - \mathbf{C}^{(k)}{\mathbf{C}^{(k)}}^{\top} \tilde{\bm{x}}_{i}^{(k)}
 \|_2$ is the distance of $\bm{x}_i$ from $\mathcal{S}_k$.} Thus, the $K$-subspaces algorithm operates as follows: (i) Fix $\mathbf{\Pi}$ and
 solve for the remaining unknowns; and (ii) fix
 $\{\mathbf{C}^{(k)}, \bm{\mu}^{(k)}\}_{k=1}^K$, and solve for
 $\mathbf{\Pi}$. Since SVD is involved, SC entails high
 computational complexity, whenever $d_k$ and/or $N_k$ are
 massive. 
 
A probabilistic (soft) counterpart of
$K$-subspaces is the mixture of probabilistic
PCA~\cite{tipping1999mixtures}, which assumes that data are drawn from
a mixture of degenerate (zero-variance) Gaussians. Building on the same
assumption, the agglomerative lossy compression (ALC)
minimizes the required number of bits to ``encode'' each cluster, up to a certain distortion level~\cite{ALC2007}. Algebraic schemes, such as generalized (G)PCA approach SC from a linear
algebra point of view, but generally their performance is guaranteed
only for independent and noise-less subspaces~\cite{gpca}. Additional interesting
methods recover
subspaces by finding local linear subspace approximations~\cite{zhang2012hybrid}; by thresholding the correlations between data~\cite{heckel2015robust}; or by identifying the subspaces one by one~\cite{rahmani2015innovation}. Recently, multilinear methods for SC of tensor data have also been advocated~\cite{traganitis2016parafac}; see also \cite{zhang2009median,traganitis2016online,onlineLRR} for online
clustering approaches to handle streaming data.

Arguably the most successful class of solvers for
\eqref{eq:subspaceclustering} relies on \emph{spectral
clustering}~\cite{spectralclustering} to find the data-to-subspace
assignments. Algorithms in this class generate first an $N\times N$ symmetric weighted adjacency
matrix $\mathbf{W}$ to capture the non-directional
similarity between data vectors, and then perform spectral clustering on
$\mathbf{W}$. Matrix $\mathbf{W}$ implies a graph $\mathcal{G}$ whose
vertices correspond to data and the weight of the edge connecting vertex $i$ and vertex $j$ is given by $[\mathbf{W}]_{ij}$. Spectral
clustering algorithms form the graph Laplacian matrix
\begin{equation}
  \label{eq:laplacian}
  \mathbf{L} := \text{diag}(\mathbf{W}\bm{1}) - \mathbf{W}
\end{equation}
where $\text{diag}(\mathbf{W}\bm{1})$ is a diagonal matrix holding $\mathbf{W}\bm{1}$ on its diagonal. The algebraic multiplicity of the $0$ eigenvalue
of $\mathbf{L}$ yields the number of connected components in
$\mathcal{G}$, while the corresponding eigenvectors are indicator vectors of these connected
components~\cite{spectralclustering}. Afterwards, having formed $\mathbf{L}$, the $K$ eigenvectors $\{{\bf{v}}_k\}_{k=1}^{K}$ corresponding to the trailing eigenvectors of $\mathbf{L}$ are found, and $K$-means is performed on the rows of the $N\times K$ matrix $\mathbf{V}:=[{\bf{v}}_1,\ldots,{\bf{v}}_K]$ to obtain clustering assignments~\cite{spectralclustering}. 


Sparse subspace clustering (SSC)~\cite{elhamifar2013SSC} exploits the
fact that under the union of subspaces model \eqref{eq:subspaceclustering}, only a small percentage
of data suffices to provide a low-dimensional affine representation of $\bm{x}_i$; that is, $\bm{x}_i = \sum_{j=1,j\neq
  i}^{N}w_{ij}\bm{x}_j$, $\forall i\in\{1,2,\ldots,N\}$. Specifically,
SSC solves the following sparsity-promoting optimization problem
\begin{equation}
  \label{eq:SSC}
  \begin{aligned}
    & \underset{\mathbf{Z}}{\min} && \|\mathbf{Z}\|_1 +
    \frac{\lambda}{2}\|\mathbf{X} - \mathbf{X}\mathbf{Z}\|_F^2 \\ 
    & \text{s.to} && \mathbf{Z}^{\top}\bm{1} = \bm{1};\quad
    \text{diag}(\mathbf{Z}) = \bm{0}
  \end{aligned}
\end{equation}
where $\mathbf{Z} := [\bm{z}_1,\bm{z}_2,\ldots,\bm{z}_N]$; column
$\bm{z}_i$ is sparse and contains the coefficients for the
representation of $\bm{x}_i$; $\lambda>0$ is the regularization
coefficient; and $\|\mathbf{Z}\|_1 :=
\sum_{i,j=1}^{N}|[\mathbf{Z}]_{i,j}|$. The constraint $ \text{diag}(\mathbf{Z}) = \bm{0}$ ensures that the solution of the optimization problem is not a trivial one ($\mathbf{Z} = \mathbf{I}$), while $\mathbf{Z}^{\top}\bm{1} = \bm{1}$ is employed to guarantee that the $\mathbf{Z}$ found is invariant to shifting the data by a constant vector~\cite{vidal2010tutorial}. Matrix $\mathbf{Z}$ is used to
create the {weighted adjacency} matrix $[\mathbf{W}]_{ij} := |[\mathbf{Z}]_{ij}| +
|[\mathbf{Z}]_{ji}|$. Finally, spectral clustering, is performed on $\mathbf{W}$ and cluster
assignments are identified. Using those assignments, $\mathbf{M}$ is
found by taking sample means per cluster, and
$\{\mathbf{C}^{(k)}\}_{k=1}^{K}$, $\{\bm{y}_{i}^{(k)}\}_{i=1}^{N}$ are obtained by
applying SVD on $\mathbf{X}^{(k)} - [\bm{\mu}^{(k)}, \ldots, \bm{\mu}^{(k)}]$. 

The low-rank representation (LRR) approach to SC is similar to
SSC, but replaces the $\ell_1$-norm in \eqref{eq:SSC} with the nuclear
one: $\|\mathbf{Z}\|_* := \sum_{i=1}^{\rho}\sigma_i(\mathbf{Z})$,
where $\rho$ stands for the rank and $\sigma_i(\mathbf{Z})$ for the
$i$th singular value of $\mathbf{Z}$. Specifically, LRR solves the following optimization problem~\cite{LRR}
\begin{equation}
\label{eq:LRR}
\begin{aligned}
& \underset{\mathbf{Z}}{\min} && \|\mathbf{Z}\|_* +
\frac{\lambda}{2}\|\mathbf{X} - \mathbf{X}\mathbf{Z}\|_{2,1} 
\end{aligned}
\end{equation}
{where $\|\mathbf{X}\|_{2,1} := \sum_{j=1}^{N}\|\bm{x}_j\|_2$, and $\bm{x}_j$ denotes the $j$-th column of $\mathbf{X}$.}

Another popular algorithm is termed least-squares regression (LSR)~\cite{LSR}. It solves an optimization problem similar to \eqref{eq:LRR}, but replaces the $\ell_1$/nuclear norm with the Frobenius one. Specifically, LSR solves
\begin{equation}
\label{eq:LSR}
\begin{aligned}
& \underset{\mathbf{Z}}{\min} && \frac{1}{2}\|\mathbf{Z}\|_F^2 +
\frac{\lambda}{2}\|\mathbf{X} - \mathbf{X}\mathbf{Z}\|_F^2
\end{aligned}
\end{equation}
which admits the following closed-form solution $\mathbf{Z}^* = \lambda\left(\lambda\mathbf{X}^{\top}\mathbf{X} + \mathbf{I}\right)^{-1}\mathbf{X}^{\top}\mathbf{X}$.
Combining SSC with LSR, the elastic net SC (EnSC) approaches employ a convex combination of $\ell_1$- and Frobenius-norm regularizers~\cite{panagakis2014elastic,fang2015ensc}.
 The high clustering accuracy
achieved by these self-dictionary methods comes at the price of high
complexity. Solving \eqref{eq:SSC},~\eqref{eq:LRR} or \eqref{eq:LSR} scales {cubically} with the
number of data $N$, on top of performing spectral clustering across
$K$ clusters, which renders these methods computationally prohibitive for
large-scale SC. When data are high-dimensional ($D\gg$), methods based
on (statistical) leverage scores, random projections~\cite{boutsidis,heckel2017dimensionality,pimentel2016necessary,wang2016theoretical}, preconditioning and sampling~\cite{pourkamali2017preconditioned},
or our recent sketching and validation (SkeVa)~\cite{skeva}
approach can be employed to reduce complexity to an affordable level. Random projection based methods left multiply the data matrix $\mathbf{X}$, with a $d\times D$ data-agnostic random matrix, thereby reducing the dimensionality of the data vectors from $D$ to $d$. This type of dimensionality reduction has been shown to reduce computational costs while not incurring significant clustering performance degradation when $d=\mathcal{O}(\sum_{k=1}^{K} d_k)$~\cite{heckel2017dimensionality}.  When the number of data vectors is
large ($N\gg$), the scalable SSC/LRR/LSR approach~\cite{SSSC2} involves drawing randomly
$n<N$ data, performing SSC/LRR/LSR on them, and expressing the rest of the
data according to the clusters identified by that random draw of
samples. While this approach clearly reduces complexity, performance can
potentially suffer as the random sample may not be representative of
the entire dataset, especially when $n\ll N$ and clusters are
unequally populated. Other approaches focus on greedy methods, such as orthogonal matching pursuit (OMP), for solving~\eqref{eq:SSC}~\cite{dyerOMP,VidalOMP}. More recently, an active set method, termed Oracle guided Elastic Net (ORGEN)~\cite{ORGEN}, can be used to reduce the complexity of SSC and EnSC tasks, by solving only for the entries of $\mathbf{Z}$ that correspond to data vectors that are highly correlated.

The present paper
introduces a novel approach based on random projections that creates a compact yet expressive dictionary that can be employed by SSC/LRR/LSR to reduce the number of optimization variables to $\mathcal{O}(nN)$ for $n<N$,
thus yielding low computational complexity. In addition, the proposed approach can be combined with random projection methods to reduce data dimensionality, which further scales down computational costs.

\section{Sketched Subspace Clustering}
\label{sec:proposed_algorithms}
Consider the following unifying optimization problem
{
\begin{equation}
\label{eq:general_sc_eq}
\min_{\mathbf{A}\in\mathcal{C}} h(\mathbf{A}) + \lambda L(\mathbf{X} - \mathbf{B}\mathbf{A})
\end{equation}
where $\mathbf{B}$ is an appropriate $D\times n$ {known} basis matrix (dictionary), $h(\mathbf{A})$ is a regularization function of the $n\times N$ matrix $\mathbf{A}$, $L(\cdot)$ is an appropriate loss function, and $\mathcal{C}$ is a constraint set for $\mathbf{A}$. Eq.~\eqref{eq:general_sc_eq} will henceforth be referred to as \emph{Sketch-SC objective}. As mentioned in Sec.\ref{ssec:prior}, the ability of $\mathbf{A}$, obtained from \eqref{eq:general_sc_eq} to distinguish data for clustering depends on the choice of $h(\cdot)$, and on $\mathbf{B}$. For SSC, LSR and LRR, $\mathbf{B} = \mathbf{X}$, $n=N$ and $h(\cdot)$ is $\|\cdot\|_1,\frac{1}{2}\|\cdot\|_F^2$,  $\|\cdot\|_*$, and $L(\cdot)$ is $\frac{1}{2}\|\cdot\|_F^2, \frac{1}{2}\|\cdot\|_F^2$ and $\frac{1}{2}\|\cdot\|_F^2$ or $\frac{1}{2}\|\cdot\|_{2,1}$ respectively.} The constraint set for SSC is $\mathcal{C} = \{\mathbf{A}\in\mathbb{R}^{N\times N}:\mathbf{A}^{\top}\bm{1} = \bm{1}; \text{diag}(\mathbf{A}) = \bm{0}\}$, while for LSR and LRR, we have $\mathcal{C} = \mathbb{R}^{N\times N}$.
\subsection{High volume of data}
\label{ssec:high_vol}
\begin{algorithm}[tb]
	\begin{algorithmic}[1]
		\algrenewcommand\algorithmicindent{1em}
		\Require{$D\times N$ data matrix $\mathbf{X}$; Number of columns of $\mathbf{R}$ $n$; regularization parameter $\lambda$;}
		\Ensure{Model matrix $\mathbf{A}$;}
		\State Generate $N\times n$ JLT matrix $\mathbf{R}$.
		\State Form $D\times n$ dictionary $\mathbf{B} = \mathbf{X}\mathbf{R}$.
		\State Solve \eqref{eq:general_sc_eq} for the cost in \eqref{eq:sketched_LSR},~\eqref{eq:sketched_SSC},~\eqref{eq:sketched_LRR} to obtain $\mathbf{A}$.
	\end{algorithmic}
	\caption{Linear sketched data model for Sketch-SC}\label{alg}
\end{algorithm}

As the aim of the present manuscript is to introduce scalable methods for subspace clustering, the dictionaries considered from now on will have $n\ll N$, bringing the number of variables to $\mathcal{O}(nN)$.
In particular, the dictionaries employed will have the form, $\mathbf{B}:=\mathbf{X}\mathbf{R}$, where $\mathbf{R}$ is a $N\times n$ sketching matrix. The role of $\mathbf{R}$ is to ``compress'' $\mathbf{X}$, while retaining as much information from it as possible. To this end, the celebrated Johnson-Lindenstrauss lemma~\cite{JL} will be invoked. 
\begin{lemma}~\cite{JL}
	\emph{
	\label{lemma:JL}
	Given $\varepsilon>0$, for any subset $\mathcal{V}\subset\mathbb{R}^N$ containing $d$ vectors of size $N\times 1$, there exists a map $q:\mathbb{R}^{N}\rightarrow\mathbb{R}^{n}$ such that for $n\geq n_0 = \mathcal{O}(\varepsilon^{-2}\log{d})$, it holds for all $\bm{x},\bm{y}\in\mathcal{V}$
	\begin{equation}
	(1-\varepsilon)\|\bm{x}-\bm{y}\|_2^2 \leq \|q(\bm{x}) - q(\bm{y})\|_2^2 \leq (1+\varepsilon)\|\bm{x} - \bm{y}\|_2^2.
	\end{equation}
	}
\end{lemma}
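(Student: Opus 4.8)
The plan is to prove the lemma via the probabilistic method, constructing $q$ as a random linear map and showing that the two-sided norm bound holds simultaneously for all pairs with strictly positive probability. First I would define $q(\bm{x}) := (1/\sqrt{n})\,\mathbf{R}^{\top}\bm{x}$, where $\mathbf{R}$ is an $N\times n$ matrix whose entries are drawn i.i.d.\ from $\mathcal{N}(0,1)$. Since $q$ is linear, $q(\bm{x})-q(\bm{y}) = q(\bm{x}-\bm{y})$, so it suffices to control $\|q(\bm{u})\|_2^2$ for each difference vector $\bm{u} := \bm{x}-\bm{y}$ induced by the $d$ points of $\mathcal{V}$; there are fewer than $d^2/2$ such vectors.

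The second step is to show that $q$ preserves norms in expectation and to identify the relevant distribution. For fixed $\bm{u}$, each coordinate $[\mathbf{R}^{\top}\bm{u}]_j = \sum_{i} [\mathbf{R}]_{ij}[\bm{u}]_i$ is Gaussian with mean $0$ and variance $\|\bm{u}\|_2^2$, and the $n$ coordinates are mutually independent. Hence $\Expect[\|q(\bm{u})\|_2^2] = (1/n)\sum_{j=1}^{n}\Expect[([\mathbf{R}^{\top}\bm{u}]_j)^2] = \|\bm{u}\|_2^2$, and the normalized quantity $n\|q(\bm{u})\|_2^2/\|\bm{u}\|_2^2$ is a sum of $n$ squared independent standard Gaussians, i.e.\ a $\chi^2_n$ random variable with mean $n$.

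The technical heart is the concentration step: bounding the probability that this $\chi^2_n$ variable deviates from $n$ by a multiplicative factor $\varepsilon$. I would apply the Chernoff bound to the moment generating function of $\chi^2_n$ and optimize the free parameter to obtain a tail estimate of the form $\Pr[\,|\,\|q(\bm{u})\|_2^2 - \|\bm{u}\|_2^2\,| > \varepsilon\|\bm{u}\|_2^2\,] \leq 2\exp(-c\,n\varepsilon^2)$ for an absolute constant $c>0$ and $\varepsilon\in(0,1)$. This is the most delicate part, since the upper- and lower-tail estimates of $\chi^2_n$ require separate handling and some care to keep the constant uniform and to secure the tight $\varepsilon^2$ dependence that controls the final dimension $n_0$.

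Finally I would close the argument with a union bound over the fewer than $d^2/2$ difference vectors: the probability that the two-sided bound fails for at least one pair is at most $d^2\exp(-c\,n\varepsilon^2)$. Choosing $n \geq n_0 = \mathcal{O}(\varepsilon^{-2}\log d)$ with a sufficiently large hidden constant drives this failure probability strictly below $1$, so there exists a realization of $\mathbf{R}$---hence a deterministic map $q$---for which every pairwise distance is preserved within the stated factors, completing the proof. I expect the constant-chasing in the $\chi^2_n$ tail bound to be the main obstacle, with the expectation computation and the union bound being routine by comparison.
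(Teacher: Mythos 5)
Your proposal is correct, but there is nothing in the paper to compare it against: the paper does not prove this lemma at all. It is imported verbatim from the literature (the citation to Johnson--Lindenstrauss), and the paper's appendix only proves Propositions 1--3, Theorem 1, and the two corollaries, all of which \emph{use} JL-type properties rather than establish them.

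That said, your argument is the standard modern proof and it goes through: linearity reduces the problem to norm preservation of the at most $d(d-1)/2$ difference vectors, each normalized squared image $n\|q(\bm{u})\|_2^2/\|\bm{u}\|_2^2$ is exactly $\chi^2_n$ for Gaussian $\mathbf{R}$, the Chernoff/moment-generating-function bound gives two-sided tails of the form $2\exp(-cn\varepsilon^2)$ (uniform in $\varepsilon\in(0,1)$, e.g.\ via $\varepsilon-\ln(1+\varepsilon)\geq\varepsilon^2/6$), and the union bound plus the probabilistic method yields existence once $n=\mathcal{O}(\varepsilon^{-2}\log d)$ with a large enough constant. Two remarks on how this relates to the paper's framing. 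First, the original reference proves the lemma differently---by projecting onto a uniformly random $n$-dimensional subspace and using concentration of measure on the sphere---so your i.i.d.\ Gaussian construction is genuinely a different (and simpler) route, the one popularized by later simplifications of the lemma. Second, your proof actually delivers more than the lemma asks: it shows the random map succeeds with probability at least $1-d^2e^{-cn\varepsilon^2}$, which is precisely the quantitative JLT$(\varepsilon,\delta,d)$ property of Definition 1 that the paper's subsequent results (Propositions 1--3, Theorem 1) rely on, whereas the lemma as stated only asserts existence of some map $q$. The only points deserving care, which you flag yourself, are the separate handling of the upper and lower $\chi^2_n$ tails and the fact that the multiplicative guarantee is only meaningful for $\varepsilon\in(0,1)$; neither is a gap.
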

In particular, random matrices known as Johnson-Lindenstrauss transforms will be employed since they exhibit useful properties.
\begin{definition}~\cite[Def. 2.3]{woodruff},\cite{boutsidis}
	\label{def:JLT}
	\emph{
	An $N\times n$ random matrix $\mathbf{R}$ forms a Johnson-Lindenstrauss transform (JLT($\varepsilon,\delta,d$)) with parameters $\varepsilon,\delta,d$  if there exists a function $f$, such that for any $\varepsilon>0,\delta < 1, d\in\mathbb{Z}_{+}$ and $d$-element subset $\mathcal{V}\subset\mathbb{R}^{N}$, with $n=\Omega(\frac{\log d}{\varepsilon^2}f(\delta))$, it holds that
	\begin{equation*}
		{\rm Pr}\left\{(1-\varepsilon)\|\bm{x}\|_2^2 \leq \|\bm{x}^{\top}\mathbf{R}\|_2^2\leq (1+\varepsilon)\|\bm{x}\|_2^2\right\} \geq 1 - \delta
	\end{equation*}
	for any $1\times N$ vector $\bm{x}^{\top}\in \mathcal{V}$.}
\end{definition}

One example of a random JLT matrix is a matrix with independent and identically distributed (i.i.d.) entries drawn from a normal $\mathcal{N}(0,1)$ distribution scaled by a factor $1/\sqrt{n}$~\cite{woodruff}. Rescaled random sign matrices, that is matrices with i.i.d. $\pm1$ entries multiplied by $1/\sqrt{n}$ are also  JLTs~\cite{achlioptas,boutsidis}, and matrix products involving these matrices can be computed fast~\cite{Mailman}. Another class of JLTs that allows for efficient matrix multiplication includes the so-called Fast (F)JLTs. This class of FJLTs samples randomly and rescales rows of a fixed orthonormal matrix, such as the discrete Fourier transform (DFT) matrix, or, the Hadamard matrix~\cite{ailon2009fast,ailon2009fast2}; see also~\cite{woodruff,anaraki2014memory,clarkson2013low} where sparse JLT matrices have been advocated.

The following proposition proved in the appendix justifies the use of JLTs for constructing our dictionary $\mathbf{B}$ in \eqref{eq:general_sc_eq}.
\begin{proposition}
	\label{prop:full_range}
	\emph{
		Let $\mathbf{X}$ be a $D\times N$ matrix such that $\text{rank}(\mathbf{X}) = \rho$, and define the $D\times n$ matrix $\mathbf{B}:=\mathbf{X}\mathbf{R}$, where $\mathbf{R}$ is a JLT($\varepsilon,\delta,D$) of size $N\times n$. If $n=\mathcal{O}(\rho\frac{\log(\rho/\varepsilon)}{\varepsilon^2}f(\delta))$ then w.p. at least $1-\delta$, it holds that 
		\begin{equation*}
		\text{range}(\mathbf{X}) = \text{range}(\mathbf{B}).
		\end{equation*}}
\end{proposition}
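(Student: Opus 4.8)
The plan is to prove the two inclusions separately and conclude by a dimension count. The inclusion $\text{range}(\mathbf{B}) \subseteq \text{range}(\mathbf{X})$ holds deterministically, since every column of $\mathbf{B} = \mathbf{X}\mathbf{R}$ is a linear combination of the columns of $\mathbf{X}$. As both subspaces sit in $\mathbb{R}^{D}$ and $\dim\,\text{range}(\mathbf{X}) = \rho$, equality will follow once I show that $\text{rank}(\mathbf{B}) = \rho$ with probability at least $1-\delta$.

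To turn the rank of $\mathbf{B}$ into a statement about how $\mathbf{R}$ acts on the row space of $\mathbf{X}$, I would insert the thin SVD $\mathbf{X} = \mathbf{U}_\rho\mathbf{\Sigma}_\rho\mathbf{V}_\rho^{\top}$ to obtain $\mathbf{B} = \mathbf{U}_\rho\mathbf{\Sigma}_\rho(\mathbf{V}_\rho^{\top}\mathbf{R})$. Since $\mathbf{U}_\rho\mathbf{\Sigma}_\rho$ has full column rank $\rho$ (orthonormal columns times an invertible $\mathbf{\Sigma}_\rho$), left-multiplication by it preserves rank, so $\text{rank}(\mathbf{B}) = \text{rank}(\mathbf{V}_\rho^{\top}\mathbf{R})$. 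It thus suffices to prove that the $\rho\times n$ matrix $\mathbf{V}_\rho^{\top}\mathbf{R}$ has full row rank, equivalently that $\|\mathbf{R}^{\top}\bm{w}\|_2 > 0$ for every nonzero $\bm{w}$ in the $\rho$-dimensional subspace $\text{range}(\mathbf{V}_\rho)$.

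The heart of the argument is to upgrade the JLT guarantee of Definition~\ref{def:JLT}, which only controls $\|\mathbf{R}^{\top}\bm{x}\|_2$ on a prescribed finite set of vectors, into a uniform subspace-embedding bound valid for all unit vectors of $\text{range}(\mathbf{V}_\rho)$ at once. I would carry this out with a covering/net argument: fix a $\gamma$-net $\mathcal{N}$ of the unit sphere of $\text{range}(\mathbf{V}_\rho)$, whose cardinality obeys $|\mathcal{N}| \leq (1 + 2/\gamma)^{\rho}$. Applying the JLT bound to the $|\mathcal{N}|$ net points and taking a union bound over them, the resolution $\gamma = \Theta(\varepsilon/\rho)$ makes $\log|\mathcal{N}| = \mathcal{O}(\rho\log(\rho/\varepsilon))$, which is exactly what the stated $n = \mathcal{O}(\rho\,\frac{\log(\rho/\varepsilon)}{\varepsilon^2}f(\delta))$ accommodates; hence with probability at least $1-\delta$ every net point has its norm preserved to within a factor $(1\pm\varepsilon)$. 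A successive-approximation step then propagates this to an arbitrary unit vector $\bm{w}$: writing $\bm{w}$ as its nearest net point plus a residual of norm at most $\gamma$ and bounding $M := \sup_{\|\bm{w}\|_2 = 1}\|\mathbf{R}^{\top}\bm{w}\|_2$ self-referentially gives $\|\mathbf{R}^{\top}\bm{w}\|_2 \geq \sqrt{1-\varepsilon} - \gamma M > 0$ for all unit $\bm{w}$ once $\gamma$ is small enough. Therefore $\mathbf{R}^{\top}$ is injective on $\text{range}(\mathbf{V}_\rho)$, forcing $\text{rank}(\mathbf{V}_\rho^{\top}\mathbf{R}) = \rho$ and hence $\text{range}(\mathbf{B}) = \text{range}(\mathbf{X})$.

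I expect the lifting step to be the main obstacle. The JLT as defined certifies norm preservation only on a fixed finite set, so the net cardinality has to be matched to the number of points the transform can handle, and the geometric extension from the net to the whole sphere must be done carefully so that the lower bound remains strictly positive uniformly; only this lower bound is actually needed for the rank claim, which makes the final step somewhat forgiving. The remaining ingredients --- the trivial inclusion and the SVD-based rank reduction --- are routine linear algebra.
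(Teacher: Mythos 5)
Your proof is correct, but at the decisive step it takes a genuinely different route from the paper's. Both arguments perform the same reduction: insert the thin SVD to write $\mathbf{B} = \mathbf{U}_\rho\mathbf{\Sigma}_\rho(\mathbf{V}_\rho^{\top}\mathbf{R})$, note that $\text{range}(\mathbf{B})\subseteq\text{range}(\mathbf{U}_\rho)=\text{range}(\mathbf{X})$, and observe that everything hinges on the $\rho\times n$ matrix $\mathbf{V}_\rho^{\top}\mathbf{R}$ having full row rank. At that point the paper simply invokes its Lemma~\ref{lemma:sval} (Corollary 11 of Sarl\'os), which asserts that for $n=\mathcal{O}(\rho\frac{\log(\rho/\varepsilon)}{\varepsilon^2}f(\delta))$ all singular values of $\mathbf{V}_\rho^{\top}\mathbf{R}$ lie in $[\sqrt{1-\varepsilon},\sqrt{1+\varepsilon}]$ w.p. at least $1-\delta$, so full row rank is immediate; it then concludes via $\mathbf{B}(\mathbf{V}_\rho^{\top}\mathbf{R})^{\dagger}=\mathbf{U}_\rho\mathbf{\Sigma}_\rho$ rather than your dimension count, an immaterial packaging difference. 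You instead re-derive the rank statement directly from Definition~\ref{def:JLT} by an $\varepsilon$-net argument on the unit sphere of $\text{range}(\mathbf{V}_\rho)$, followed by the standard self-referential extension $M\leq\sqrt{1+\varepsilon}+\gamma M$ to get a uniform strictly positive lower bound on $\|\mathbf{R}^{\top}\bm{w}\|_2$; this is in essence a self-contained proof of the injectivity half of the cited lemma, and it has the pedagogical merit of exposing where the $\log(\rho/\varepsilon)$ factor in $n$ comes from (the net cardinality). What the paper's route buys is brevity and reusability: the two-sided singular-value bounds of Lemma~\ref{lemma:sval} are needed again in Proposition~\ref{prop:r_range} and Theorem~\ref{thm:LSR} (e.g., to bound $\|(\mathbf{V}_r^{\top}\mathbf{R})^{\dagger}\|_2$ by $1/\sqrt{1-\varepsilon}$), which your injectivity-only bound would not supply. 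One caution on your wording: you should apply Definition~\ref{def:JLT} once with $d=|\mathcal{N}|$, letting the union bound be internal to the JLT guarantee (that is exactly what the $\log d$ dependence of $n$ is for), rather than applying a per-vector guarantee with failure probability $\delta$ to each net point and union-bounding externally --- the latter would inflate the failure probability to $|\mathcal{N}|\delta$ and break the claim; your remark about matching the net cardinality to ``the number of points the transform can handle'' suggests you intend the former, which is the correct reading.
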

This proposition asserts that with a proper choice of the sketching matrix $\mathbf{R}$, the dictionary $\mathbf{B}$ is as expressive as $\mathbf{X}$ for solving \eqref{eq:general_sc_eq}, as it preserves the column space of $\mathbf{X}$ with high probability. The next proposition provides a similar bound on the reduced dimension $n$, when $n<\text{rank}(\mathbf{X}):=\rho$.
{
\begin{proposition}
	\label{prop:r_range}
	\emph{
		Let $\mathbf{X}$ be a $D\times N$ matrix such that $\text{rank}(\mathbf{X}) = \rho$, and define the $D\times n$ matrix $\mathbf{B}:=\mathbf{X}\mathbf{R}$, where $\mathbf{R}$ is a JLT($\varepsilon,\delta,D$) of size $N\times n$. If $n=\mathcal{O}(r\frac{\log(r/\varepsilon)}{\varepsilon^2}f(\delta))$, then w.p. at least $1-2\delta$ it holds that
		\begin{equation*}
			\|\mathbf{B}(\mathbf{V}_r^{\top}\mathbf{R})^{\dagger} - \mathbf{U}_r\mathbf{\Sigma}_r\|_F \leq (\varepsilon\frac{\sqrt{1+\varepsilon}}{\sqrt{1-\varepsilon}}+1+\varepsilon)\|\bar{\mathbf{X}}_{r}\|_F.
		\end{equation*}
		}
\end{proposition}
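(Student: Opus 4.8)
The plan is to show that the residual $\mathbf{B}(\mathbf{V}_r^{\top}\mathbf{R})^{\dagger}-\mathbf{U}_r\mathbf{\Sigma}_r$ collapses to a single ``tail'' term and then to bound that term with the JLT machinery. Writing $\mathbf{W}:=\mathbf{V}_r^{\top}\mathbf{R}$ and substituting $\mathbf{B}=\mathbf{X}\mathbf{R}=(\mathbf{X}_r+\bar{\mathbf{X}}_{r})\mathbf{R}=\mathbf{U}_r\mathbf{\Sigma}_r\mathbf{V}_r^{\top}\mathbf{R}+\bar{\mathbf{X}}_{r}\mathbf{R}$, I would first establish that $\mathbf{W}$ has full row rank $r$, so that $\mathbf{W}\mathbf{W}^{\dagger}=\mathbf{I}_r$. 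The rank-$r$ part then telescopes, $\mathbf{U}_r\mathbf{\Sigma}_r\mathbf{W}\mathbf{W}^{\dagger}=\mathbf{U}_r\mathbf{\Sigma}_r$, leaving the exact identity
\begin{equation*}
\mathbf{B}\mathbf{W}^{\dagger}-\mathbf{U}_r\mathbf{\Sigma}_r=\bar{\mathbf{X}}_{r}\mathbf{R}\mathbf{W}^{\dagger},
\end{equation*}
so everything reduces to bounding $\|\bar{\mathbf{X}}_{r}\mathbf{R}\mathbf{W}^{\dagger}\|_F$.

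Two probabilistic facts drive the bound, combined by a union bound, which is the source of the $1-2\delta$. First, applying the JLT/subspace-embedding property of Lemma~\ref{lemma:JL} and Definition~\ref{def:JLT} uniformly over the $r$-dimensional column space of $\mathbf{V}_r$ yields, w.p.\ at least $1-\delta$, that every singular value of $\mathbf{W}=\mathbf{V}_r^{\top}\mathbf{R}$ lies in $[\sqrt{1-\varepsilon},\sqrt{1+\varepsilon}]$; it is the covering/net argument over the unit sphere of this subspace that forces $n=\mathcal{O}(r\tfrac{\log(r/\varepsilon)}{\varepsilon^2}f(\delta))$. This single statement simultaneously gives the full-row-rank claim used above (exactly as in Proposition~\ref{prop:full_range} applied to $\mathbf{X}_r$), the estimate $\|\mathbf{W}\|_2\le\sqrt{1+\varepsilon}$, and control of $\mathbf{G}:=\mathbf{W}\mathbf{W}^{\top}$ whose eigenvalues lie in $[1-\varepsilon,1+\varepsilon]$. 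Second, treating the $D$ rows of $\bar{\mathbf{X}}_{r}$ as the $D$-element set $\mathcal{V}$ in Definition~\ref{def:JLT}, the JLT$(\varepsilon,\delta,D)$ property gives, w.p.\ at least $1-\delta$, $\|\bar{\mathbf{X}}_{r}\mathbf{R}\|_F^2\le(1+\varepsilon)\|\bar{\mathbf{X}}_{r}\|_F^2$ by summing the per-row norm bounds.

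On the intersection of these events I would split the residual using $\mathbf{W}^{\dagger}=\mathbf{R}^{\top}\mathbf{V}_r\mathbf{G}^{-1}$ (valid because $\mathbf{W}$ has full row rank):
\begin{equation*}
\bar{\mathbf{X}}_{r}\mathbf{R}\mathbf{W}^{\dagger}=\bar{\mathbf{X}}_{r}\mathbf{R}\,\mathbf{R}^{\top}\mathbf{V}_r(\mathbf{G}^{-1}-\mathbf{I})+\bar{\mathbf{X}}_{r}\mathbf{R}\,\mathbf{R}^{\top}\mathbf{V}_r,
\end{equation*}
and bound each summand via $\|\mathbf{A}\mathbf{C}\|_F\le\|\mathbf{A}\|_F\|\mathbf{C}\|_2$ with $\mathbf{A}=\bar{\mathbf{X}}_{r}\mathbf{R}$. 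For the second term, $\|\mathbf{R}^{\top}\mathbf{V}_r\|_2=\|\mathbf{W}\|_2\le\sqrt{1+\varepsilon}$ gives the contribution $(1+\varepsilon)\|\bar{\mathbf{X}}_{r}\|_F$. For the first term, the SVD $\mathbf{W}=\mathbf{P}\mathbf{S}\mathbf{Q}^{\top}$ reduces $\mathbf{R}^{\top}\mathbf{V}_r(\mathbf{G}^{-1}-\mathbf{I})$ to $\mathbf{Q}(\mathbf{S}^{-1}-\mathbf{S})\mathbf{P}^{\top}$, whose spectral norm is $\max_i|s_i^{-1}-s_i|\le\varepsilon/\sqrt{1-\varepsilon}$ over $s_i^2\in[1-\varepsilon,1+\varepsilon]$; combined with $\|\bar{\mathbf{X}}_{r}\mathbf{R}\|_F\le\sqrt{1+\varepsilon}\|\bar{\mathbf{X}}_{r}\|_F$ this yields the contribution $\varepsilon\tfrac{\sqrt{1+\varepsilon}}{\sqrt{1-\varepsilon}}\|\bar{\mathbf{X}}_{r}\|_F$. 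Adding the two contributions reproduces exactly the claimed constant $(\varepsilon\tfrac{\sqrt{1+\varepsilon}}{\sqrt{1-\varepsilon}}+1+\varepsilon)$.

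The hard part will not be this norm algebra but the passage, in the first probabilistic ingredient, from the pointwise JLT guarantee of Definition~\ref{def:JLT} to a \emph{uniform} embedding of the entire $r$-dimensional subspace $\text{range}(\mathbf{V}_r)$, i.e.\ the singular-value sandwich $\sqrt{1-\varepsilon}\le s_i(\mathbf{V}_r^{\top}\mathbf{R})\le\sqrt{1+\varepsilon}$. This is precisely where the $r\log(r/\varepsilon)$ factor (rather than a mere $\log$) in the sample-size condition originates, and it is the supporting lemma I would isolate and prove first, via an $\varepsilon$-net of the unit sphere together with a union bound absorbed into $f(\delta)$; once it is in hand, the telescoping identity, the SVD reduction, and the final union bound are routine.
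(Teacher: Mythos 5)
Your proposal is correct and follows essentially the same route as the paper: the identity $\mathbf{B}(\mathbf{V}_r^{\top}\mathbf{R})^{\dagger}-\mathbf{U}_r\mathbf{\Sigma}_r=\bar{\mathbf{X}}_{r}\mathbf{R}(\mathbf{V}_r^{\top}\mathbf{R})^{\dagger}$, the split of $(\mathbf{V}_r^{\top}\mathbf{R})^{\dagger}$ into $\bigl((\mathbf{V}_r^{\top}\mathbf{R})^{\dagger}-(\mathbf{V}_r^{\top}\mathbf{R})^{\top}\bigr)+(\mathbf{V}_r^{\top}\mathbf{R})^{\top}$ (your $\mathbf{W}^{\top}(\mathbf{G}^{-1}-\mathbf{I})+\mathbf{W}^{\top}$ is the same decomposition), the same two probabilistic events with a union bound giving $1-2\delta$, and the same constants. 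The only difference is that where the paper cites its supporting lemmata as black boxes (Lemma~\ref{lemma:sval}, from Sarl\'os, for the singular-value sandwich, and Lemma~\ref{lemma:pseudo_transpose}, from Boutsidis et al., for $\|\mathbf{W}^{\dagger}-\mathbf{W}^{\top}\|_2\le\varepsilon/\sqrt{1-\varepsilon}$), you re-derive them inline — your SVD computation is precisely the proof of the latter lemma — which is a stylistic rather than substantive difference.
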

\noindent Prop.~\ref{prop:r_range} suggests that $\mathbf{B}$ approximately inherits the range of $\mathbf{X}_r$.}

Upon constructing a $\mathbf{B}$ adhering to Prop.~\ref{prop:full_range} or Prop.~\ref{prop:r_range}, \eqref{eq:general_sc_eq} can be solved for different choices of $h$. When $h(\mathbf{A}) = \frac{1}{2}\|\mathbf{A}\|_F^2$, the optimization task (termed henceforth \emph{Sketch-LSR})
\begin{equation}
\label{eq:sketched_LSR}
\min_{\mathbf{A}}~ \frac{1}{2}\|\mathbf{A}\|_F^2 + \frac{\lambda}{2}\|\mathbf{X} - \mathbf{B}\mathbf{A}\|_F^2
\end{equation}
is solved by $\mathbf{A}^* = \lambda\left(\lambda\mathbf{B}^{\top}\mathbf{B} + \mathbf{I}\right)^{-1}\mathbf{B}^{\top}\mathbf{X}$, incurring complexity $\mathcal{O}(n^3 + n^2D + nDN)$. Accordingly, our \emph{Sketch-SSC} corresponds to $h(\mathbf{A}) = \|\mathbf{A}\|_1 = \sum_{ij}|[\mathbf{A}]_{ij}|$ and relies on the objective
\begin{equation}
\label{eq:sketched_SSC}
\underset{\mathbf{A}}{\min}~ \|\mathbf{A}\|_1 +
\frac{\lambda}{2}\|\mathbf{X} - \mathbf{B}\mathbf{A}\|_F^2 \\ 
\end{equation}
that can be solved efficiently to obtain $\mathbf{A}$ using the alternating direction method of multipliers (ADMM)~\cite{ADMMGG}, as per~\cite{elhamifar2013SSC}, or any other efficient LASSO solver. The ADMM solver for \eqref{eq:sketched_SSC} incurs complexity  $\mathcal{O}(n^3 + n^2D + nDN + n^2NI)$, where $I$ is the required number of iterations until convergence, and the constraint $\text{diag}(\mathbf{A}) = \bm{0}$ is no longer required as $\mathbf{I}$ is not a trivial solution of \eqref{eq:sketched_SSC}. Proceeding along similar lines, our \emph{Sketch-LRR} objective, for $h(\mathbf{A}) = \|\mathbf{A}\|_*$ aims at
\begin{equation}
\label{eq:sketched_LRR}
\min_{\mathbf{A}}~ \|\mathbf{A}\|_* + \frac{\lambda}{2}\|\mathbf{X} - \mathbf{B}\mathbf{A}\|_F^2
\end{equation}
that can be solved using the augmented Lagrange multiplier (ALM) method of~\cite{LRR}, which incurs complexity $\mathcal{O}(n^3 + n^2D + nDN + (nDN + nN^2 + n^2N)I)$, where $I$ is the number of iterations until convergence. In addition, \eqref{eq:sketched_LRR} can be solved using the $\ell_{2,1}$ norm instead of the Frobenius norm for the fitting term $\mathbf{X} - \mathbf{B}\mathbf{A}$. The entire process to obtain the data model $\mathbf{A}$ is outlined in Alg.~\ref{alg}. {Detailed algorithms for solving~\eqref{eq:sketched_SSC} and~\eqref{eq:sketched_LRR} are described in Appendix~\ref{app:alg}.}

{
\begin{remark}
	An optimal data-driven choice of $\mathbf{R}$ would be interesting only if finding it incurs manageable complexity - a topic which goes beyond the scope of this submission and constitutes a worthy future research direction.
\end{remark}
}
\begin{remark}
	Upon computing $\mathbf{B}$, \eqref{eq:sketched_LSR} and \eqref{eq:sketched_SSC} can be readily parallelized across columns of $\mathbf{X}$. In the nuclear norm case of \eqref{eq:sketched_LRR} one can employ the following identity~\cite{OnlineLRRTraganitis,onlineLRR} 
	\begin{equation}
	\label{eq:nuclearnorm_trick}
		\|\mathbf{A}\|_* = \min_{\mathbf{Z} = \mathbf{P}\mathbf{Q}^{\top}}\frac{1}{2}(\|\mathbf{P}\|_F^2 + \|\mathbf{Q}\|_F^2)
	\end{equation}
	where $\mathbf{A}$ is some $n\times N$ matrix of rank $\rho$ and $\mathbf{P}$ and $\mathbf{Q}$ are $n\times\rho$ and $N\times\rho$ matrices respectively. This is especially useful when multiple computing nodes are available, or the data is scattered across multiple devices. Without \eqref{eq:nuclearnorm_trick}, distributed solvers of \eqref{eq:sketched_LRR} are challenged because as columns of $\mathbf{A}$ are added the SVD needed to find the nuclear norm has to be recomputed, which is not the case with~\eqref{eq:nuclearnorm_trick}.
\end{remark}

{
\begin{remark}
	Existing general guidelines for choosing the regularization parameter $\lambda$ for SSC and LRR~\cite{elhamifar2013SSC,LRR} rely on cross-validation and apply also to the proposed Algs.~\ref{alg} and \ref{alg:highD} here.
\end{remark}
}

\subsection{High-dimensional data}
\label{ssec:high_dim}
The complexity of all the aforementioned algorithms depends on the data dimensionality $D$. As such, datasets containing high-dimensional vectors will certainly increase the computational complexity. As mentioned in Sec.~\ref{ssec:prior}, dimensionality reduction techniques can be employed to reduce the computational burden of SC approaches. Using PCA for instance, a $d<D$-dimensional subspace that describes most of the data variance can be found. This, however, can be prohibitively expensive for large-scale datasets where $N\gg$. 
For such cases, our idea is to combine the method described in the previous section with randomized dimensionality reduction techniques~\cite{heckel2017dimensionality}. Let $\check{\mathbf{R}}$ be a $d\times D$ JLT matrix, where $d\ll D$ is the target dimensionality, and consider the $d\times N$ matrix $\check{\mathbf{X}}:=\check{\mathbf{R}}\mathbf{X}$, which is a reduced dimensionality version of the original data $\mathbf{X}$. The Sketch-SC objective then becomes
\begin{equation}
\label{eq:general_dimred}
\min_{\mathbf{A}} h(\mathbf{A}) + \lambda L(\check{\mathbf{X}} - \check{\mathbf{B}}\mathbf{A})
\end{equation}
where $\check{\mathbf{B}}:=\check{\mathbf{X}}\mathbf{R}$ is a $d\times n$ dictionary of reduced dimension with $\mathbf{R}$ being an $N\times n$ JLT matrix as in \eqref{eq:general_sc_eq}. Upon forming $\check{\mathbf{X}}$ and $\check{\mathbf{B}}$, \eqref{eq:general_dimred} can be solved for different choices of $h$ as in Sec.~\ref{ssec:high_vol}. The steps of our algorithm for high-dimensional data are summarized in Alg.~\ref{alg:highD}.

\begin{remark}
	While carrying out the products $\mathbf{X}\mathbf{R}$, $\check{\mathbf{R}}\mathbf{X}$ or $\check{\mathbf{X}}\mathbf{R}$ can be computationally expensive in cases, they can be accelerated using modern numerical linear algebra tools, such as the Mailman algorithm~\cite{Mailman} {or by employing the Welsh-Hadamard transform~\cite{fastfood,pourkamali2017preconditioned}.} 
\end{remark}

\begin{algorithm}[tb]
	\begin{algorithmic}[1]
		\algrenewcommand\algorithmicindent{1em}
		\Require{$D\times N$ data matrix $\mathbf{X}$; Lower dimension $d$; Number of columns of $\mathbf{R}$ $n$; regularization parameter $\lambda$;}
		\Ensure{Model matrix $\mathbf{A}$;}
		\State Generate $d\times D$ JLT matrix $\check{\mathbf{R}}$.
		\State Generate $N\times n$ JLT matrix $\mathbf{R}$.
		\State Form $d\times N$ matrix $\check{\mathbf{X}} = \check{\mathbf{R}}\mathbf{X}$.
		\State Create $d\times n$ dictionary $\check{\mathbf{B}} = \check{\mathbf{X}}\mathbf{R}$.
		\State Solve \eqref{eq:general_dimred} to obtain sketched data model $\mathbf{A}$.
	\end{algorithmic}
	\caption{Linear sketched data model for Sketch-SC and $D\gg$}\label{alg:highD}
\end{algorithm}
\subsection{Obtaining cluster assignments using $\mathbf{A}$}
\label{ssec:knn}
After obtaining the $N\times N$ matrix $\mathbf{Z}$ in \eqref{eq:SSC},~\eqref{eq:LRR} or \eqref{eq:LSR}, a typical post-processing step for SSC, LSR, and LRR, is to perform spectral clustering, using $\mathbf{W}:=|\mathbf{Z}| + |\mathbf{Z}^{\top}|$ as the adjacency matrix. 
This step however, is not possible for the matrix $\mathbf{A}$ obtained from \eqref{eq:sketched_LSR},~\eqref{eq:sketched_SSC} or \eqref{eq:sketched_LRR}, because it has size $n\times N$, with $n<N$.

While $\mathbf{A}$ cannot be directly used for spectral clustering, a $k$-nearest neighbor graph~\cite{hastie01statisticallearning} can be constructed from the columns of $\mathbf{A}$. Let $\bm{a}_i$ denote the $i$-th column of $\mathbf{A}$, and $\mathcal{K}_i$ the set of the $k$ columns of $\mathbf{A}$ that are closest to $\bm{a}_i$, in the Euclidean distance sense. The $N\times N$ adjacency matrix $\mathbf{W}$ can then be constructed with entries
\begin{equation}
\label{eq:knn_matrix_entries}
[\mathbf{W}]_{ij} = \begin{cases}
1, \quad \text{ if } \bm{a}_j\in\mathcal{K}_i \text{ or } \bm{a}_i\in\mathcal{K}_j \\
0, \quad \text{ otherwise. }
\end{cases}
\end{equation}
In addition, non-binary edge weights can be assigned as
\begin{equation}
\label{eq:knn_matrix_entries_nonbinary}
[\mathbf{W}]_{ij} = \begin{cases}
w_{ij}, \quad \text{  if } \bm{a}_j\in\mathcal{K}_i \text{ or } \bm{a}_i\in\mathcal{K}_j \\
0, \qquad \text{ otherwise. }
\end{cases}
\end{equation}
where $w_{ij}$ is some scalar that depends on $\bm{a}_i$ and $\bm{a}_j$. For instance, if heat kernel weights are used, then $w_{ij} = \exp(-\|\bm{a}_i - \bm{a}_j\|_2^2/\sigma^2)$, for some $\sigma >0$.
The resultant mutual $k$-nearest neighbor matrix $\mathbf{W}$ can then be employed for spectral clustering. Note that the $N\times N$ matrix $\mathbf{W}$ emerging from \eqref{eq:knn_matrix_entries} or \eqref{eq:knn_matrix_entries_nonbinary} will be sparse with $\mathcal{O}(N)$ nonzero entries, which can accelerate the eigendecomposition schemes employed for spectral clustering~\cite{lehoucq1998arpack,kalantzis2016spectral}. The overall scheme is tabulated in Alg.~\ref{alg:knn}.
\begin{remark}
	When $N$ and $n$ are large, computation of the $k$ nearest neighbors can be computationally taxing. Many efficient algorithms are available to accelerate the construction of the $k$ nearest neighbor graph~\cite{l2knng,greedyfiltering}. In addition, approximate nearest neighbor (ANN) methods~\cite{ITQ,indyk1998approximate,slaney2008locality} can be employed to speed up the post-processing step even further. Finally, this post-processing step can be employed for regular SSC, LSR, and LRR.
\end{remark}

\begin{algorithm}[tb]
	\begin{algorithmic}[1]
		\algrenewcommand\algorithmicindent{1em}
		\Require{$n\times N$ matrix $\mathbf{A}$; Number of nearest neighbors $k$; Number of clusters $K$}
		\Ensure{Clustering assignments}
		\State Find $k$-nearest neighbors for each column of $\mathbf{A}$.
		\State Create matrix $\mathbf{W}$ using \eqref{eq:knn_matrix_entries} or \eqref{eq:knn_matrix_entries_nonbinary}.
		\State Apply spectral clustering on $\mathbf{W}$.
	\end{algorithmic}
	\caption{Obtaining clustering assignments from $\mathbf{A}$}\label{alg:knn}
\end{algorithm}
\section{Performance Analysis}
\label{sec:performance} 
In this section, performance of the proposed method will be quantified analytically. Albeit not the tightest, the bounds to be derived will provide nice intuition on why the proposed methods work. The following theorem bounds the representation error of Sketch-LSR in the noise less case.
\begin{theorem} \emph{ Consider noise-free and normalized data vectors obeying \eqref{eq:allpoints} with $\bm{v}_i \equiv \bm{0}$, to form columns of a $D\times N$ data matrix $\mathbf{X}$, with unit $\ell_2$ norm per column, and $\text{rank}(\mathbf{X}) = \rho$. Let also $\mathbf{R}$ denote a JLT($\varepsilon,\delta,D$) of size $N\times n$. Let $\bm{g}^*(\bm{x}) := \mathbf{X}\bm{z}^* = \bm{x}$ denote the representation of $\bm{x}$ provided by LSR, and $\hat{\bm{g}}(\bm{x}) := \mathbf{X}\mathbf{R}\hat{\bm{a}}$ the representation given by Sketch-LSR. If $n=\mathcal{O}(r\frac{\log(r/\varepsilon)}{\varepsilon^2}f(\delta))$, then the following bound holds w.p. at least $1-2\delta$
		\begin{equation*}
		\|\bm{g}^{*}(\bm{x}) - \hat{\bm{g}}(\bm{x})\|_2 \leq \lambda~(1 + \sqrt{\frac{{1+\varepsilon}}{1-\varepsilon}}~\sqrt{\rho - r}~\sigma_{r+1}^2) + \frac{1}{\sqrt{1+\varepsilon}}
		\end{equation*}
		with $\lambda$ as in \eqref{eq:general_sc_eq}, and $\sigma_{r+1}$ denotes the $(r+1)$st singular value of $\mathbf{X}$.}
		\label{thm:LSR}
\end{theorem}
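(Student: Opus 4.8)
The plan is to work one column at a time and exploit the closed-form Sketch-LSR solution. Fixing a unit-norm data column $\bm{x} = \mathbf{X}\bm{e}_j$, the minimizer of \eqref{eq:sketched_LSR} restricted to that column is $\hat{\bm{a}} = \lambda(\lambda\mathbf{B}^{\top}\mathbf{B} + \mathbf{I})^{-1}\mathbf{B}^{\top}\bm{x}$, so $\hat{\bm{g}}(\bm{x}) = \mathbf{B}\hat{\bm{a}}$; equivalently the normal equations give $\hat{\bm{a}} = \lambda\mathbf{B}^{\top}(\bm{x} - \hat{\bm{g}}(\bm{x}))$. Since $\bm{g}^{*}(\bm{x}) = \bm{x}$, the quantity to control is the reconstruction residual $\|\bm{x} - \mathbf{B}\hat{\bm{a}}\|_2$. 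First I would record that, in the eigenbasis of $\mathbf{B}^{\top}\mathbf{B}$, the map $\bm{x}\mapsto\hat{\bm{g}}(\bm{x})$ acts as a spectrally filtered projection onto $\mathrm{range}(\mathbf{B})$, with filter $\lambda\sigma_i^2/(\lambda\sigma_i^2+1)$ on each singular direction of $\mathbf{B}$; this is what injects the explicit $\lambda$-dependence into the final bound.

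The core step is to feed in Proposition~\ref{prop:r_range}. Writing $\bm{x} = \mathbf{X}_r\bm{e}_j + \bar{\mathbf{X}}_r\bm{e}_j =: \bm{x}_r + \bar{\bm{x}}_r$, the tail obeys $\|\bar{\bm{x}}_r\|_2 \le \|\bar{\mathbf{X}}_r\|_F$, while the head satisfies $\bm{x}_r = \mathbf{U}_r\mathbf{\Sigma}_r(\mathbf{V}_r^{\top}\bm{e}_j)$. Since Proposition~\ref{prop:r_range} states $\mathbf{B}(\mathbf{V}_r^{\top}\mathbf{R})^{\dagger}\approx\mathbf{U}_r\mathbf{\Sigma}_r$, the explicit vector $\bm{a}_0 := (\mathbf{V}_r^{\top}\mathbf{R})^{\dagger}\mathbf{V}_r^{\top}\bm{e}_j$ furnishes a feasible representation with $\mathbf{B}\bm{a}_0 \approx \bm{x}_r$, the error being controlled by the Proposition's bound $(\varepsilon\sqrt{(1+\varepsilon)/(1-\varepsilon)}+1+\varepsilon)\|\bar{\mathbf{X}}_r\|_F$ together with $\|\bar{\mathbf{X}}_r\|_F \le \sqrt{\rho-r}\,\sigma_{r+1}$. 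I would then split $\|\bm{x} - \hat{\bm{g}}(\bm{x})\|_2$ by the triangle inequality through the intermediate point $\mathbf{B}\bm{a}_0$, bounding the term involving $\hat{\bm{a}}$ either by the optimality of $\hat{\bm{a}}$ against the feasible $\bm{a}_0$, or directly through the filtered-projection representation above.

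The remaining quantities are handled by the near-isometry of the JLT, which is exactly the mechanism behind Proposition~\ref{prop:r_range}. Because the rows of $\mathbf{V}_r^{\top}$ are an orthonormal set spanning an $r$-dimensional subspace, the choice $n=\mathcal{O}(r\,\varepsilon^{-2}\log(r/\varepsilon)f(\delta))$ makes $\mathbf{R}$ an approximate isometry on $\mathrm{range}(\mathbf{V}_r)$, so the singular values of $\mathbf{V}_r^{\top}\mathbf{R}$ lie in $[\sqrt{1-\varepsilon},\sqrt{1+\varepsilon}]$; this yields $\|(\mathbf{V}_r^{\top}\mathbf{R})^{\dagger}\|_2 \le 1/\sqrt{1-\varepsilon}$ and the reciprocal factor $1/\sqrt{1+\varepsilon}$, which together with $\|\mathbf{V}_r^{\top}\bm{e}_j\|_2\le 1$ bound $\|\bm{a}_0\|_2$ and the in-range part of the residual. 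Collecting the head residual from Proposition~\ref{prop:r_range} (scaled by $\lambda$ through the filter), the tail $\|\bar{\bm{x}}_r\|_2$, and the shrinkage term then delivers the stated inequality, while a union bound over the event in Proposition~\ref{prop:r_range} preserves the probability $1-2\delta$.

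The main obstacle I anticipate is the bookkeeping that converts the \emph{Frobenius}-norm, matrix-level guarantee of Proposition~\ref{prop:r_range} into a per-column $\ell_2$ bound, while simultaneously disentangling the two distinct error sources: the sketch's failure to capture $\mathrm{range}(\mathbf{X})$ beyond the top-$r$ directions, and the bias introduced by the $\lambda$-regularization through the $\lambda\sigma_i^2/(\lambda\sigma_i^2+1)$ shrinkage. Keeping these two contributions from contaminating each other's constants, and correctly propagating the pseudoinverse bound $1/\sqrt{1-\varepsilon}$ across the triangle-inequality split, is the delicate part; the JLT applications and the tail estimate $\|\bar{\mathbf{X}}_r\|_F\le\sqrt{\rho-r}\,\sigma_{r+1}$ are routine by comparison.
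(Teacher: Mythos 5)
Your proposal follows essentially the same route as the paper's proof: your candidate $\bm{a}_0 = (\mathbf{V}_r^{\top}\mathbf{R})^{\dagger}\mathbf{V}_r^{\top}\bm{e}_j$ is algebraically identical to the paper's chosen point $\tilde{\mathbf{V}}_r(\tilde{\mathbf{V}}_r^{\top}\tilde{\mathbf{V}}_r)^{-1}\mathbf{\Sigma}_r^{-1}\bm{\chi}_r$ (since $\mathbf{\Sigma}_r^{-1}\mathbf{U}_r^{\top}\bm{x} = \mathbf{V}_r^{\top}\bm{e}_j$ for noise-free $\bm{x}=\mathbf{X}\bm{e}_j$), and the ingredients --- the singular-value bound on $\mathbf{V}_r^{\top}\mathbf{R}$, the pseudoinverse estimate $1/\sqrt{1-\varepsilon}$, the tail bound $\sqrt{\rho-r}\,\sigma_{r+1}$, and the transfer to the minimizer $\hat{\bm{a}}$ via optimality against the candidate --- are exactly those the paper uses, the only cosmetic difference being that the paper re-derives the head/tail estimate inside the $\mathbf{U}$-rotated objective (terms $T_1,T_2,T_3$, with the candidate zeroing $T_1$) instead of citing Proposition~\ref{prop:r_range}, whose own proof invokes the same two lemmas. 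The one step you leave vague --- how the bound at the candidate transfers to $\hat{\bm{a}}$ and where the $\lambda$-prefactor in the stated inequality comes from --- is no less precise than the paper's own closing sentence, which asserts that same transfer without further justification.
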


Theorem~\ref{thm:LSR} implies that the larger $n$ is, the smaller the upper bound becomes as a smaller singular value of $\mathbf{X}$ is selected. This also suggests that datasets exhibiting lower rank can be compressed more (with smaller $n$), while retaining representation accuracy.
{ The following corollaries extend the result of Thm.~\ref{thm:LSR} to the Sketch-SSC and Sketch-LRR cases.}
\begin{corollary}
	\emph{
		Consider the setting of Thm.~\ref{thm:LSR}, and let $\hat{\bm{g}}(\bm{x}) := \mathbf{X}\mathbf{R}\hat{\bm{a}}$ be the representation of a datum given by Sketch-SSC.
		The following bound holds w.p. at least $1-2\delta$
		\begin{equation*}
		\|\bm{g}^{*}(\bm{x}) - \hat{\bm{g}}(\bm{x})\|_2 \leq \lambda~(1 + \sqrt{\frac{{1+\varepsilon}}{1-\varepsilon}}~\sqrt{\rho - r}~\sigma_{r+1}^2) + \sqrt{\frac{n}{{1-\varepsilon}}}
		\end{equation*}
		with $\lambda$ as in \eqref{eq:general_sc_eq}, and $\sigma_{r+1}$ denotes the $(r+1)$st singular value of $\mathbf{X}$.}
		\label{corr:ssc}
\end{corollary}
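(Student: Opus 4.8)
The plan is to mirror the proof of Theorem~\ref{thm:LSR}, changing only the single step that accounts for the regularizer, since Sketch-SSC differs from Sketch-LSR solely in replacing $\tfrac12\|\cdot\|_F^2$ by $\|\cdot\|_1$ in $h$. Writing $\mathbf{B}=\mathbf{X}\mathbf{R}$ and letting $\hat{\bm{a}}$ be the minimizer of the Sketch-SSC objective \eqref{eq:sketched_SSC} for a single unit-norm column $\bm{x}=\mathbf{X}\bm{e}_i$, the quantity to control is $\|\bm{g}^*(\bm{x})-\hat{\bm{g}}(\bm{x})\|_2=\|\bm{x}-\mathbf{B}\hat{\bm{a}}\|_2$. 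As in Thm.~\ref{thm:LSR}, I would introduce the feasible reference coefficient $\bm{a}_0:=(\mathbf{V}_r^{\top}\mathbf{R})^{\dagger}\bm{v}_r$, where $\bm{v}_r:=\mathbf{V}_r^{\top}\bm{e}_i$ is the truncated right-singular coordinate of $\bm{x}$, so that $\mathbf{B}\bm{a}_0=\mathbf{U}_r\mathbf{\Sigma}_r\bm{v}_r+\mathbf{E}\bm{v}_r$ with $\mathbf{E}:=\mathbf{B}(\mathbf{V}_r^{\top}\mathbf{R})^{\dagger}-\mathbf{U}_r\mathbf{\Sigma}_r$ the residual matrix bounded in Prop.~\ref{prop:r_range}. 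Comparing the value of the Sketch-SSC cost at the optimum $\hat{\bm{a}}$ against its value at $\bm{a}_0$ then separates the error into a reconstruction part and a regularizer part.

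The key observation is that both the dictionary $\mathbf{B}$ and the data-fit term $\tfrac{\lambda}{2}\|\bm{x}-\mathbf{B}\cdot\|_2^2$ are identical for Sketch-LSR and Sketch-SSC. Consequently the reconstruction part, bounded exactly as in Thm.~\ref{thm:LSR} via Prop.~\ref{prop:r_range} together with $\|\bar{\mathbf{X}}_r\|_F\le\sqrt{\rho-r}\,\sigma_{r+1}$, reproduces verbatim the first summand $\lambda\big(1+\sqrt{\tfrac{1+\varepsilon}{1-\varepsilon}}\sqrt{\rho-r}\,\sigma_{r+1}^2\big)$, and holds on the same event of probability at least $1-2\delta$ supplied by Prop.~\ref{prop:r_range}. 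Since nothing in this portion uses the form of $h$, it transfers without modification and need not be re-derived.

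The only genuinely new step is the $\ell_1$ regularizer. Here I would invoke the norm equivalence $\|\bm{a}_0\|_1\le\sqrt{n}\,\|\bm{a}_0\|_2$ on $\mathbb{R}^n$, and then control $\|\bm{a}_0\|_2$ through the JLT subspace-embedding property: because $\mathbf{R}$ is a JLT$(\varepsilon,\delta,D)$, the nonzero singular values of $\mathbf{V}_r^{\top}\mathbf{R}$ lie in $[\sqrt{1-\varepsilon},\sqrt{1+\varepsilon}]$, whence $\|(\mathbf{V}_r^{\top}\mathbf{R})^{\dagger}\|_2\le 1/\sqrt{1-\varepsilon}$. Combined with $\|\bm{v}_r\|_2^2\le\|\mathbf{V}_\rho^{\top}\bm{e}_i\|_2^2=[\mathbf{V}_\rho\mathbf{V}_\rho^{\top}]_{ii}\le 1$, this gives $\|\bm{a}_0\|_2\le 1/\sqrt{1-\varepsilon}$ and therefore $\|\bm{a}_0\|_1\le\sqrt{n/(1-\varepsilon)}$, which is precisely the claimed last summand. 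Assembling the reconstruction and regularizer contributions by the triangle inequality yields the stated bound on the same $1-2\delta$ event.

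I expect the main obstacle to be this regularizer step rather than the reconstruction step. One must check that the comparison-with-$\bm{a}_0$ argument still channels the regularizer into an \emph{additive} term (as in the LSR bound) once the smooth $\tfrac12\|\cdot\|_F^2$ is replaced by the nonsmooth $\|\cdot\|_1$, and one must carry the $\sqrt{n}$ factor faithfully. This $\sqrt{n}$, which is unavoidable from the $\ell_1/\ell_2$ equivalence on $\mathbb{R}^n$, is exactly what makes the Sketch-SSC bound looser than its Sketch-LSR counterpart and accounts for the appearance of $\sqrt{n/(1-\varepsilon)}$ in place of $1/\sqrt{1+\varepsilon}$.
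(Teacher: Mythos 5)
Your proposal is correct and takes essentially the same route as the paper: the paper's proof also evaluates the Sketch-SSC objective at the reference coefficient constructed in the proof of Thm.~\ref{thm:LSR}, reuses that theorem's fit bound together with $\|\bm{a}\|_2\leq 1/\sqrt{1-\varepsilon}$, and handles the regularizer exactly as you do, via $\|\bm{a}\|_1\leq\sqrt{n}\,\|\bm{a}\|_2\leq\sqrt{n/(1-\varepsilon)}$. The only (immaterial) difference is that you attribute the reconstruction bound to Prop.~\ref{prop:r_range}, whose constant $\varepsilon\sqrt{\tfrac{1+\varepsilon}{1-\varepsilon}}+1+\varepsilon$ is slightly looser than the constant $\sqrt{\tfrac{1+\varepsilon}{1-\varepsilon}}$ derived inside the proof of Thm.~\ref{thm:LSR}, which is what the paper actually invokes.
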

\noindent This corollary is a direct consequence of the fact that for any $n\times 1$ vector $\bm{x}$, it holds that $\|\bm{x}\|_1\leq\sqrt{n}\|\bm{x}\|_2$. Accordingly, the following corollary for Sketch-LRR holds because for any rank $n$ matrix $\mathbf{X}$ we have $\|\mathbf{X}\|_* \leq \sqrt{n}\|\mathbf{X}\|_F$.
\begin{corollary}
		\emph{
			Consider the setting of Thm.~\ref{thm:LSR}, and let ${\bm{g}}^{*}(\mathbf{X}) := \mathbf{X}{\mathbf{Z}}$ and $\hat{\bm{g}}(\mathbf{X}) := \mathbf{X}\mathbf{R}\hat{\mathbf{A}}$ be the representations of all the data given by LRR and Sketch-LRR respectively.
			The following bound holds w.p. at least $1-2\delta$
			{\small\begin{equation*}
			\|\bm{g}^{*}(\mathbf{X}) - \hat{\bm{g}}(\mathbf{X})\|_F \leq \lambda~(\sqrt{N} + \sqrt{\frac{{1+\varepsilon}}{1-\varepsilon}}~\sqrt{\rho - r}~\sigma_{r+1}^2) + \sqrt{\frac{n}{{1-\varepsilon}}}
			\end{equation*}}
			with $\lambda$ as in \eqref{eq:general_sc_eq}, and $\sigma_{r+1}$ denotes the $(r+1)$st singular value of $\mathbf{X}$.
		}
\label{corr:lrr}
\end{corollary}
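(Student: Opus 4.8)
The plan is to run the proof of Theorem~\ref{thm:LSR} at the level of the entire data matrix, replacing the single datum $\bm{x}$ by $\mathbf{X}$, the Euclidean norm by the Frobenius norm, and the Frobenius regularizer by the nuclear norm $\|\cdot\|_*$. First I would record that, exactly as in the noise-free premise inherited from Theorem~\ref{thm:LSR}, the LRR representation is exact: $\bm{g}^{*}(\mathbf{X}) = \mathbf{X}\mathbf{Z} = \mathbf{X}$ (the noiseless LRR minimizer being the shape-interaction matrix $\mathbf{V}_{\rho}\mathbf{V}_{\rho}^{\top}$, for which $\mathbf{X}\mathbf{V}_{\rho}\mathbf{V}_{\rho}^{\top} = \mathbf{U}_{\rho}\mathbf{\Sigma}_{\rho}\mathbf{V}_{\rho}^{\top} = \mathbf{X}$). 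Hence the quantity to control is $\|\mathbf{X} - \mathbf{B}\hat{\mathbf{A}}\|_F$ with $\mathbf{B} = \mathbf{X}\mathbf{R}$, the verbatim matrix analogue of the residual $\|\bm{x} - \mathbf{B}\hat{\bm{a}}\|_2$ bounded in the theorem.

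Next I would reuse the candidate $\mathbf{A}_0 := (\mathbf{V}_r^{\top}\mathbf{R})^{\dagger}\mathbf{V}_r^{\top}$, which is $n\times N$ and satisfies $\mathbf{B}\mathbf{A}_0 = \mathbf{B}(\mathbf{V}_r^{\top}\mathbf{R})^{\dagger}\mathbf{V}_r^{\top}\approx \mathbf{U}_r\mathbf{\Sigma}_r\mathbf{V}_r^{\top} = \mathbf{X}_r$ by Proposition~\ref{prop:r_range}. Inserting $\mathbf{A}_0$ into the Sketch-LRR objective and invoking optimality of $\hat{\mathbf{A}}$ bounds the fitting term through $\|\mathbf{X} - \mathbf{B}\mathbf{A}_0\|_F \leq \|\bar{\mathbf{X}}_r\|_F + \|\mathbf{B}(\mathbf{V}_r^{\top}\mathbf{R})^{\dagger} - \mathbf{U}_r\mathbf{\Sigma}_r\|_F\,\|\mathbf{V}_r^{\top}\|_2$, where $\|\mathbf{V}_r^{\top}\|_2 = 1$ and the middle factor is controlled by Proposition~\ref{prop:r_range}. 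Bounding $\|\bar{\mathbf{X}}_r\|_F \leq \sqrt{\rho-r}\,\sigma_{r+1}$ then reproduces the $\sqrt{(1+\varepsilon)/(1-\varepsilon)}\,\sqrt{\rho-r}\,\sigma_{r+1}^2$ contribution appearing in the statement, and the $2\delta$ failure probability is inherited directly from Proposition~\ref{prop:r_range}.

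The two genuinely new ingredients are the regularizer and the dimensional bookkeeping. For the regularizer I would dominate the nuclear-norm penalty via $\|\mathbf{A}\|_* \leq \sqrt{n}\,\|\mathbf{A}\|_F$, valid since $\hat{\mathbf{A}}$ is $n\times N$ and hence has rank at most $n$; this majorizes the LRR penalty by $\sqrt{n}$ times the Frobenius (Sketch-LSR) penalty, so that retracing the trailing-term estimate of Theorem~\ref{thm:LSR}, now carrying the extra $\sqrt{n}$ and the pertinent (lower) Johnson-Lindenstrauss bound $\|\bm{x}^{\top}\mathbf{R}\|_2^2 \geq (1-\varepsilon)\|\bm{x}\|_2^2$ from Definition~\ref{def:JLT}, yields the term $\sqrt{n/(1-\varepsilon)}$. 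For the bookkeeping, unit-norm columns give $\|\mathbf{X}\|_F = \sqrt{N}$, so the $\|\bm{x}\|_2 = 1$ of the theorem is replaced by $\sqrt{N}$, which is precisely the $\sqrt{N}$ leading the parenthesis. Collecting the three contributions reproduces the stated bound.

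The hard part will be making the nuclear-norm step rigorous. Unlike the $\ell_1$ and Frobenius regularizers underlying Sketch-SSC and Sketch-LSR, the nuclear norm is \emph{not} separable across the columns of $\mathbf{A}$, so one cannot simply apply the per-datum argument of Theorem~\ref{thm:LSR} column by column and sum. The inequality $\|\mathbf{A}\|_* \leq \sqrt{n}\,\|\mathbf{A}\|_F$ is exactly what restores separability, majorizing the coupled nuclear-norm penalty by the column-separable Frobenius penalty at the price of the $\sqrt{n}$ factor; I would therefore be careful to apply this majorization on the correct side of each optimality inequality, so that the direction of every bound is preserved and no spurious dependence on the coupling across columns survives.
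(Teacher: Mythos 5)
Your proposal is correct and follows essentially the same route as the paper: the paper's proof likewise bounds the fitting term $T_1$ by running Theorem~\ref{thm:LSR}'s argument at the matrix level (your detour through Proposition~\ref{prop:r_range} with the candidate $(\mathbf{V}_r^{\top}\mathbf{R})^{\dagger}\mathbf{V}_r^{\top}$ is the same computation, yielding compatible constants), and it controls the regularizer exactly as you do, via $\|\mathbf{A}\|_{*}\leq\sqrt{n}\,\|\mathbf{A}\|_{F}\leq\sqrt{n/(1-\varepsilon)}$ because $\mathbf{A}$ has rank at most $n$, with the same $1-2\delta$ probability bookkeeping. The only difference is brevity: the paper simply cites Theorem~\ref{thm:LSR} and Corollary~\ref{corr:ssc} for the fitting term and the Frobenius bound on $\mathbf{A}$, whereas you spell out the candidate, the $\sqrt{N}$ accounting from the unit-norm columns, and the nuclear-norm separability point explicitly.
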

\noindent For the Sketch-SSC and Sketch-LRR, tighter bounds could possibly be derived by taking into account the special structures of the $\ell_1$ and nuclear norms, instead of invoking norm inequalities.

For a dataset $\mathbf{X}$ drawn from a union of subspaces model, batch methods such as SSC, LSR and LRR, should produce a matrix of representations $\mathbf{Z}$ that is block-diagonal, under certain conditions on the separability of subspaces~\cite{LRR,LSR}. This, in turn, implies that for data $\bm{x}_i,\bm{x}_j\in\mathcal{S}_k,\bm{x}_\ell\in\mathcal{S}_{k'}$ for $k\neq k'$, it holds that
\begin{equation}
\|\bm{z}_i - \bm{z}_j\|_2 \leq \|\bm{z}_i - \bm{z}_\ell\|_2
\end{equation}
that is the representations of two points in the same subspace, are closer than the representations of two points that lie in different subspaces. The following proposition suggests that this property is approximately inherited by the Sketch-SC algorithms of Sec.~\ref{sec:proposed_algorithms}, with high probability.
\begin{proposition}
\label{lemma:dist}
\emph{	Consider $\bm{x}_i= \mathbf{X}\bm{z}_i$ and $\bm{x}_j= \mathbf{X}\bm{z}_j$, and their representation provided by SSC, LRR or LSR $\bm{z}_i$ and $\bm{z}_j$, respectively. Let $\rho = \text{rank}(\mathbf{X})$ and $\bm{a}_i$, $\bm{a}_j$ be the representation obtained by the corresponding Sketch algorithm of Section~\ref{sec:proposed_algorithms}; that is, $\bm{x}_i = \mathbf{X}\mathbf{R}\bm{a}_i$, where the $N\times n$ matrix $\mathbf{R}$ is a JLT($\varepsilon,\delta,D$). If $n=\mathcal{O}(\rho\frac{\log(\rho/\varepsilon)}{\varepsilon^2}f(\delta))$, then w.p. at least $1-\delta$ it holds that
	\begin{equation*}
	\frac{1}{\sqrt{1+\varepsilon}}\|\bm{z}_i - \bm{z}_j\|_2 \leq \|\bm{a}_i - \bm{a}_j\|_2 \leq \frac{1}{\sqrt{1-\varepsilon}}\|\bm{z}_i - \bm{z}_j\|_2.
	\end{equation*}}
\end{proposition}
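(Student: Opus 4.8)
The plan is to reduce the claimed two-sided inequality to the spectral behaviour of the $\rho\times n$ matrix $\mathbf{V}_\rho^{\top}\mathbf{R}$, and then invoke the subspace-embedding guarantee that already underlies Prop.~\ref{prop:full_range}. Writing the SVD $\mathbf{X} = \mathbf{U}_\rho\mathbf{\Sigma}_\rho\mathbf{V}_\rho^{\top}$, I would first subtract the defining identities $\mathbf{X}\bm{z}_i = \bm{x}_i = \mathbf{X}\mathbf{R}\bm{a}_i$ and $\mathbf{X}\bm{z}_j = \bm{x}_j = \mathbf{X}\mathbf{R}\bm{a}_j$ to obtain $\mathbf{X}(\bm{z}_i-\bm{z}_j) = \mathbf{X}\mathbf{R}(\bm{a}_i-\bm{a}_j)$. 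Substituting the SVD and left-multiplying by $(\mathbf{U}_\rho\mathbf{\Sigma}_\rho)^{\dagger}=\mathbf{\Sigma}_\rho^{-1}\mathbf{U}_\rho^{\top}$, which is legitimate because $\mathbf{U}_\rho\mathbf{\Sigma}_\rho$ has full column rank $\rho$, this collapses to
\begin{equation*}
\mathbf{V}_\rho^{\top}(\bm{z}_i-\bm{z}_j) = \mathbf{V}_\rho^{\top}\mathbf{R}\,(\bm{a}_i-\bm{a}_j),
\end{equation*}
so the entire statement is governed by how $\mathbf{V}_\rho^{\top}\mathbf{R}$ acts on $\bm{a}_i-\bm{a}_j$.

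Next I would record two facts. First, the representations $\bm{z}_i,\bm{z}_j$ produced by SSC/LSR/LRR lie in the row space $\text{range}(\mathbf{V}_\rho)$ of $\mathbf{X}$ (for LSR the closed form $\bm{z}_i=\lambda(\lambda\mathbf{X}^{\top}\mathbf{X}+\mathbf{I})^{-1}\mathbf{X}^{\top}\bm{x}_i$ makes this explicit), so that $\|\bm{z}_i-\bm{z}_j\| = \|\mathbf{V}_\rho^{\top}(\bm{z}_i-\bm{z}_j)\|$ since $\mathbf{V}_\rho$ has orthonormal columns. Second, the key probabilistic ingredient is precisely the subspace-embedding event behind Prop.~\ref{prop:full_range}: for $n=\mathcal{O}(\rho\frac{\log(\rho/\varepsilon)}{\varepsilon^2}f(\delta))$ the JLT $\mathbf{R}$ preserves every norm on the $\rho$-dimensional subspace $\text{range}(\mathbf{V}_\rho)$ with probability at least $1-\delta$; equivalently, all singular values of $\mathbf{V}_\rho^{\top}\mathbf{R}$ lie in $[\sqrt{1-\varepsilon},\sqrt{1+\varepsilon}]$. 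I would derive this from the finite-set guarantee of Def.~\ref{def:JLT} by a standard $\varepsilon$-net argument over the unit sphere of $\text{range}(\mathbf{V}_\rho)$, exactly as in the proof of Prop.~\ref{prop:full_range}.

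Combining the two, let $\Delta\bm{a}:=\bm{a}_i-\bm{a}_j$. The lower bound is immediate and needs no structural assumption on $\Delta\bm{a}$: from the reduction and the upper singular-value bound,
\begin{equation*}
\|\bm{z}_i-\bm{z}_j\| = \|\mathbf{V}_\rho^{\top}\mathbf{R}\,\Delta\bm{a}\| \leq \sqrt{1+\varepsilon}\,\|\Delta\bm{a}\|,
\end{equation*}
which rearranges to $\|\bm{a}_i-\bm{a}_j\|\geq (1+\varepsilon)^{-1/2}\|\bm{z}_i-\bm{z}_j\|$. For the upper bound I would use that $\Delta\bm{a}\in\text{range}((\mathbf{V}_\rho^{\top}\mathbf{R})^{\top})$, so the lower singular-value bound applies to give $\|\mathbf{V}_\rho^{\top}\mathbf{R}\,\Delta\bm{a}\|\geq\sqrt{1-\varepsilon}\,\|\Delta\bm{a}\|$, hence $\|\bm{a}_i-\bm{a}_j\|\leq(1-\varepsilon)^{-1/2}\|\bm{z}_i-\bm{z}_j\|$. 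Both inequalities hold on the single event of probability at least $1-\delta$, yielding the claim.

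The delicate step — and the one I expect to be the main obstacle — is justifying that $\Delta\bm{a}$ has no component in $\ker(\mathbf{V}_\rho^{\top}\mathbf{R})$, which is exactly what the upper bound requires (the lower bound needs nothing of the kind). For Sketch-LSR this is clean: with $\mathbf{B}=\mathbf{X}\mathbf{R}$ and $\bm{a}_i=\lambda(\lambda\mathbf{B}^{\top}\mathbf{B}+\mathbf{I})^{-1}\mathbf{B}^{\top}\bm{x}_i$, one has $\mathbf{B}^{\top}\bm{x}_i\in\text{range}(\mathbf{R}^{\top}\mathbf{V}_\rho)=\text{range}((\mathbf{V}_\rho^{\top}\mathbf{R})^{\top})$, while $(\lambda\mathbf{B}^{\top}\mathbf{B}+\mathbf{I})^{-1}$ leaves that subspace invariant, so each $\bm{a}_i$ and thus $\Delta\bm{a}$ stays in it. For Sketch-SSC and Sketch-LRR the solutions are not in closed form, so I would either argue the same invariance from the optimality/fixed-point conditions of the ADMM/ALM iterations (whose data-fitting gradient always enters through $\mathbf{B}^{\top}$, keeping the iterates in that subspace), or simply replace $\bm{a}_i$ by its minimum-norm representative, which does not alter $\bm{x}_i=\mathbf{B}\bm{a}_i$ and can only shrink $\|\Delta\bm{a}\|$, preserving the upper bound. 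The remaining pieces — the $\varepsilon$-net derivation of the singular-value bounds and the row-space membership of $\bm{z}_i$ — are routine given the earlier results.
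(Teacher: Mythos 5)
Your argument is correct and shares the paper's skeleton --- subtracting the defining identities, substituting the SVD, and left-multiplying by $\mathbf{\Sigma}_\rho^{-1}\mathbf{U}_\rho^{\top}$ to reach $\mathbf{V}_\rho^{\top}(\bm{z}_i-\bm{z}_j)=\mathbf{V}_\rho^{\top}\mathbf{R}(\bm{a}_i-\bm{a}_j)$ --- but the two proofs part ways at the last step, and yours is the more rigorous route. The paper erases $\mathbf{V}_\rho^{\top}$ from both sides (``$\mathbf{V}$ is orthonormal'') to get $\|\bm{z}_i-\bm{z}_j\|_2=\|\mathbf{R}(\bm{a}_i-\bm{a}_j)\|_2$ and then applies Def.~\ref{def:JLT} to $\mathbf{R}(\bm{a}_i-\bm{a}_j)$. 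Both moves are shaky: $\mathbf{V}_\rho^{\top}$ is an isometry only on $\text{range}(\mathbf{V}_\rho)$, and Def.~\ref{def:JLT} governs $\bm{x}^{\top}\mathbf{R}$ for $N$-dimensional $\bm{x}$, not $\mathbf{R}\bm{a}$ for $n$-dimensional $\bm{a}$ (for a Gaussian JLT, $\|\mathbf{R}\bm{a}\|_2$ concentrates near $\sqrt{N/n}\,\|\bm{a}\|_2$, so no near-isometry holds in that direction). You avoid both pitfalls by keeping $\mathbf{V}_\rho^{\top}\mathbf{R}$ intact and using its singular-value bounds; that fact is exactly Lemma~\ref{lemma:sval} of the appendix (cited from Sarl\'os and already used for Props.~\ref{prop:full_range} and \ref{prop:r_range}), so your proposed $\varepsilon$-net derivation is unnecessary. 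You also make explicit the two hypotheses the paper uses silently: $\bm{z}_i-\bm{z}_j\in\text{range}(\mathbf{V}_\rho)$, and, for the upper bound only, $\bm{a}_i-\bm{a}_j\perp\ker(\mathbf{V}_\rho^{\top}\mathbf{R})$; your closed-form invariance argument settles both for LSR/Sketch-LSR. The one soft spot is your SSC/LRR fallback: the soft-thresholding step of the ADMM iteration does not preserve membership in $\text{range}(\mathbf{R}^{\top}\mathbf{V}_\rho)$, and passing to the minimum-norm representative proves the bound for that representative rather than for the algorithm's actual output, so for those two cases you are really proving the claim under an implicit structural assumption on the solutions --- the same assumption the paper's proof makes without comment. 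Net, your proof patches genuine gaps in the original and is strictly tighter.
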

\noindent Proposition \ref{lemma:dist} also justifies the use of the $k$-nearest neighbor graph as a post-processing step in Sec.~\ref{ssec:knn}.

As will be seen in the ensuing section, the proposed approach has comparable performance to other high-accuracy SC approaches while requiring markedly less time. 

\section{Numerical Tests}\label{sec:numerical_tests}

The proposed method is validated in this section using real
datasets. Sketch-SC methods (termed throught this section as \emph{Sketch-SSC, Sketch-LSR} and \emph{Sketch-LRR}) are compared to SSC, LSR, LRR, the orthogonal matching pursuit method (OMP) for large-scale SC~\cite{VidalOMP}, as well as ORGEN~\cite{ORGEN}. When datasets are large ($N\gg$), the proposed methods are only compared to OMP and ORGEN. The figures of merit
evaluated are following.
%
\begin{itemize}
\item Accuracy, i.e., percentage of correctly clustered data:
\begin{equation*}
\text{Accuracy} := \frac{\text{number of data correctly clustered}}{N}\,.
\end{equation*}

\item Time (in seconds) required for clustering all data. For Algs.~\ref{alg} and \ref{alg:highD} this includes the time required to generate the JLT matrices $\mathbf{R}$, the time required for computing the products $\mathbf{B} = \mathbf{X}\mathbf{R}$, and in the case of Alg.~\ref{alg:highD} $\check{\mathbf{X}}=\check{\mathbf{R}}\mathbf{X}$, $\check{\mathbf{B}} = \check{\mathbf{X}}\mathbf{R}$, as well as the time required for Alg.~\ref{alg:knn}.
\end{itemize} 

All experiments were performed on a machine with an Intel Core-i$5$ $4570$ CPU with $16$GB of RAM. The software used to conduct all experiments is MATLAB~\cite{MATLAB:2015}. $K$-means and ANN were implemented using the VLfeat package~\cite{VLfeat}. All results represent the
averages of $10$ independent Monte Carlo runs. The regularization scalar $\lambda$ [cf. \eqref{eq:SSC}] of SSC and Sketch-SSC is computed as per~\cite[Prop. 1]{elhamifar2013SSC}, and it is controlled by a parameter $\alpha$. ORGEN has two parameters that need to be specified, namely $\lambda$ and $\alpha$. LRR and Sketch-LRR employ the $\ell_{2,1}$ norm for the residual $\mathbf{X} - \mathbf{X}\mathbf{Z}$. For LRR, LSR, Sketch-LRR, Sketch-LSR, OMP and ORGEN the parameters are tuned to optimize empirically the performance of each method considered. 
%
%
%

The real datasets tested are Hopkins 155~\cite{hopkins155}, the
Extended Yale Face dataset~\cite{yaleb}, the COIL-100 database~\cite{COIL100}, and the
MNIST handwritten digits dataset~\cite{MNIST}.
{
\subsection{Assessing the effect of different JLTs}
\label{ssec:simulations_jlts}
Before comparing the proposed scheme with  state-of-the-art competing alternatives, the effect of different JLT matrices on the SC task was tested on two datasets: the Extended Yale Face dataset and the COIL-100 database. The different $N\times n$ JLT matrices assessed are: matrices with i.i.d. $\pm1$ entries rescaled by $1/\sqrt{n}$ (denoted as \emph{Rademacher}); matrices with i.i.d. $\mathcal{N}(0,1)$ entries rescaled by $1/\sqrt{n}$ (denoted as \emph{Normal}); Sparse embedding matrices as described in~\cite{clarkson2013low,woodruff} (denoted as \emph{Sparse}); Fast JLTs using the Hadamard matrix as described in ~\cite{ailon2009fast} (denoted as \emph{Hadamard FJLT}). Fig.~\ref{fig:JLT_test} depicts the performance of Alg.~\ref{alg} for different choices of JLT for the two aforementioned datasets. All JLT matrices achieve comparable performance for the Yale Face database. However, this is not true for the COIL-100 dataset, where the Rademacher JLT seem to provide the most consistent performance.

For all tests in the rest of this section  Algs.~\ref{alg} and \ref{alg:highD} use random matrices $\mathbf{R}$, and $\check{\mathbf{R}}$ that are generated having i.i.d. $\pm1$ entries rescaled by $1/\sqrt{n}$.}
\begin{figure}[tb]
	\centering
	\begin{subfloat}[Extended Yale Face Database]{\includegraphics[width=0.5\columnwidth]{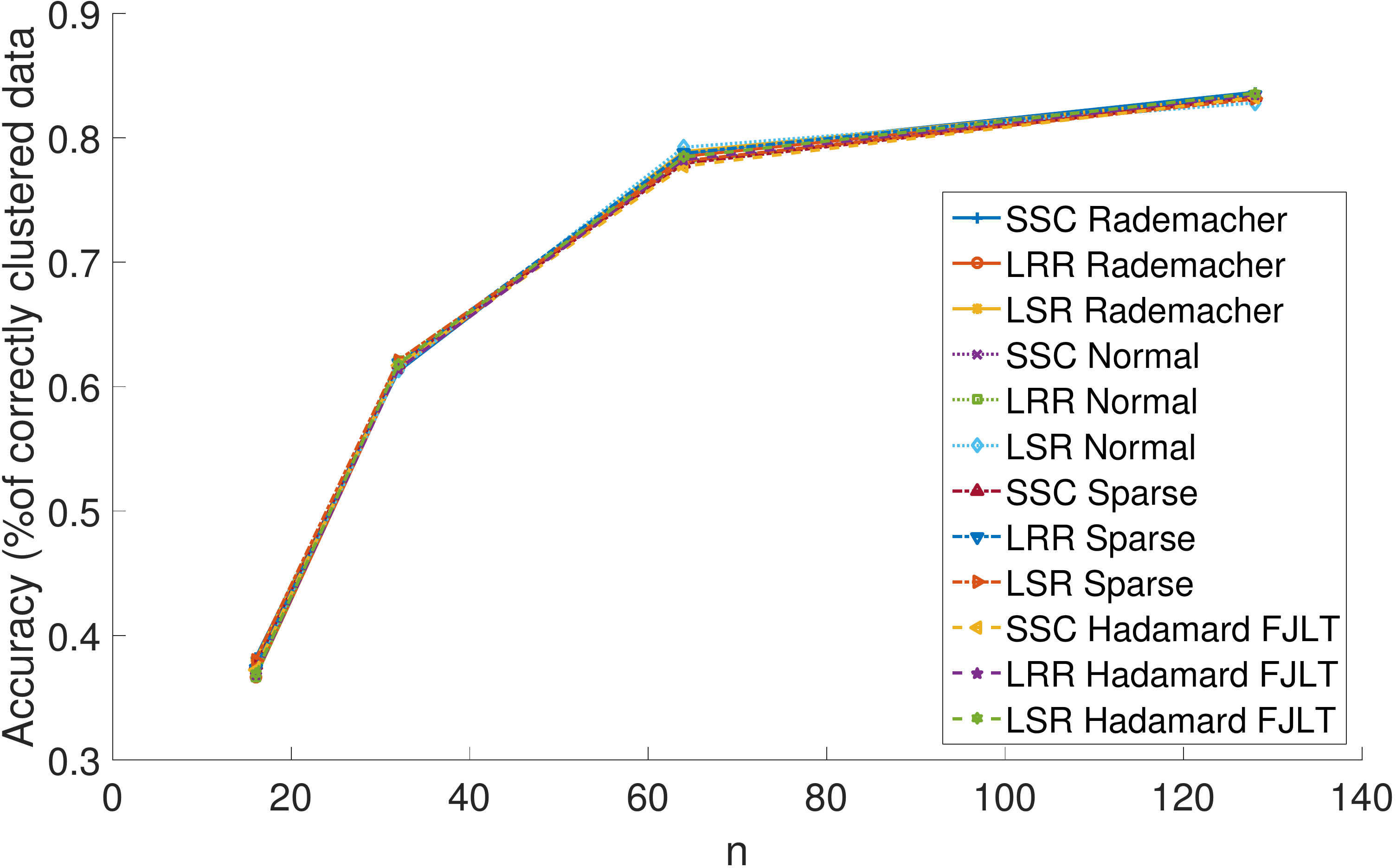} 
			\label{fig:JLT_Yale}}
	\end{subfloat}
	
	\begin{subfloat}[COIL-100]{\includegraphics[width=0.5\columnwidth]{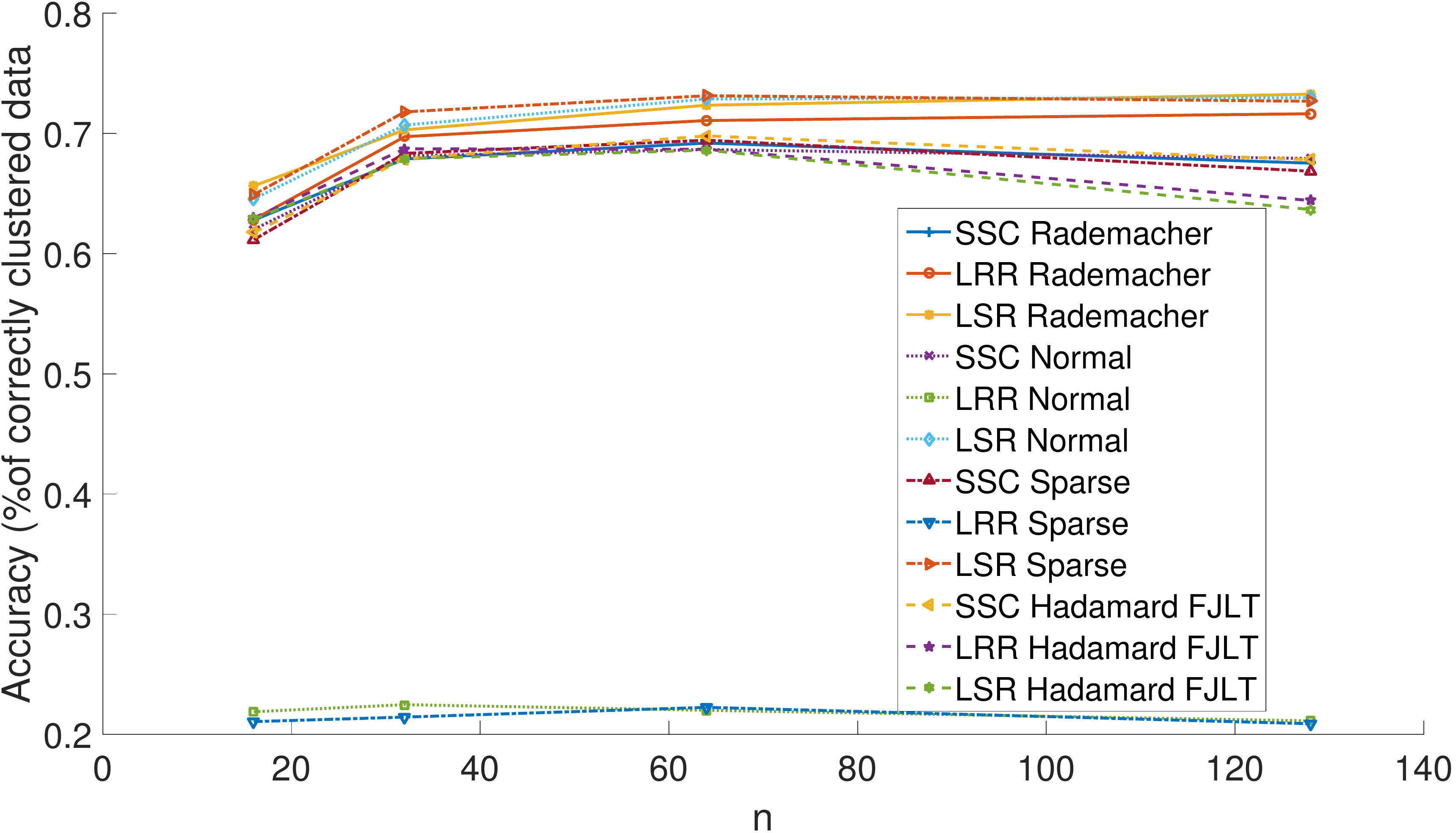}
			\label{fig:JLT_Coil}}
	\end{subfloat}
	
	\caption{Simulated tests on real datasets Extended Yale Face Database and COIL-100, evaluating the clustering performance with different JLT matrix $\mathbf{R}$.}\label{fig:JLT_test}
\end{figure}
\subsection{High volume of data}
\label{ssec:simulations_highvol}
In this section the performance of Sketch-SC (Alg.~\ref{alg}) is assessed on all datasets.
Hopkins 155 is a popular benchmark dataset for subspace clustering and motion segmentation. It contains 155 video sequences, with $N$ points tracked in each frame of a video sequence. Clusters ($K=2$ or $K=3$) represent different objects moving in the video sequence. The results
for the Hopkins 155 dataset are listed in Tab.~\ref{tab:Hopkins} for $K=2$ and $K=3$ clusters, with $n = 0.15N$ for the proposed methods. Here $\alpha = 800$ was used for SSC and $\alpha = 100$ for Sketch-SSC, $\lambda = 1$ for LRR and $\lambda = 10$ for Sketch-LRR, $\lambda = 4.6\cdot10^{-3}$ for LSR and Sketch-LSR. The number of nearest neighbors for Alg.~\ref{alg:knn} is set to $k=5$. As the size of the dataset is small, large computational gains are not expected by using Alg.~\ref{alg}. Nevertheless, the Sketch-SC methods achieve comparable accuracy to their batch counterparts, while in most cases (except one) requiring less time.

\begin{figure}[tb]
	\centering
	\begin{subfloat}[Clustering
		accuracy]{\includegraphics[width=0.5\columnwidth]{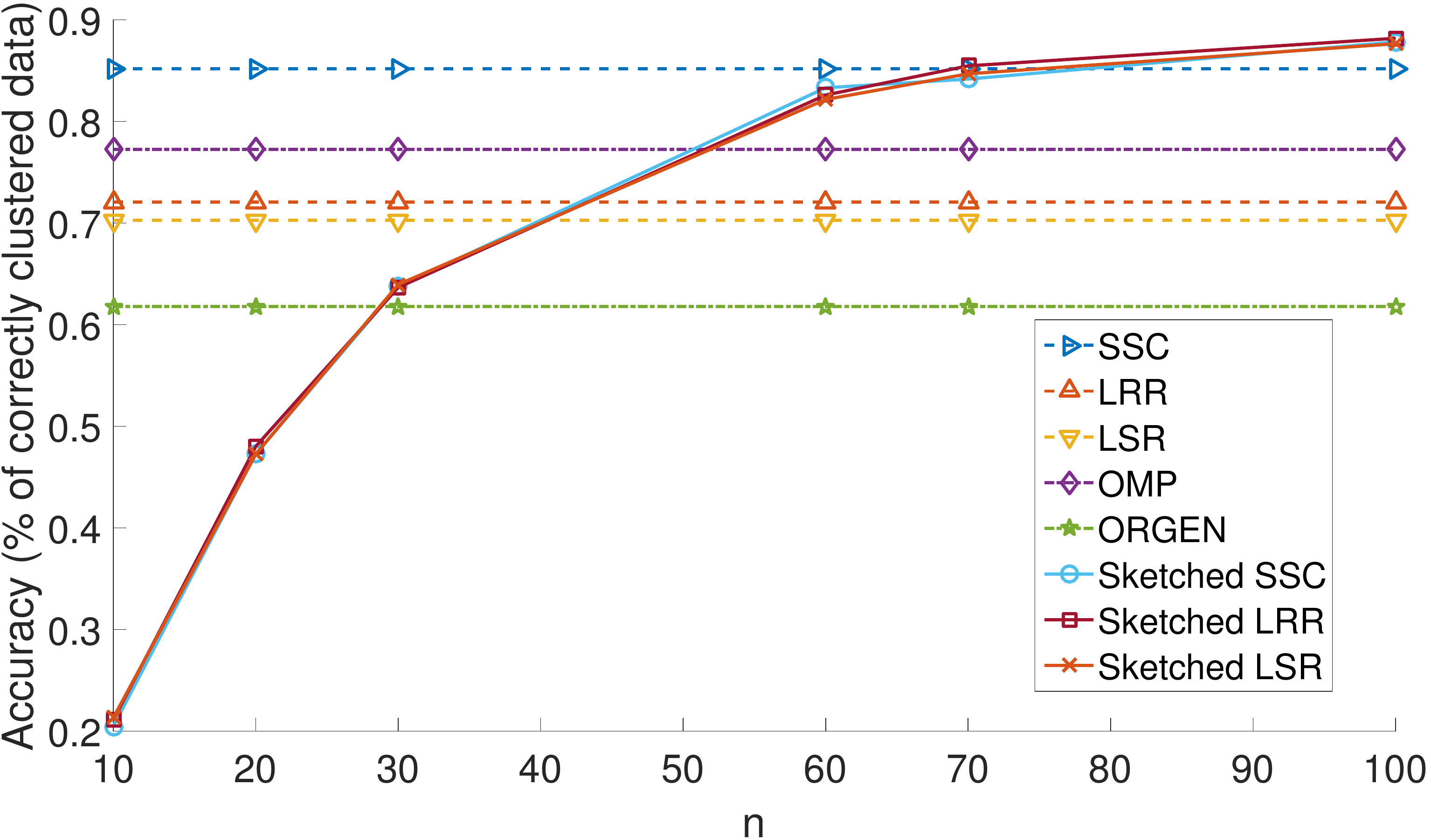} 
			\label{fig:Yale_accuracy}}
	\end{subfloat}
	
	\begin{subfloat}[Clustering time]{\includegraphics[width=0.5\columnwidth]{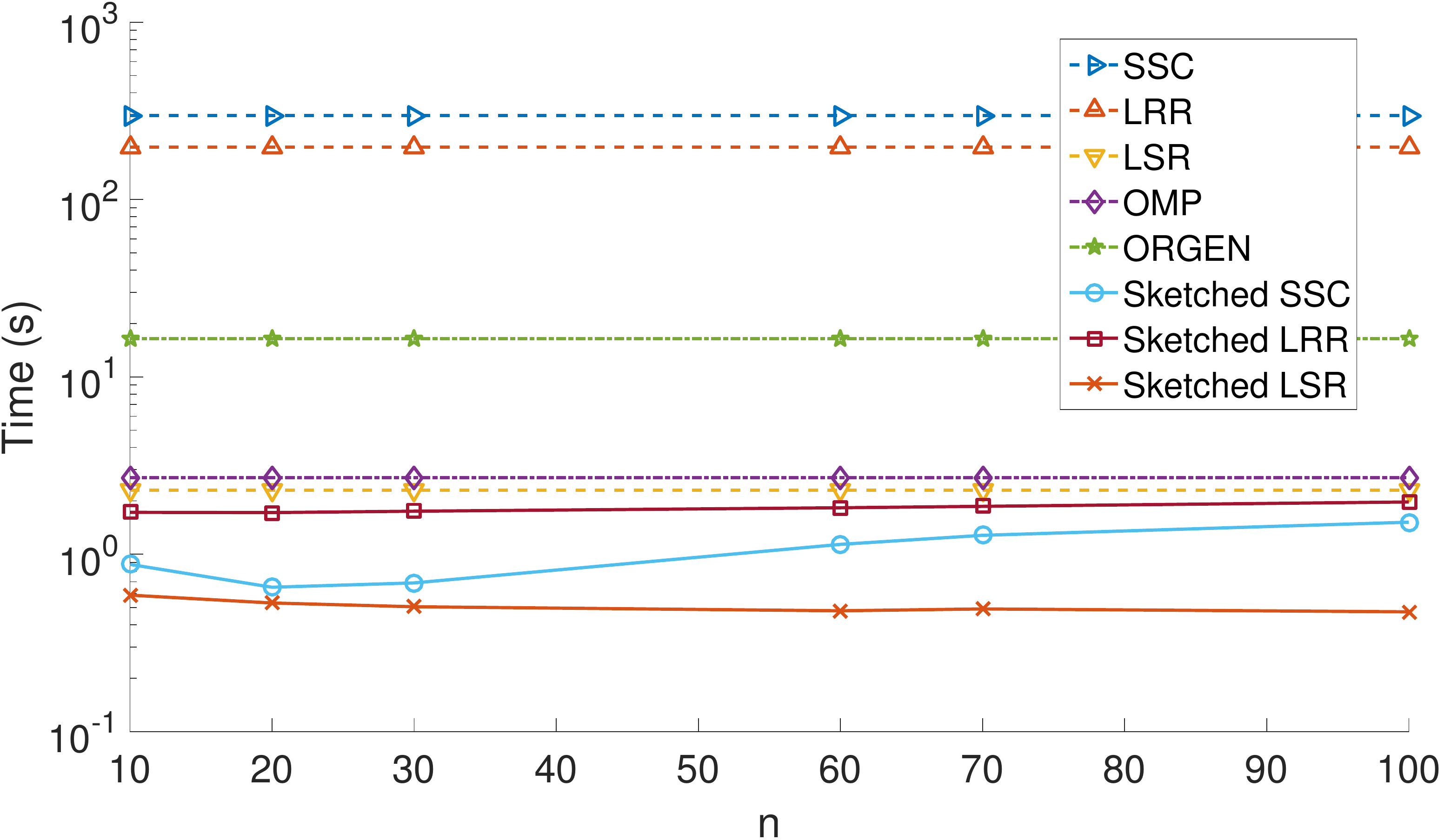}
			\label{fig:Yale_time}}
	\end{subfloat}
	
	\caption{Simulated tests on real dataset Extended Yale Face
		Database B, with $N=2,414$ data dimension $D=2,016$ and
		$K=38$ clusters for varying $n$.}\label{fig:yaleb}
\end{figure}

  \begin{table*}[tb]
  	\centering
  	 {
  	\begin{tabular}{ |c|c|c|c|c|c|c| }
  		\hline
  		\multicolumn{7}{|c|}{$K=2$} \\\hline
  		Algorithm & SSC & LRR & LSR & Sketch-SSC & Sketch-LRR & Sketch-LSR \\ \hline
  		Accuracy & $0.9839$ & $0.9723$ & $0.982$ & $\bf{0.946}$ & $\bf{0.9435}$ & $\bf{0.9319}$ \\  \hline
  		Time (s) & $0.6902$ & $0.9478$ & $0.093$ & $\bf{0.0795}$ & $\bf{0.0808}$ & $\bf{0.0787}$\\ 
  		\hline
  		\multicolumn{7}{|c|}{$K=3$} \\\hline
  		Algorithm & SSC & LRR & LSR & Sketch-SSC & Sketch-LRR & Sketch-LSR \\ \hline
  		Accuracy & $0.9747$ & $0.9253$ & $0.9654$ & $\bf{0.8942}$ & $\bf{0.9415}$ & $\bf{0.9242}$ \\  \hline
  		Time (s) & $1.566$ & $1.295$ & $0.1797$ & $\bf{0.1755}$ & $\bf{0.1459}$ & $\bf{0.1829}$\\ 
  		\hline
  	\end{tabular}
  	\bigskip
  	\caption{Results for $K=2$ and $K=3$ motions for the Hopkins155 dataset}
  	\label{tab:Hopkins}
  }
  \end{table*}
  
    \begin{table*}[tb]
    	\centering
    	{
    		\begin{tabular}{ |c | c |c|c|c|c|c| }
    			\hline
    			\multicolumn{2}{|c|}{Dataset}  & OMP & ORGEN  & Sketch-SSC & Sketch-LRR & Sketch-LSR \\ \hline
    			\multirow{ 2}{*}{MNIST}& Accuracy & $0.47049$ & $0.93788$  & $\bf{0.85825}$ & $\bf{0.90644}$ & $\bf{0.90784}$ \\  
    			& Time (s) & $502.91$ & $801.3954$  & $\bf{155.1017}$ & $\bf{156.7709}$ & $\bf{99.4724}$\\\hline 
    			\multirow{ 2}{*}{CoverType}& Accuracy & $0.4870$ & $0.4873$  & $\bf{0.42387}$  & $\bf{0.3277}$ & $\bf{0.4860}$ \\  
    			& Time (s) & $1.8947*10^4$ & $2.9893*10^4$  & $\bf{6064.8403}$ & $\bf{4468.5274}$ & $\bf{392.916}$\\\hline
    			\multirow{ 2}{*}{PokerHand}& Accuracy & $0.5009$ & \multirow{2}{*}{-}  & $\bf{0.5008}$ & $\bf{0.1833}$ & $\bf{0.44225}$ \\  
    			& Time (s) & $4.6654*10^4$ &   & $\bf{7.8*10^{3}}$ & $\bf{3.6*10^{4}}$ & $\bf{2.71*10^{3}}$\\\hline
    		\end{tabular}
    		\bigskip
    		\caption{Results for the Preprocessed MNIST dataset ($N=70,000$), the CoverType dataset ($N=581,012$) and the PokerHand dataset ($N=1,000,000$) }
    		\label{tab:Large_data}
    	}
    \end{table*}

The Extended Yale Face database contains $N=2,414$ face images of
$K=38$ people, each of dimension $D=2,016$. Fig.~\ref{fig:yaleb} shows the results for
this dataset for varying $n$, where $\alpha = 30$ for SSC and $\alpha = 50$ for Sketch-SSC, $\lambda = 0.15$ for LRR and  Sketch-LRR, $\lambda = 10^6$ for LSR and Sketch-LSR, the number of non-zeros per column of $\mathbf{Z}$ for OMP is set to $5$, while $\lambda=0.7$ and $\alpha = 200$ for ORGEN. The number of nearest neighbors for Alg.~\ref{alg:knn} is set to $k=5$. The proposed algorithms
 exhibit comparable accuracy to their batch counterparts, in particular SSC, and also achieve higher accuracy than the state-of-the-art large-scale algorithms OMP and ORGEN, as $n$ increases. Interestingly, with $n\approx0.03\cdot N$ the proposed methods achieve the accuracy of batch SSC. In addition, the proposed approach requires markedly less time than the batch methods, and less time than OMP and ORGEN as well.

The Columbia object-image dataset (COIL-100) contains $N=7,200$ images of size $32\times 32$ corresponding to $K=100$ objects. Each cluster corresponds to one object, and contains images of it from $72$ different angles. Fig.~\ref{fig:COIL} shows the comparisons on this dataset for varying $n$, where $\alpha = 25$ for SSC and $\alpha = 500$ for Sketch-SSC, $\lambda = 0.9$ for LRR and $\lambda = 10^{-4}$ for Sketch-LRR, $\lambda = 10^2$ for LSR and Sketch-LSR, the number of non-zeros per column of $\mathbf{Z}$ for OMP is set to $2$, while $\lambda=0.95$ and $\alpha = 3$ for ORGEN. The number of nearest neighbors for Alg.~\ref{alg:knn} is set to $k=5$. The proposed approaches exhibit performance comparable to the state-of-the-art as $n$ increases, while requiring significantly less time. Note that, OMP requires almost the same time as the proposed approaches, however its clustering performance is significantly lower.

{ 
	Fig.~\ref{fig:sval_coil_yale} plots the singular values of the Extended Yale Face Database and the COIL-$100$ dataset. For both, the largest singular values are approximately the first $70$ ones. Note that for the Extended Yale face database our proposed approaches attain their best performance for approximately $n=70$ yielding a compression ratio of $\frac{2414}{70}\approx 34.5$, while for the COIL-100 database  our proposed approaches reach their peak performance again for $n=70$, but this time the compression ratio is $\frac{7200}{70}\approx102.85$. This suggests that, indeed, datasets that exhibit low rank can be compressed with a lower $n$.
	}

\begin{figure}[tb]
	\centering
	\begin{subfloat}[Clustering
		accuracy]{\includegraphics[width=0.5\columnwidth]{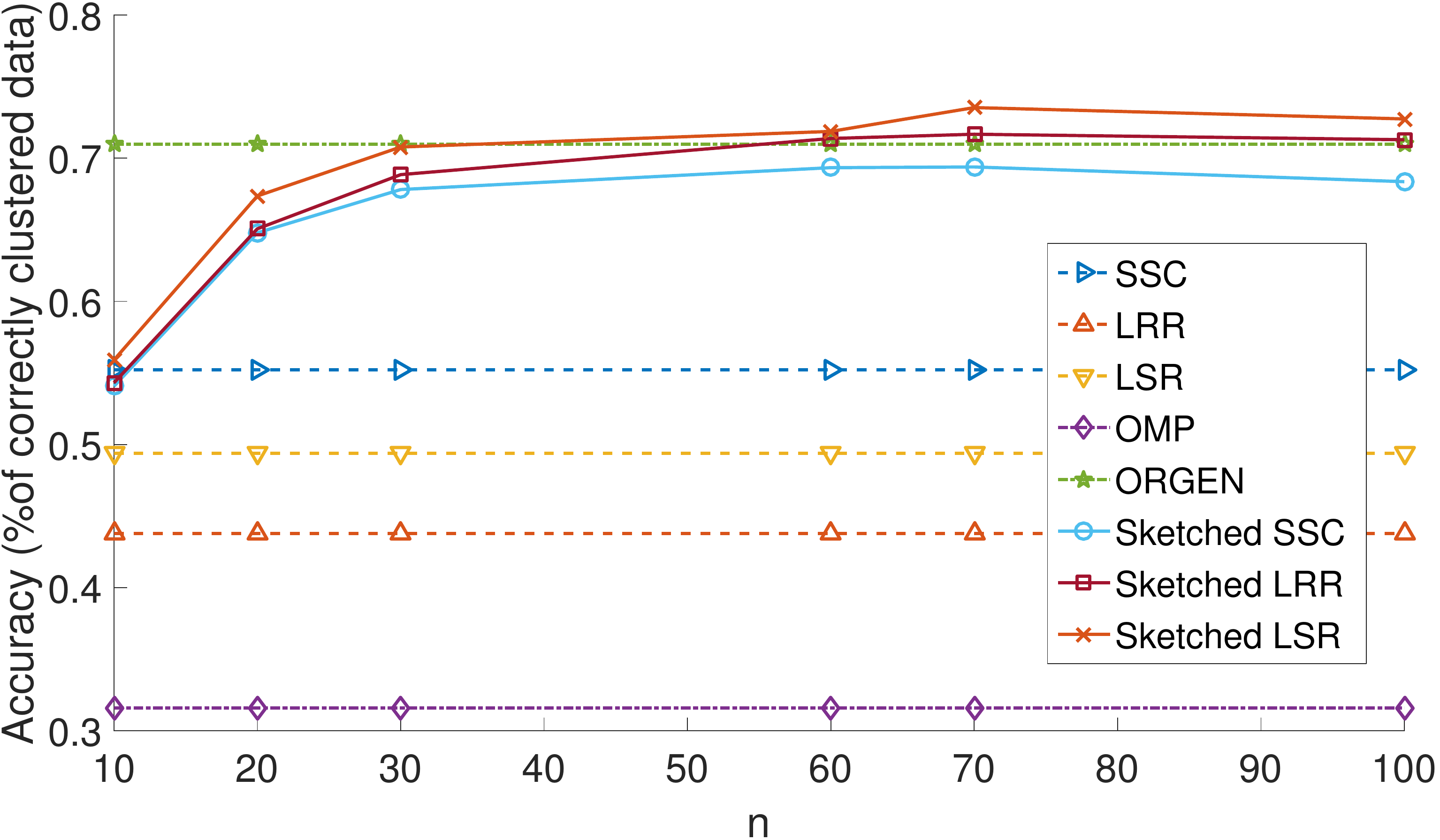} 
			\label{fig:COIL_accuracy}}
	\end{subfloat}
	
	\begin{subfloat}[Clustering time]{\includegraphics[width=0.5\columnwidth]{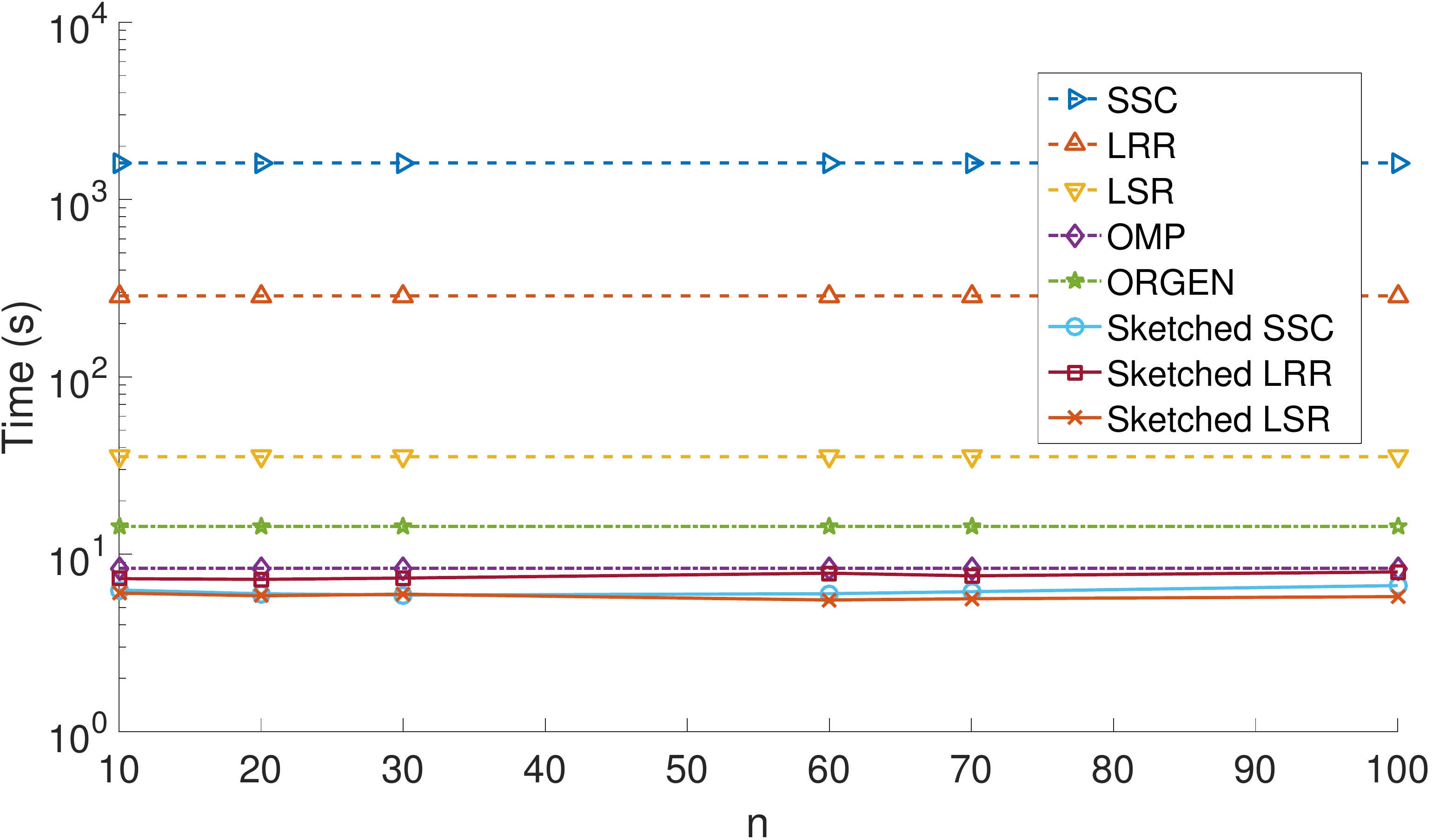}
			\label{fig:COIL_time}}
	\end{subfloat}
	
	\caption{Simulated tests on real dataset COIL-100, with $N=7,200$ data dimension $D=1,025$ and
		$K=100$ clusters for varying $n$.}\label{fig:COIL}
\end{figure}

\begin{figure}[tb]
	\centering
	\includegraphics[width=0.5\columnwidth]{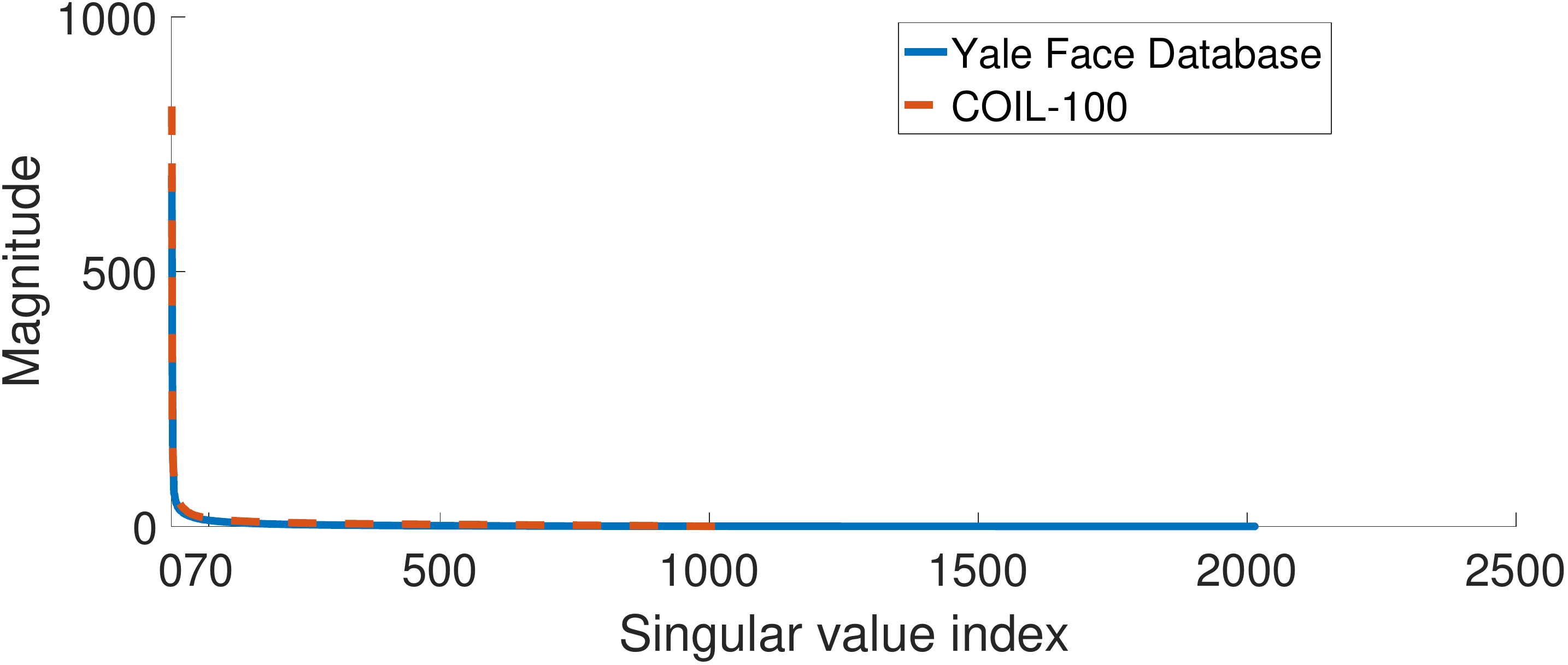}
	\caption{Singular value plots for the Extended Yale Face database and the COIL-100 dataset.}\label{fig:sval_coil_yale}
\end{figure}

{
Due to their large size, tests on the following three datasets compare Alg.~\ref{alg} only to OMP and ORGEN. The results for the following three datasets are listed in Tab.~\ref{tab:Large_data}. 
The MNIST dataset contains $70,000$ images of handwritten digits, each of dimension $28\times 28$, with $K=10$ clusters, one per digit. Here the dataset is preprocessed with a scattering convolutional network~\cite{ScatNet} and PCA to bring each image dimension down to $D=500$, as per \cite{VidalOMP,ORGEN}. Here $n=200$, $\alpha = 12,000$ for Sketch-SSC, $\lambda = 1$ for Sketch-LRR, $\lambda = 10^{-1}$ for Sketch-LSR, the number of non-zeros per column of $\mathbf{Z}$ for OMP is set to $10$, while $\lambda=0.95$ and $\alpha = 120$ for ORGEN. The number of nearest neighbors for Alg.~\ref{alg:knn} is set to $k=3$, and the set of nearest neighbors for each datum is found using the ANN implementation of the VLfeat package. In this scenario ORGEN showcases the best clustering performance, however Sketch-LRR and Sketch-LSR exhibit comparable accuracy, while requiring markedly less time. 

The CoverType dataset consists of $N=581,012$ data belonging to $K=7$ clusters. Each cluster corresponds to a different forest cover type. Data are vectors of dimension $D=54$ that contain cartographic variables, such as soil type, elevation, hillshade etc. Here $n=150$, $\alpha = 1$ for Sketch-SSC, $\lambda = 10^{-8}$ for Sketch-LRR, $\lambda = 10^4$ for Sketch-LSR, the number of non-zeros per column of $\mathbf{Z}$ for OMP is set to $15$, while $\lambda=0.95$ and $\alpha = 500$ for ORGEN. The number of nearest neighbors for Alg.~\ref{alg:knn} is set to $k=10$, and the set of nearest neighbors for each datum is found using the ANN implementation of the VLfeat package.

The PokerHand database contains $N=10^6$ data, belonging to $K=10$
classes. Each datum is a $5$-card hand drawn from a deck of $52$
cards, with each card being described by its suit (spades, hearts,
diamonds, and clubs) and rank (Ace, 2, 3, \ldots, Queen, King). Each class represents a valid Poker
hand. Here $n=30$, $\alpha = 10$ for Sketch-SSC, $\lambda = 1$ for Sketch-LRR, $\lambda = 10^2$ for Sketch-LSR, the number of non-zeros per column of $\mathbf{Z}$ for OMP is set to $10$. The number of nearest neighbors for Alg.~\ref{alg:knn} is set to $k=20$, and the set of nearest neighbors for each datum is found using the ANN implementation of the VLfeat package.
Results are not reported for ORGEN as the algorithm did not converge within $24$ hours. For both the CoverType and PokerHand datasets, most algorithms exhibit comparable accuracy, while Alg.~\ref{alg} requires again less time than OMP or ORGEN.
}
 
 \subsection{High-dimensional data}
 In this section, the performance of Sketch-SC approaches combined with randomized dimensionality reduction (Alg.~\ref{alg:highD}) is assessed, for the Extended Yale Face database.
 
 Fig.~\ref{fig:yaleb_dr} depicts the simulation results on the Extended Yale Face database, when performing dimensionality reduction, for varying $d$. Here Alg.~\ref{alg:highD}, with fixed $n=70$ is compared to its batch counterparts, OMP and ORGEN. LRR and Sketch-LRR are not included in this simulation as the algorithm failed for small values of $d$. All parameters are the same as the corresponding experiment in Sec.~\ref{ssec:simulations_highvol}. In this experiment, Sketch-LSR and Sketch-SSC outperform their competing alternatives in terms of clustering accuracy, while maintaining a low computational overhead. OMP also exhibits low computational time, at the expense of clustering accuracy. 
\begin{figure}[tb]
	\centering
	\begin{subfloat}[Clustering
		accuracy]{\includegraphics[width=0.5\columnwidth]{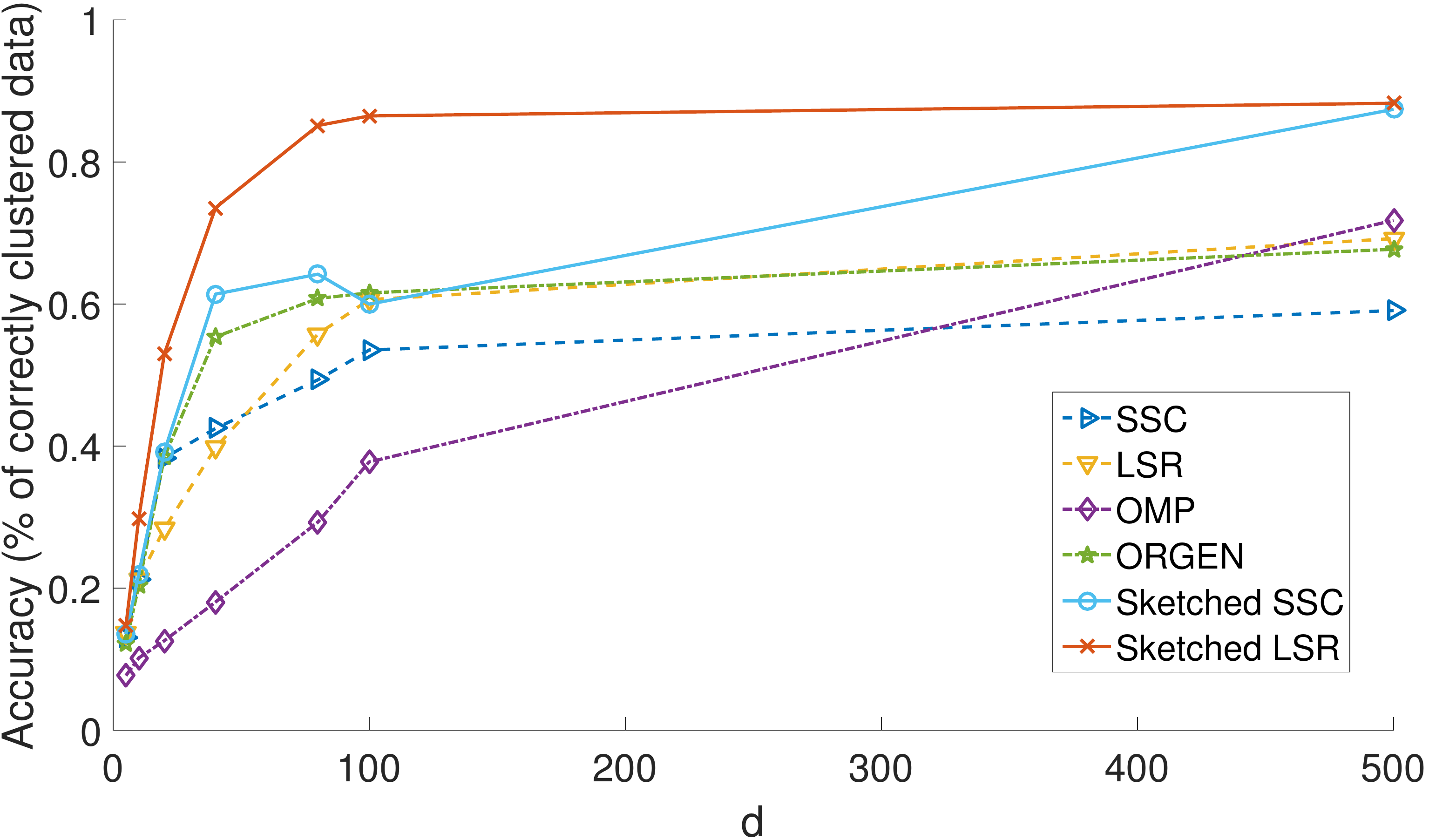} 
			\label{fig:Yale_accuracy_dr}}
	\end{subfloat}
	
	\begin{subfloat}[Clustering time]{\includegraphics[width=0.5\columnwidth]{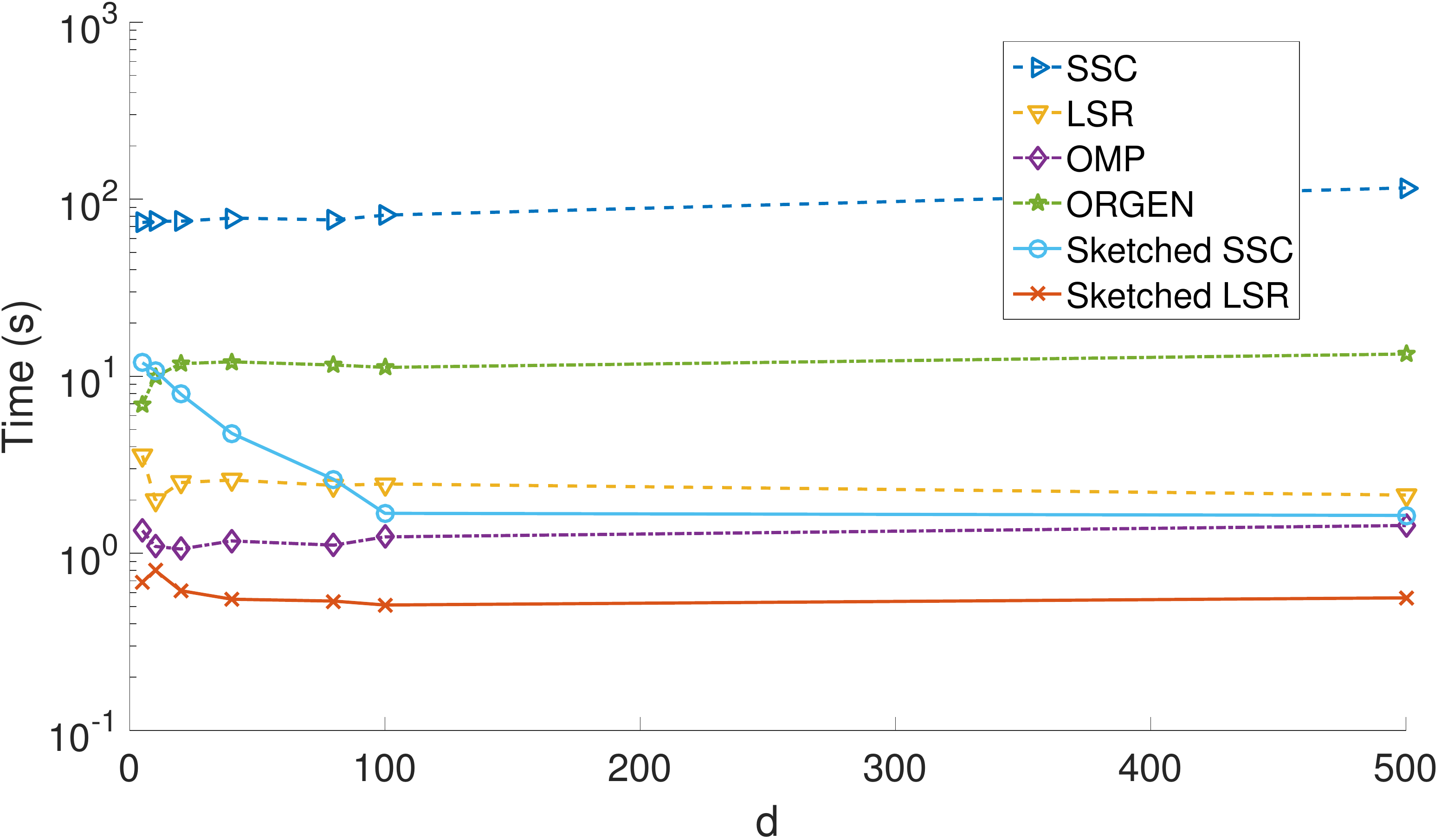}
			\label{fig:Yale_time_dr}}
	\end{subfloat}
	
	\caption{Simulated tests on real dataset Extended Yale Face
		Database B, with $N=2,414$ data dimension $D=2,016$ and
		$K=38$ clusters for varying $d$ and fixed $n=70$.}\label{fig:yaleb_dr}
\end{figure}

\section{Conclusions and future work}\label{sec:conclusion}
\squeezeup
The present paper introduced a novel data-reduction scheme for subspace clustering, namely
Sketch-SC, that enables grouping of data drawn from a union of
subspaces based on a random sketching approach for
fast, yet-accurate subspace clustering. Performance of the proposed scheme was evaluated both analytically and through simulated tests on multiple real datasets. Future research directions
will focus on the development of online and distributed Sketch-SC, able to handle not
only big, but also fast-streaming data. In addition, the sketched SC approach could be generalized to subspace clustering for tensor data.

\appendices
\section{Technical proofs}\label{app:proofs} 
\setcounter{proposition}{0}
\setcounter{corollary}{0}
\setcounter{theorem}{0}

\subsection{Supporting Lemmata}
\label{app:ssec:lemmata}
The following lemmata will be used to assist in the proofs of the propositions and theorems.
\begin{lemma}{\cite[Corollary 11]{sarlos}}\emph{
	\label{lemma:sval}
	Consider an  $N\times k$ orthonormal matrix $\mathbf{V}$ with $N\geq k$, and
   a JLT($\varepsilon,\delta,k$) matrix $\mathbf{R}$ of size $N\times n$. If $n=\mathcal{O}(k\frac{\log(k/\varepsilon)}{\varepsilon^2} f(\delta))$, then the following holds w.p. at least $1-\delta$
	\begin{equation}
	1 - \varepsilon \leq \sigma_i^2(\mathbf{V}^{\top}\mathbf{R})\leq 1 + \varepsilon \quad\text{ for } i=1,\ldots,k
	\end{equation}
	where $\sigma_i(\mathbf{V}^{\top}\mathbf{R})$ denotes the $i$-th singular value of $\mathbf{V}^{\top}\mathbf{R}$.
}
\end{lemma}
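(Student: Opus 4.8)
The plan is to translate the singular-value claim into a uniform norm-preservation statement over the $k$-dimensional subspace $\text{range}(\mathbf{V})$, and then to establish that statement by a covering-number (net) argument built on top of the per-vector JLT guarantee of Definition~\ref{def:JLT}. First I would reduce to a Rayleigh-quotient form. Since the $\sigma_i^2(\mathbf{V}^\top\mathbf{R})$ are exactly the eigenvalues of the $k\times k$ symmetric matrix $\mathbf{G} := \mathbf{V}^\top\mathbf{R}\mathbf{R}^\top\mathbf{V}$, and since $\mathbf{V}^\top\mathbf{V} = \mathbf{I}_k$, the desired conclusion $\sigma_i^2\in[1-\varepsilon,1+\varepsilon]$ for all $i$ is equivalent, by the Courant--Fischer characterization of eigenvalues, to $|\mathbf{z}^\top\mathbf{G}\mathbf{z} - 1|\le\varepsilon$ for every $\mathbf{z}$ on the unit sphere $S^{k-1}\subset\mathbb{R}^k$. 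Writing $\mathbf{z}^\top\mathbf{G}\mathbf{z} = \|(\mathbf{V}\mathbf{z})^\top\mathbf{R}\|_2^2$ and using $\|\mathbf{V}\mathbf{z}\|_2 = \|\mathbf{z}\|_2 = 1$, this is precisely the requirement that $\mathbf{R}$ preserve, up to a factor $1\pm\varepsilon$, the squared norm of every unit vector $\mathbf{V}\mathbf{z}$ in $\text{range}(\mathbf{V})$.

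Next I would fix a $\gamma$-net $\mathcal{N}$ of $S^{k-1}$ whose cardinality obeys the standard volumetric bound $|\mathcal{N}|\le(3/\gamma)^k$, and apply the JLT guarantee of Definition~\ref{def:JLT} to the finite set $\{\mathbf{V}\mathbf{z}:\mathbf{z}\in\mathcal{N}\}\subset\mathbb{R}^N$. Taking a union bound over its $|\mathcal{N}|$ elements, a choice of the form $n = \Omega\!\big(\tfrac{\log|\mathcal{N}|}{\varepsilon^2}f(\delta)\big) = \Omega\!\big(\tfrac{k}{\varepsilon^2}f(\delta)\big)$ guarantees that, with probability at least $1-\delta$, the two-sided bound $(1-\varepsilon')\le\|(\mathbf{V}\mathbf{z})^\top\mathbf{R}\|_2^2\le(1+\varepsilon')$ holds simultaneously for all net points $\mathbf{z}\in\mathcal{N}$, for a slightly sharpened tolerance $\varepsilon'<\varepsilon$. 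This is exactly the regime captured by the hypothesis $n=\mathcal{O}\big(k\tfrac{\log(k/\varepsilon)}{\varepsilon^2}f(\delta)\big)$, where the extra $\log(k/\varepsilon)$ factor reflects the finer net (with $\gamma$ scaling like $\varepsilon$) one uses to obtain the sharp constant.

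The final and most delicate step is to pass from the net to all of $S^{k-1}$. Let $M := \sup_{\mathbf{z}\in S^{k-1}}\big|\|(\mathbf{V}\mathbf{z})^\top\mathbf{R}\|_2^2 - 1\big|$; for an arbitrary unit $\mathbf{z}$, pick the nearest net point $\mathbf{z}_0$ with $\|\mathbf{z}-\mathbf{z}_0\|\le\gamma$ and expand the difference of quadratic forms via $\|\mathbf{B}\mathbf{z}\|^2-\|\mathbf{B}\mathbf{z}_0\|^2 = (\mathbf{z}-\mathbf{z}_0)^\top\mathbf{B}^\top\mathbf{B}(\mathbf{z}+\mathbf{z}_0)$ with $\mathbf{B}:=\mathbf{R}^\top\mathbf{V}$, which is controlled by $\|\mathbf{z}-\mathbf{z}_0\|$ times $\|\mathbf{V}^\top\mathbf{R}\|_{op}^2\le 1+M$. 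This yields the self-referential inequality $M\le\varepsilon' + 2\gamma(1+M)$, and solving it (with $\gamma,\varepsilon'$ chosen so that $2\gamma<1$) gives $M\le(\varepsilon'+2\gamma)/(1-2\gamma)\le\varepsilon$. The main obstacle is precisely this circularity: the deviation at a general point is bounded using the very operator norm we are trying to control, so the estimate closes only after the fixed-point inequality is resolved, and choosing the net resolution $\gamma$ and the per-point tolerance $\varepsilon'$ so as to absorb the slack into the target $\varepsilon$ is where the constants (and the $\log(k/\varepsilon)$ factor in $n$) must be tracked with care. Since the statement is quoted verbatim from \cite[Corollary 11]{sarlos}, one may alternatively invoke that reference directly in place of reproducing the net argument.
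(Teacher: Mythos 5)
Your proposal is correct in substance, but note that the paper itself never proves this lemma: it is imported verbatim from \cite[Corollary 11]{sarlos} and used as a black box in the appendix, so the paper's ``proof'' is the citation you mention in your last sentence. What you have written is a self-contained reconstruction of the standard argument behind such subspace-embedding results: the Courant--Fischer reduction to a uniform Rayleigh-quotient bound over $\text{range}(\mathbf{V})$, a $\gamma$-net of $S^{k-1}$ with $|\mathcal{N}|\le(3/\gamma)^k$, the JLT guarantee applied to the finite set $\{\mathbf{V}\bm{z}:\bm{z}\in\mathcal{N}\}$, and the fixed-point inequality $M\le\varepsilon'+2\gamma(1+M)$ to pass from the net to the whole sphere; all three steps are sound, and the self-referential step is resolved correctly. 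Two small remarks. First, the phrase ``taking a union bound'' is redundant under the paper's Def.~\ref{def:JLT}: the JLT guarantee is already stated for a $d$-element set with $n=\Omega(\varepsilon^{-2}\log d\, f(\delta))$, so you may apply it directly with $d=|\mathcal{N}|$; if instead you read that definition as a per-vector guarantee, the union bound must be absorbed into $\delta$ (hence into $f(\delta)$), which is where your constants would shift. Second, your net yields the requirement $n=\Omega(k\varepsilon^{-2}\log(1/\gamma)f(\delta))$ with $\gamma\asymp\varepsilon$, i.e.\ a $\log(1/\varepsilon)$ factor rather than the stated $\log(k/\varepsilon)$; this is harmless, since a smaller sufficient $n$ is consistent with the lemma's hypothesis. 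As for what each route buys: the citation keeps the paper short and defers constant-tracking to \cite{sarlos}, while your argument makes the appendix self-contained and exposes exactly how Def.~\ref{def:JLT} --- a statement about finitely many vectors --- upgrades to a statement about every vector in a $k$-dimensional subspace, which is the real content of Lemma~\ref{lemma:sval}.
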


\begin{lemma}{\cite[Lemma 8]{boutsidis}}
	\emph{
	\label{lemma:pseudo_transpose} Let $\varepsilon>0$, and consider the $n\times k$ orthonormal matrix $\mathbf{V}$ with $n>k$, as well as the $n\times r$ matrix $\mathbf{R}$, with $r>k$ satisfying $1 - \varepsilon \leq \sigma_i^2(\mathbf{V}^{\top}\mathbf{R})\leq 1 + \varepsilon$ for $i=1,\ldots,k$. It then holds deterministically that
	\begin{equation}
	\|(\mathbf{V}^{\top}\mathbf{R})^{\dagger} - (\mathbf{V}^{\top}\mathbf{R})^{\top}\|_2 \leq \frac{\varepsilon}{\sqrt{1-\varepsilon}}.
	\end{equation}}
\end{lemma}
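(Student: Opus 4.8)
The plan is to reduce the stated matrix inequality to a one-variable scalar estimate by diagonalizing everything through the singular value decomposition. Set $\mathbf{M} := \mathbf{V}^{\top}\mathbf{R}$, a $k\times r$ matrix with $k<r$. The first step is to observe that the hypothesis $1-\varepsilon \leq \sigma_i^2(\mathbf{M}) \leq 1+\varepsilon$ for $i=1,\ldots,k$ forces $\sigma_i \geq \sqrt{1-\varepsilon} > 0$ (for $\varepsilon<1$), so $\mathbf{M}$ has full row rank $k$. Writing its thin SVD as $\mathbf{M} = \mathbf{U}\mathbf{\Sigma}\mathbf{W}^{\top}$, where $\mathbf{U}$ is $k\times k$ orthogonal, $\mathbf{\Sigma} = \mathrm{diag}(\sigma_1,\ldots,\sigma_k)$ is $k\times k$, and $\mathbf{W}$ is $r\times k$ with orthonormal columns, the full-row-rank property guarantees the clean pseudoinverse formula $\mathbf{M}^{\dagger} = \mathbf{W}\mathbf{\Sigma}^{-1}\mathbf{U}^{\top}$.

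Next I would express both matrices appearing in the bound in these same factors. Since $\mathbf{M}^{\top} = \mathbf{W}\mathbf{\Sigma}\mathbf{U}^{\top}$, their difference factors as
\begin{equation*}
\mathbf{M}^{\dagger} - \mathbf{M}^{\top} = \mathbf{W}\left(\mathbf{\Sigma}^{-1} - \mathbf{\Sigma}\right)\mathbf{U}^{\top}.
\end{equation*}
Because $\mathbf{U}$ and $\mathbf{W}$ have orthonormal columns and the spectral norm is unitarily invariant, this collapses to $\|\mathbf{M}^{\dagger} - \mathbf{M}^{\top}\|_2 = \|\mathbf{\Sigma}^{-1} - \mathbf{\Sigma}\|_2 = \max_i |\sigma_i^{-1} - \sigma_i|$, turning the problem into a scalar estimate.

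Finally I would bound that scalar by writing $|\sigma_i^{-1} - \sigma_i| = |1-\sigma_i^2|/\sigma_i$ and invoking $|1-\sigma_i^2| \leq \varepsilon$ together with $\sigma_i \geq \sqrt{1-\varepsilon}$ to obtain $|\sigma_i^{-1}-\sigma_i| \leq \varepsilon/\sqrt{1-\varepsilon}$ uniformly in $i$, which is exactly the claimed bound. This argument is almost entirely bookkeeping; the one place demanding care is the very first step, where one must confirm that $\mathbf{M}$ has full row rank so that $\mathbf{M}^{\dagger}$, $\mathbf{M}^{\top}$, and $\mathbf{M}$ share the same orthonormal factors $\mathbf{U},\mathbf{W}$ and the pseudoinverse acts only by inverting the (nonzero) singular values. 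Without this observation the difference would not diagonalize simultaneously and the reduction to a scalar inequality would break down.
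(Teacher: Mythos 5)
Your proof is correct. Note, however, that the paper does not prove this lemma at all: it is imported verbatim as a cited result (Lemma~8 of the Boutsidis et al.\ reference), so there is no in-paper proof to compare against. Your argument --- reduce to the thin SVD $\mathbf{M}=\mathbf{U}\mathbf{\Sigma}\mathbf{W}^{\top}$, use full row rank to write $\mathbf{M}^{\dagger}=\mathbf{W}\mathbf{\Sigma}^{-1}\mathbf{U}^{\top}$, collapse $\|\mathbf{W}(\mathbf{\Sigma}^{-1}-\mathbf{\Sigma})\mathbf{U}^{\top}\|_2$ to $\max_i|\sigma_i^{-1}-\sigma_i|=\max_i|1-\sigma_i^2|/\sigma_i\le\varepsilon/\sqrt{1-\varepsilon}$ --- is exactly the standard proof given in that cited source, and your emphasis on verifying full row rank (so that $\mathbf{M}^{\dagger}$ and $\mathbf{M}^{\top}$ share the same singular-vector factors) is indeed the one step where the argument could otherwise break.
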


\subsection{Main proofs}
\setcounter{lemma}{1}
\begin{proposition}
	\emph{
	Let $\mathbf{X}$ be a $D\times N$ matrix such that $\text{rank}(\mathbf{X}) = \rho$, and define the $D\times n$ matrix $\mathbf{B}:=\mathbf{X}\mathbf{R}$, where $\mathbf{R}$ is a JLT($\varepsilon,\delta,D$) of size $N\times n$. If $n=\mathcal{O}(\rho\frac{\log(\rho/\varepsilon)}{\varepsilon^2}f(\delta))$ then w.p. at least $1-\delta$, it holds that 
	\begin{equation*}
	\text{range}(\mathbf{X}) = \text{range}(\mathbf{B}).
	\end{equation*}}
\end{proposition}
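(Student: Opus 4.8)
The plan is to reduce the claimed equality of ranges to a single rank computation and then control that rank through the singular value concentration of Lemma~\ref{lemma:sval}. First I would observe that the inclusion $\text{range}(\mathbf{B})\subseteq\text{range}(\mathbf{X})$ is immediate, since every column of $\mathbf{B}=\mathbf{X}\mathbf{R}$ is a linear combination of columns of $\mathbf{X}$. Because both column spaces are subspaces of $\mathbb{R}^{D}$, it then suffices to show that $\text{rank}(\mathbf{B})=\text{rank}(\mathbf{X})=\rho$: equality of dimensions together with the inclusion forces $\text{range}(\mathbf{B})=\text{range}(\mathbf{X})$.

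Next I would substitute the SVD $\mathbf{X}=\mathbf{U}_{\rho}\mathbf{\Sigma}_{\rho}\mathbf{V}_{\rho}^{\top}$ to write $\mathbf{B}=\mathbf{U}_{\rho}\mathbf{\Sigma}_{\rho}(\mathbf{V}_{\rho}^{\top}\mathbf{R})$. Since $\mathbf{U}_{\rho}$ has orthonormal columns and $\mathbf{\Sigma}_{\rho}$ is an invertible $\rho\times\rho$ diagonal matrix, left-multiplication by $\mathbf{U}_{\rho}\mathbf{\Sigma}_{\rho}$ is rank-preserving, so $\text{rank}(\mathbf{B})=\text{rank}(\mathbf{V}_{\rho}^{\top}\mathbf{R})$. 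The matrix $\mathbf{V}_{\rho}^{\top}\mathbf{R}$ is $\rho\times n$, hence its rank is at most $\rho$, and the remaining task is to certify that it is exactly $\rho$ with high probability.

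To do so I would invoke Lemma~\ref{lemma:sval} with the orthonormal matrix $\mathbf{V}=\mathbf{V}_{\rho}$ (of size $N\times\rho$, with $N\geq\rho$) and $k=\rho$. The hypothesis $n=\mathcal{O}(\rho\frac{\log(\rho/\varepsilon)}{\varepsilon^{2}}f(\delta))$ matches exactly the sample-size requirement of the lemma for $k=\rho$; here I would also note that a JLT($\varepsilon,\delta,D$) is in particular a JLT($\varepsilon,\delta,\rho$), since $\rho\leq D$ and the norm-preservation property over any $D$-element subset of $\mathbb{R}^{N}$ specializes to any $\rho$-element subset. Lemma~\ref{lemma:sval} then guarantees, w.p. at least $1-\delta$, that $\sigma_{i}^{2}(\mathbf{V}_{\rho}^{\top}\mathbf{R})\geq 1-\varepsilon$ for all $i=1,\dots,\rho$.

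Finally, for $\varepsilon<1$ the bound $\sigma_{i}^{2}\geq 1-\varepsilon>0$ shows that all $\rho$ singular values of $\mathbf{V}_{\rho}^{\top}\mathbf{R}$ are strictly positive, so $\text{rank}(\mathbf{V}_{\rho}^{\top}\mathbf{R})=\rho$ and therefore $\text{rank}(\mathbf{B})=\rho$ on this event. Combined with the trivial inclusion, this yields $\text{range}(\mathbf{B})=\text{range}(\mathbf{X})$ with probability at least $1-\delta$. The argument is essentially a clean invocation of the singular value lemma; the only points requiring care are the reduction $\text{rank}(\mathbf{B})=\text{rank}(\mathbf{V}_{\rho}^{\top}\mathbf{R})$ and the matching of the JLT parameter $D$ to the dimension $\rho$ actually needed, and I expect neither to pose a genuine obstacle.
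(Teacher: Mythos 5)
Your proof is correct and takes essentially the same route as the paper: both write $\mathbf{B}=\mathbf{U}_{\rho}\mathbf{\Sigma}_{\rho}(\mathbf{V}_{\rho}^{\top}\mathbf{R})$ and invoke Lemma~\ref{lemma:sval} to conclude that $\mathbf{V}_{\rho}^{\top}\mathbf{R}$ has full row rank $\rho$ with probability at least $1-\delta$. The only cosmetic difference is the closing step: you combine the trivial inclusion $\text{range}(\mathbf{B})\subseteq\text{range}(\mathbf{X})$ with a dimension count, whereas the paper right-multiplies by $(\mathbf{V}_{\rho}^{\top}\mathbf{R})^{\dagger}$ to exhibit $\mathbf{U}_{\rho}\mathbf{\Sigma}_{\rho}$ inside $\text{range}(\mathbf{B})$ --- both are immediate once the full-row-rank statement is established.
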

\begin{proof}
Let $\mathbf{X} = \mathbf{U}_\rho\mathbf{\Sigma}_\rho\mathbf{V}_\rho^{\top}$ be the SVD of $\mathbf{X}$. Since $\mathbf{V}_\rho$ is invertible and $\mathbf{\Sigma}_\rho$ is diagonal, it holds that 
\begin{equation}
\label{eq:range}
\text{range}(\mathbf{X}) = \text{range}(\mathbf{U}_\rho)
\end{equation}
i.e., the columns of $\mathbf{X}$ can be written as linear combinations of the columns of $\mathbf{U}_\rho$ and vice versa. 
Now consider $\mathbf{B} = \mathbf{X}\mathbf{R} = \mathbf{U}_\rho\mathbf{\Sigma}_\rho\mathbf{V}_\rho^{\top}\mathbf{R} = \mathbf{U}_\rho\mathbf{\Sigma}_\rho\tilde{\mathbf{V}}_\rho^{\top}$, where $\tilde{\mathbf{V}}_\rho := \mathbf{R}^{\top}\mathbf{V}_\rho$, which implies $\text{range}(\mathbf{B}) \subseteq \text{range}(\mathbf{U}_\rho)$. 
By Lemma \ref{lemma:sval} $\tilde{\mathbf{V}}_\rho^{\top} := \mathbf{V}_\rho^{\top}\mathbf{R}$ is full row rank w.p. at least $1-\delta$ and thus 
\begin{equation}
\mathbf{B}(\tilde{\mathbf{V}}^{\top})^{\dagger} = \mathbf{U}_\rho\mathbf{\Sigma}_\rho
\end{equation} 
which implies that $\text{range}(\mathbf{U}_\rho) = \text{range}(\mathbf{B}) = \text{range}(\mathbf{X})$, where the last equality is due to \eqref{eq:range}.
\end{proof}
\begin{proposition}
	\emph{
		Let $\mathbf{X}$ be a $D\times N$ matrix such that $\text{rank}(\mathbf{X}) = \rho$, and define the $D\times n$ matrix $\mathbf{B}:=\mathbf{X}\mathbf{R}$, where $\mathbf{R}$ is a JLT($\varepsilon,\delta,D$) of size $N\times n$. If $n=\mathcal{O}(r\frac{\log(r/\varepsilon)}{\varepsilon^2}f(\delta))$, then w.p. at least $1-2\delta$ it holds that
		\begin{equation*}
		\|\mathbf{B}(\mathbf{V}_r^{\top}\mathbf{R})^{\dagger} - \mathbf{U}_r\mathbf{\Sigma}_r\|_F \leq (\varepsilon\frac{\sqrt{1+\varepsilon}}{\sqrt{1-\varepsilon}}+1+\varepsilon)\|\bar{\mathbf{X}}_{r}\|_F.
		\end{equation*}
	}
\end{proposition}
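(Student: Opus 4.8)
The plan is to collapse the left-hand side to a single residual term and then bound it with the two supporting lemmata. First I would use the SVD splitting $\mathbf{X} = \mathbf{X}_r + \bar{\mathbf{X}}_{r}$ with $\mathbf{X}_r = \mathbf{U}_r\mathbf{\Sigma}_r\mathbf{V}_r^{\top}$, so that $\mathbf{B} = \mathbf{X}\mathbf{R} = \mathbf{U}_r\mathbf{\Sigma}_r(\mathbf{V}_r^{\top}\mathbf{R}) + \bar{\mathbf{X}}_{r}\mathbf{R}$. Right-multiplying by $(\mathbf{V}_r^{\top}\mathbf{R})^{\dagger}$ and observing that, by Lemma~\ref{lemma:sval}, the $r\times n$ matrix $\mathbf{V}_r^{\top}\mathbf{R}$ has full row rank $r$ w.p. at least $1-\delta$ (its squared singular values lie in $[1-\varepsilon,1+\varepsilon]$), the product $(\mathbf{V}_r^{\top}\mathbf{R})(\mathbf{V}_r^{\top}\mathbf{R})^{\dagger}$ collapses to $\mathbf{I}_r$. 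This yields the key identity $\mathbf{B}(\mathbf{V}_r^{\top}\mathbf{R})^{\dagger} - \mathbf{U}_r\mathbf{\Sigma}_r = \bar{\mathbf{X}}_{r}\mathbf{R}(\mathbf{V}_r^{\top}\mathbf{R})^{\dagger}$, reducing the task to bounding $\|\bar{\mathbf{X}}_{r}\mathbf{R}(\mathbf{V}_r^{\top}\mathbf{R})^{\dagger}\|_F$.

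Next I would replace the pseudoinverse by the transpose plus a correction, writing $(\mathbf{V}_r^{\top}\mathbf{R})^{\dagger} = (\mathbf{V}_r^{\top}\mathbf{R})^{\top} + [(\mathbf{V}_r^{\top}\mathbf{R})^{\dagger} - (\mathbf{V}_r^{\top}\mathbf{R})^{\top}]$ and applying the triangle inequality. For the transpose term, submultiplicativity $\|\mathbf{M}\mathbf{N}\|_F \leq \|\mathbf{M}\|_F\|\mathbf{N}\|_2$ gives $\|\bar{\mathbf{X}}_{r}\mathbf{R}(\mathbf{V}_r^{\top}\mathbf{R})^{\top}\|_F \leq \|\bar{\mathbf{X}}_{r}\mathbf{R}\|_F\,\|\mathbf{V}_r^{\top}\mathbf{R}\|_2$, where $\|\mathbf{V}_r^{\top}\mathbf{R}\|_2 = \sigma_1(\mathbf{V}_r^{\top}\mathbf{R}) \leq \sqrt{1+\varepsilon}$ again by Lemma~\ref{lemma:sval}. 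For the correction term, Lemma~\ref{lemma:pseudo_transpose} supplies $\|(\mathbf{V}_r^{\top}\mathbf{R})^{\dagger} - (\mathbf{V}_r^{\top}\mathbf{R})^{\top}\|_2 \leq \varepsilon/\sqrt{1-\varepsilon}$, so that $\|\bar{\mathbf{X}}_{r}\mathbf{R}[(\mathbf{V}_r^{\top}\mathbf{R})^{\dagger} - (\mathbf{V}_r^{\top}\mathbf{R})^{\top}]\|_F \leq \|\bar{\mathbf{X}}_{r}\mathbf{R}\|_F\,\varepsilon/\sqrt{1-\varepsilon}$.

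Both bounds hinge on controlling $\|\bar{\mathbf{X}}_{r}\mathbf{R}\|_F$, which I would handle via the JLT norm-preservation property applied to the $D$ rows of $\bar{\mathbf{X}}_{r}$: since $\mathbf{R}$ is a JLT$(\varepsilon,\delta,D)$, each $1\times N$ row $\bm{\eta}^{\top}$ obeys $\|\bm{\eta}^{\top}\mathbf{R}\|_2^2 \leq (1+\varepsilon)\|\bm{\eta}\|_2^2$ simultaneously w.p. at least $1-\delta$, whence summing over rows gives $\|\bar{\mathbf{X}}_{r}\mathbf{R}\|_F \leq \sqrt{1+\varepsilon}\,\|\bar{\mathbf{X}}_{r}\|_F$. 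Substituting this into the two terms produces $(1+\varepsilon)\|\bar{\mathbf{X}}_{r}\|_F$ and $\varepsilon\frac{\sqrt{1+\varepsilon}}{\sqrt{1-\varepsilon}}\|\bar{\mathbf{X}}_{r}\|_F$, respectively, and adding them recovers the claimed constant. A union bound over the two probabilistic events, namely the singular-value guarantee of Lemma~\ref{lemma:sval} on $\mathbf{V}_r$ and the row-norm-preservation event, accounts for the $1-2\delta$ confidence. I expect the main obstacle to be the bookkeeping of which norm ($\|\cdot\|_2$ versus $\|\cdot\|_F$) to assign to each factor so that the constants emerge exactly, together with verifying that a single JLT$(\varepsilon,\delta,D)$ draw simultaneously furnishes both the Lemma~\ref{lemma:sval} bound on $\mathbf{V}_r$ and the row-wise preservation; the remainder is a direct chain of triangle and submultiplicative inequalities.
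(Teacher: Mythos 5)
Your proposal is correct and follows essentially the same route as the paper's proof: the same decomposition $\mathbf{B}(\mathbf{V}_r^{\top}\mathbf{R})^{\dagger} - \mathbf{U}_r\mathbf{\Sigma}_r = \bar{\mathbf{X}}_{r}\mathbf{R}(\mathbf{V}_r^{\top}\mathbf{R})^{\dagger}$, the same add-and-subtract of $(\mathbf{V}_r^{\top}\mathbf{R})^{\top}$, and the same three ingredients (Lemma~\ref{lemma:sval} for full row rank and $\|(\mathbf{V}_r^{\top}\mathbf{R})^{\top}\|_2 \leq \sqrt{1+\varepsilon}$, Lemma~\ref{lemma:pseudo_transpose} for the pseudoinverse--transpose gap, and the JLT property for $\|\bar{\mathbf{X}}_{r}\mathbf{R}\|_F \leq \sqrt{1+\varepsilon}\|\bar{\mathbf{X}}_{r}\|_F$), with the identical union bound yielding $1-2\delta$. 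No gaps; the norm bookkeeping you flagged as a concern is handled exactly as you describe.
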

\begin{proof}
	From the first part of the proof of Prop.~\ref{prop:full_range} we have that range($\mathbf{B}$) $\subseteq$ range($\mathbf{U}_\rho$). Now consider
	\begin{equation}
	\label{eq:prop2_D1}
	\mathbf{B} = \mathbf{X}\mathbf{R} = \mathbf{U}_r\mathbf{\Sigma}_r\mathbf{V}_r^{\top}\mathbf{R} + \bar{\mathbf{U}}_{r}\bar{\mathbf{\Sigma}}_{r}\bar{\mathbf{V}}_{r}^{\top}\mathbf{R}
	\end{equation}
	By Lemma~\ref{lemma:sval} $\mathbf{V}_r^{\top}\mathbf{R}$ is full row rank w.p. at least $1-\delta$; thus, right multiplying \eqref{eq:prop2_D1} with $(\mathbf{V}_r^{\top}\mathbf{R})^{\dagger}$  yields
	$\mathbf{B}(\mathbf{V}_r^{\top}\mathbf{R})^{\dagger} = \mathbf{U}_r\mathbf{\Sigma}_r + \bar{\mathbf{U}}_{r}\bar{\mathbf{\Sigma}}_{r}\bar{\mathbf{V}}_{r}^{\top}\mathbf{R}(\mathbf{V}_r^{\top}\mathbf{R})^{\dagger}$, or
	\begin{alignat*}{1}
	& \mathbf{B}(\mathbf{V}_r^{\top}\mathbf{R})^{\dagger} - \mathbf{U}_r\mathbf{\Sigma}_r = \bar{\mathbf{U}}_{r}\bar{\mathbf{\Sigma}}_{r}\bar{\mathbf{V}}_{r}^{\top}\mathbf{R}(\mathbf{V}_r^{\top}\mathbf{R})^{\dagger} 
	\end{alignat*}
	which upon substituting $\bar{\mathbf{X}}_{r}$ boils down to
		\begin{alignat}{1}
		\label{eq:prop2_D2}
		& \mathbf{B}(\mathbf{V}_r^{\top}\mathbf{R})^{\dagger} - \mathbf{U}_r\mathbf{\Sigma}_r  \\ & =\bar{\mathbf{X}}_{r}\mathbf{R}(\mathbf{V}_r^{\top}\mathbf{R})^{\dagger} - \bar{\mathbf{X}}_{r}\mathbf{R}(\mathbf{V}_r^{\top}\mathbf{R})^{\top} +  \bar{\mathbf{X}}_{r}\mathbf{R}(\mathbf{V}_r^{\top}\mathbf{R})^{\top.} \notag
		\end{alignat}
	Using the triangle inequality, and the spectral submultiplicativity of the Frobenius norm, yields
	\begin{alignat}{1}
	\label{eq:prop2_D3}
	& \|\mathbf{B}(\mathbf{V}_r^{\top}\mathbf{R})^{\dagger} - \mathbf{U}_r\mathbf{\Sigma}_r\|_F  \\ & = \|\bar{\mathbf{X}}_{r}\mathbf{R}\left((\mathbf{V}_r^{\top}\mathbf{R})^{\dagger} - (\mathbf{V}_r^{\top}\mathbf{R})^{\top}\right)\|_F +  \|\bar{\mathbf{X}}_{r}\mathbf{R}(\mathbf{V}_r^{\top}\mathbf{R})^{\top}\|_F \notag \\ 
	& \leq \|\bar{\mathbf{X}}_{r}\mathbf{R}\|_F~\|(\mathbf{V}_r^{\top}\mathbf{R})^{\dagger} - (\mathbf{V}_r^{\top}\mathbf{R})^{\top}\|_2  \notag\\ & +  \|\bar{\mathbf{X}}_{r}\mathbf{R}\|_F~\|(\mathbf{V}_r^{\top}\mathbf{R})^{\top}\|_2. \notag
	\end{alignat}
	We have from Def.~\ref{def:JLT} $\|\bar{\mathbf{X}}_{r}\mathbf{R}\|_F \leq \sqrt{1+\varepsilon}\|\bar{\mathbf{X}}_{r}\|_F$ w.p. at least $1-\delta$, while Lemma \ref{lemma:sval} ensures  $\|(\mathbf{V}_r^{\top}\mathbf{R})^{\top}\|_2\leq\sqrt{1+\varepsilon}$ w.p. at least $1-\delta$. Since Lemma~\ref{lemma:pseudo_transpose} also implies that $\|(\mathbf{V}_r^{\top}\mathbf{R})^{\dagger} - (\mathbf{V}_r^{\top}\mathbf{R})^{\top}\|_2\leq \frac{\varepsilon}{\sqrt{1-\varepsilon}}$ we arrive at [cf.~\ref{eq:prop2_D3}]
	\begin{equation}
	\|\mathbf{B}(\mathbf{V}_r^{\top}\mathbf{R})^{\dagger} - \mathbf{U}_r\mathbf{\Sigma}_r\|_F \leq (\varepsilon\frac{\sqrt{1+\varepsilon}}{\sqrt{1-\varepsilon}}+1+\varepsilon)\|\bar{\mathbf{X}}_{r}\|_F.
	\end{equation}
\end{proof}
\begin{theorem}\emph{ Consider noise-free and normalized data vectors obeying \eqref{eq:allpoints} with $\bm{v}_i \equiv 0$, to form columns of a $D\times N$ data matrix $\mathbf{X}$, with unit $\ell_2$ norm per column, and $\text{rank}(\mathbf{X}) = \rho$. Let also $\mathbf{R}$ denote JLT($\varepsilon,\delta,D$) of size $N\times n$. Let $\bm{g}^*(\bm{x}) := \mathbf{X}\bm{a}^* = \bm{x}$ denote the ground-truth representation of $\bm{x}$, and $\hat{\bm{g}}(\bm{x}) := \mathbf{X}\mathbf{R}\hat{\bm{a}}$ the representation given by Sketch-LSR. If $n=\mathcal{O}(r\frac{\log(r/\varepsilon)}{\varepsilon^2}f(\delta))$, then the following bound holds w.p. at least $1-2\delta$
	\begin{equation*}
	\|\bm{g}^{*}(\bm{x}) - \hat{\bm{g}}(\bm{x})\|_2 \leq \lambda~(1 + \sqrt{\frac{{1+\varepsilon}}{1-\varepsilon}}~\sqrt{\rho - r}~\sigma_{r+1}^2) + \frac{1}{\sqrt{1-\varepsilon}}
	\end{equation*}
	with $\lambda$ as in \eqref{eq:general_sc_eq}, and $\sigma_{r+1}$ denotes the $(r+1)$st singular value of $\mathbf{X}$.}
\end{theorem}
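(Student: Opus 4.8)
The plan is to bound the Sketch-LSR reconstruction error by comparing $\hat{\bm g}(\bm x) = \mathbf B\hat{\bm a}$ against the target $\bm g^*(\bm x) = \bm x$, using Proposition~\ref{prop:r_range} as the workhorse. First I would record the two structural facts about the Sketch-LSR minimizer $\hat{\bm a} = \lambda(\lambda\mathbf B^\top\mathbf B + \mathbf I)^{-1}\mathbf B^\top\bm x$: its first-order optimality condition $\hat{\bm a} = \lambda\mathbf B^\top(\bm x - \mathbf B\hat{\bm a})$, which is what injects the factor $\lambda$ into the final estimate, and the energy bound $\|\bm x - \mathbf B\hat{\bm a}\| \le \|\bm x\| = 1$ obtained by comparing the objective at $\hat{\bm a}$ with its value at $\bm 0$.

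The core of the argument reuses the identity established inside the proof of Proposition~\ref{prop:r_range}. Writing $\mathbf C := (\mathbf V_r^\top\mathbf R)^\dagger$ and splitting $\mathbf X = \mathbf X_r + \bar{\mathbf X}_r$, Lemma~\ref{lemma:sval} guarantees, on an event of probability at least $1-\delta$, that $\mathbf V_r^\top\mathbf R$ has full row rank, so that $\mathbf X_r\mathbf R\mathbf C = \mathbf U_r\mathbf{\Sigma}_r$ exactly, while $\mathbf B\mathbf C = \mathbf U_r\mathbf{\Sigma}_r + \bar{\mathbf X}_r\mathbf R\mathbf C$. The plan is then to decompose $\bm x - \mathbf B\hat{\bm a}$ through $\mathbf C$ into a captured top-$r$ part and a tail part: the captured part is controlled by $\|\mathbf U_r\mathbf U_r^\top\bm x\| \le 1$, and the tail part by $\|\bar{\mathbf X}_r\mathbf R\mathbf C\|_F \le \|\bar{\mathbf X}_r\mathbf R\|_F\,\|\mathbf C\|_2$. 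For the last product I would invoke Definition~\ref{def:JLT} to get $\|\bar{\mathbf X}_r\mathbf R\|_F \le \sqrt{1+\varepsilon}\,\|\bar{\mathbf X}_r\|_F$ and Lemma~\ref{lemma:sval} to get $\|\mathbf C\|_2 \le 1/\sqrt{1-\varepsilon}$, which together yield $\|\bar{\mathbf X}_r\mathbf R\mathbf C\|_F \le \sqrt{(1+\varepsilon)/(1-\varepsilon)}\,\|\bar{\mathbf X}_r\|_F \le \sqrt{(1+\varepsilon)/(1-\varepsilon)}\,\sqrt{\rho-r}\,\sigma_{r+1}$, where the last step uses $\|\bar{\mathbf X}_r\|_F^2 = \sum_{i>r}\sigma_i^2 \le (\rho-r)\sigma_{r+1}^2$.

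To assemble the stated bound I would then feed the optimality identity back in: because $\hat{\bm a}$ carries the factor $\lambda$ and $\|\bar{\mathbf X}_r\|_2 = \sigma_{r+1}$, multiplying the tail estimate by $\lambda\,\|\bar{\mathbf X}_r\|_2$ promotes the single $\sigma_{r+1}$ to $\sigma_{r+1}^2$ and produces the term $\lambda\sqrt{(1+\varepsilon)/(1-\varepsilon)}\,\sqrt{\rho-r}\,\sigma_{r+1}^2$, while the captured part contributes the bare $\lambda$; the residual of $\bm x$ orthogonal to the retained sketch, bounded by $\|\mathbf C\|_2\,\|\bm x\| \le 1/\sqrt{1-\varepsilon}$, supplies the additive $1/\sqrt{1-\varepsilon}$. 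A single triangle inequality then collects the three summands into the claimed right-hand side. Finally, the probability $1-2\delta$ follows from a union bound over the two high-probability events used above, namely the singular-value control of $\mathbf V_r^\top\mathbf R$ from Lemma~\ref{lemma:sval} and the Frobenius-norm concentration $\|\bar{\mathbf X}_r\mathbf R\|_F \le \sqrt{1+\varepsilon}\,\|\bar{\mathbf X}_r\|_F$ from Definition~\ref{def:JLT}, each holding with probability at least $1-\delta$.

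I expect the main obstacle to be bookkeeping of the factor $\lambda$ and the exact constants rather than any deep step. Since the bound is deliberately loose (it exceeds the trivial estimate $\|\bm x - \mathbf B\hat{\bm a}\| \le 1$ for most parameter values), several of the triangle-inequality splits can be carried out in more than one way, and pinning down which combination reproduces exactly the factor $\sqrt{(1+\varepsilon)/(1-\varepsilon)}$ on the tail and the constant $1/\sqrt{1-\varepsilon}$ (as opposed to the $1/\sqrt{1+\varepsilon}$ appearing in the body's version of the statement) is the delicate part. A secondary subtlety is ensuring the top-$r$/tail decomposition of $\bm x - \mathbf B\hat{\bm a}$ is compatible with the regularized, rather than projective, action of $\mathbf B$ through $\hat{\bm a}$, so that no spurious $1/\sigma_r$-type factors leak into the final expression.
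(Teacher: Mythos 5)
Your ingredients are the right ones (Lemma~\ref{lemma:sval}, the JLT Frobenius-norm concentration, the identity $\mathbf{X}_r\mathbf{R}(\mathbf{V}_r^{\top}\mathbf{R})^{\dagger}=\mathbf{U}_r\mathbf{\Sigma}_r$ from Prop.~\ref{prop:r_range}, and the union bound giving $1-2\delta$), but the step that actually produces the inequality is missing. The paper's proof is a \emph{candidate-comparison} argument: rotate by $\mathbf{U}$, split the objective into $\frac{\lambda}{2}T_1^2+\frac{\lambda}{2}T_2^2+T_3^2$, and plug in the explicit candidate $\bm{a}=\tilde{\mathbf{V}}_r(\tilde{\mathbf{V}}_r^{\top}\tilde{\mathbf{V}}_r)^{-1}\mathbf{\Sigma}_r^{-1}\bm{\chi}_r$ (equivalently $(\mathbf{V}_r^{\top}\mathbf{R})^{\dagger}\mathbf{\Sigma}_r^{-1}\mathbf{U}_r^{\top}\bm{x}$), which annihilates $T_1$; the quantities you list ($\|\bar{\mathbf{X}}_r\mathbf{R}\|_F\leq\sqrt{1+\varepsilon}\|\bar{\mathbf{X}}_r\|_F$, $\|(\mathbf{V}_r^{\top}\mathbf{R})^{\dagger}\|_2\leq 1/\sqrt{1-\varepsilon}$, $\|\mathbf{\Sigma}_r^{-1}\bm{\chi}_r\|_2\leq 1$) are then bounds on $T_2$ and $T_3$ \emph{at that candidate}, and the theorem follows because the minimizer $\hat{\bm{a}}$ achieves an objective value no larger than the candidate's. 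Your proposal never introduces this candidate: you work with $\hat{\bm{a}}$ itself and then ``assemble'' the bound by declaring that the captured part ``contributes the bare $\lambda$'' and that ``multiplying the tail estimate by $\lambda\|\bar{\mathbf{X}}_r\|_2$ promotes $\sigma_{r+1}$ to $\sigma_{r+1}^2$.'' Neither step is a checkable inequality: multiplying an upper bound on $\|\bar{\mathbf{X}}_r\mathbf{R}\mathbf{C}\|_F$ by $\lambda\sigma_{r+1}$ bounds the quantity $\lambda\sigma_{r+1}\|\bar{\mathbf{X}}_r\mathbf{R}\mathbf{C}\|_F$, not the residual $\|\bm{x}-\mathbf{B}\hat{\bm{a}}\|_2$, and no chain of inequalities in your sketch connects the two. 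This is reverse engineering of the target formula (including the paper's $\sigma_{r+1}^2$, which in fact arises there from the estimate $\|\bar{\mathbf{X}}_r\|_F\leq\sqrt{\rho-r}\,\|\bar{\mathbf{X}}_r\|_2$ with $\|\bar{\mathbf{X}}_r\|_2=\sigma_{r+1}$), not a derivation.

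The two structural facts you lead with also point the wrong way. The first-order condition $\hat{\bm{a}}=\lambda\mathbf{B}^{\top}(\bm{x}-\mathbf{B}\hat{\bm{a}})$ bounds $\|\hat{\bm{a}}\|_2$ \emph{in terms of} the residual, which is the wrong direction when the residual is what must be bounded; in the paper the factor $\lambda$ enters through the weight $\lambda/2$ on the fitting term when objective values are compared, not through stationarity. And the closed form $\hat{\bm{a}}=\lambda(\lambda\mathbf{B}^{\top}\mathbf{B}+\mathbf{I})^{-1}\mathbf{B}^{\top}\bm{x}$ leads, via the push-through identity, to $\bm{x}-\mathbf{B}\hat{\bm{a}}=(\mathbf{I}+\lambda\mathbf{B}\mathbf{B}^{\top})^{-1}\bm{x}$, so pursuing that route would require lower bounds on the spectrum of $\mathbf{B}\mathbf{B}^{\top}$ along $\text{range}(\mathbf{X})$ --- a genuinely different analysis you do not carry out. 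The fix is small, since you already invoke objective comparison to get $\|\bm{x}-\mathbf{B}\hat{\bm{a}}\|_2\leq 1$ from the zero candidate: apply the same comparison to the nontrivial candidate above, i.e.\ $\frac{\lambda}{2}\|\bm{x}-\mathbf{B}\hat{\bm{a}}\|_2^2\leq\frac{\lambda}{2}\|\bm{x}-\mathbf{B}\bm{a}\|_2^2+\|\bm{a}\|_2^2$, and then insert exactly the estimates you derived for the tail term and for $\|\bm{a}\|_2$; that is the paper's proof. (Your observation that the in-text statement carries $1/\sqrt{1+\varepsilon}$ while the appendix version carries $1/\sqrt{1-\varepsilon}$ is correct --- the appendix constant is the one consistent with the $T_3$ bound --- but noticing the discrepancy does not substitute for the missing comparison step.)
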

\begin{proof}
	The proof will follow the steps in~\cite{yang2015randomized}. Consider the Sketch-LSR objective for $\bm{x}$, namely
	\begin{equation}\label{eq:prop1}
	\frac{\lambda}{2}\|\bm{x} - \mathbf{X}\mathbf{R}\bm{a}\|_2^2 + \|\bm{a}\|_2^2
	\end{equation}
	and the SVD $\mathbf{X} = \mathbf{U}\mathbf{\Sigma}\mathbf{V}^{\top}$. As $\mathbf{U}$ is unitary, minimizing \eqref{eq:prop1} is equivalent to minimizing 
	\begin{equation}\label{eq:prop2}
	\frac{\lambda}{2}\|\mathbf{U}^{\top}\bm{x} - \mathbf{\Sigma}\tilde{\mathbf{V}}^{\top}\bm{a}\|_2^2 + \|\bm{a}\|_2^2
	\end{equation}
	where $\tilde{\mathbf{V}}^{\top} := \mathbf{V}^{\top}\mathbf{R}$. Now, decompose the dataset as 
	\begin{equation}
	\label{eq:thm_decomp}
	\begin{aligned}
	\mathbf{X} =  \mathbf{X}_r + \bar{\mathbf{X}}_{r}
	\end{aligned}
	\end{equation}
	where  $\mathbf{X}_r := \mathbf{U}_r\mathbf{\Sigma}_r\mathbf{V}_r^{\top}$ and  $\bar{\mathbf{X}}_{r} := \bar{\mathbf{U}}_{r}\bar{\mathbf{\Sigma}}_{r}\bar{\mathbf{V}}_{r}^{\top}$. Using \eqref{eq:thm_decomp}, we can rewrite \eqref{eq:prop2} as 
	\begin{equation}\label{eq:prop3}
	\frac{\lambda}{2}\underbrace{\|\bm{\chi}_r - \mathbf{\Sigma}_r\tilde{\mathbf{V}}_r^{\top}\bm{a}\|_2^2}_{:=T_1^2} +
	\frac{\lambda}{2}\underbrace{\|\bar{\bm{\chi}}_{r} - \bar{\mathbf{\Sigma}}_{r}\bar{\tilde{\mathbf{V}}}_{r}^{\top}\bm{a}\|_2^2}_{:=T_2^2} + \underbrace{\|\bm{a}\|_2^2}_{:=T_3^2}
	\end{equation}
	where $\bm{\chi}_r := \mathbf{U}_r^{\top}\bm{x}$, and $\bar{\bm{\chi}}_{r} := \bar{\mathbf{U}}_{r}^{\top}\bm{x}$. Selecting $\bm{a}$ as 
	\begin{equation*}
	\bm{a} = \tilde{\mathbf{V}}_r(\tilde{\mathbf{V}}_r^{\top}\tilde{\mathbf{V}}_r)^{-1}\mathbf{\Sigma}_r^{-1}\bm{\chi}_r
	\end{equation*}
	$T_1^2$ vanishes, and $T_2$ reduces to
	\begin{equation}
	T_2 = \|\bar{\bm{\chi}}_{r} - \bar{\mathbf{\Sigma}}_{r}\bar{\tilde{\mathbf{V}}}_{r}^{\top}\tilde{\mathbf{V}}_r(\tilde{\mathbf{V}}_r^{\top}\tilde{\mathbf{V}}_r)^{-1}\mathbf{\Sigma}_r^{-1}\bm{\chi}_r\|_2.
	\end{equation}
	The triangle inequality and the submultiplicativity of the $\ell_2$ norm, allows us to bound $T_2$ as
	\begin{equation}
	\label{eq:thm_T2bnd}
	\begin{aligned}
	& T_2 \leq \|\bar{\bm{\chi}}_{r}\|_2  \\ & + \|\bar{\mathbf{\Sigma}}_{r}\bar{\tilde{\mathbf{V}}}_{r}^{\top}\|_2~\|\tilde{\mathbf{V}}_r(\tilde{\mathbf{V}}_r^{\top}\tilde{\mathbf{V}}_r)^{-1}\|_2\|\mathbf{\Sigma}_r^{-1}\bm{\chi}_r\|_2.
	\end{aligned}
	\end{equation}
	Now note that $\|\bar{\mathbf{\Sigma}}_{r}\bar{\tilde{\mathbf{V}}}_{ r}^{\top}\|_2 \leq \|\bar{\mathbf{\Sigma}}_{r}\bar{\tilde{\mathbf{V}}}_{ r}^{\top}\|_F = \|\bar{\mathbf{U}}_{r}\bar{\mathbf{\Sigma}}_{r}\bar{\tilde{\mathbf{V}}}_{ r}^{\top}\|_F = \|\bar{\mathbf{X}}_{r}\mathbf{R}\|_F$ and recall from Def.~\ref{def:JLT} that $\|\bar{\mathbf{X}}_{r}\mathbf{R}\|_F \leq \sqrt{1+\varepsilon}\|\bar{\mathbf{X}}_{r}\|_F\leq \sqrt{1+\varepsilon}\sqrt{\rho-r}\|\bar{\mathbf{X}}_{r}\|_2 \leq \sqrt{1+\varepsilon}\sqrt{\rho-r}~\sigma_{r+1}^2$ w.p. at least $1-\delta$. By Lemma~\ref{lemma:sval} $\tilde{\mathbf{V}}_r^{\top} = \mathbf{V}_r^{\top}\mathbf{R}$ is full row rank w.p. at least $1-\delta$; thus, $\tilde{\mathbf{V}}_r(\tilde{\mathbf{V}}_r^{\top}\tilde{\mathbf{V}}_r)^{-1} = \tilde{\mathbf{V}}_r^{\dagger}$, and $\|\tilde{\mathbf{V}}_r^{\dagger}\|_2\leq \frac{1}{\sqrt{1-\varepsilon}}$. Furthermore, $\|\mathbf{\Sigma}_r^{-1}\bm{\chi}_r\|_2 = \|\mathbf{V}_r\mathbf{\Sigma}_r^{-1}\mathbf{U}_r^{\top}\bm{x}\|_2\leq 1$, and $\|\bar{\bm{\chi}}_{r}\|_2 = \|\bar{\mathbf{U}}_{r}^{\top}\bm{x}\|_2 = 1$. Similarly, $T_3$ in \eqref{eq:prop3} can be bounded w.p. at least $1-\delta$ due to Lemma \ref{lemma:sval} as
	\begin{alignat}{2 }
	\label{eq:thm_T3bnd}
	& T_3  && = \|\tilde{\mathbf{V}}_r(\tilde{\mathbf{V}}_r^{\top}\tilde{\mathbf{V}}_r)^{-1}\mathbf{\Sigma}_r^{-1}\bm{\chi}_r\|_2 = \|\tilde{\mathbf{V}}_r^{\dagger}\mathbf{\Sigma}_r^{-1}\bm{\chi}_r \|_2 \\
	& && \leq \|\tilde{\mathbf{V}}_r^{\dagger}\|_2~\|\mathbf{\Sigma}_r^{-1}\bm{\chi}_r\|_2\leq \frac{1}{\sqrt{1 - \varepsilon}} \notag
	\end{alignat}
	 Finally, since the chosen $\bm{a}$ in \eqref{eq:prop3} satisfies \eqref{eq:thm_T2bnd} and \eqref{eq:thm_T3bnd}, so will do any minimizer $\hat{\bm{a}}$ of \eqref{eq:prop1}.
\end{proof}
{
\begin{corollary}
	\emph{
		Consider the setting of Thm.~\ref{thm:LSR}, and let $\hat{\bm{g}}(\bm{x}) := \mathbf{X}\mathbf{R}\hat{\bm{a}}$ be the representation of a datum given by Sketch-SSC.
		The following bound holds w.p. at least $1-2\delta$
		\begin{equation*}
		\|\bm{g}^{*}(\bm{x}) - \hat{\bm{g}}(\bm{x})\|_2 \leq \lambda~(1 + \sqrt{\frac{{1+\varepsilon}}{1-\varepsilon}}~\sqrt{\rho - r}~\sigma_{r+1}^2) + \sqrt{\frac{n}{{1-\varepsilon}}}
		\end{equation*}
		with $\lambda$ as in \eqref{eq:general_sc_eq}, and $\sigma_{r+1}$ denotes the $(r+1)$st singular value of $\mathbf{X}$.}
\end{corollary}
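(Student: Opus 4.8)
The plan is to replicate the proof of Theorem~\ref{thm:LSR} essentially line by line, altering only the treatment of the regularization term. First I would write the per-column Sketch-SSC objective $\frac{\lambda}{2}\|\bm{x} - \mathbf{X}\mathbf{R}\bm{a}\|_2^2 + \|\bm{a}\|_1$, substitute the SVD $\mathbf{X} = \mathbf{U}\mathbf{\Sigma}\mathbf{V}^{\top}$, and exploit the unitary invariance of $\mathbf{U}$ to pass to the reduced objective expressed in the $\mathbf{\Sigma}\tilde{\mathbf{V}}^{\top}$ coordinates, where $\tilde{\mathbf{V}}^{\top} := \mathbf{V}^{\top}\mathbf{R}$. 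Splitting at rank $r$ exactly as in the theorem produces the same fitting terms $T_1$ and $T_2$, which are untouched by the change of regularizer.

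The key observation is that the candidate $\bm{a} = \tilde{\mathbf{V}}_r(\tilde{\mathbf{V}}_r^{\top}\tilde{\mathbf{V}}_r)^{-1}\mathbf{\Sigma}_r^{-1}\bm{\chi}_r$ remains admissible here, since the Sketch-SSC program in~\eqref{eq:sketched_SSC} imposes no constraints on $\bm{a}$. With this choice $T_1$ again vanishes, and the same chain of inequalities, invoking Def.~\ref{def:JLT} and Lemma~\ref{lemma:sval}, reproduces verbatim the bound $T_2 \leq 1 + \sqrt{\frac{1+\varepsilon}{1-\varepsilon}}\,\sqrt{\rho-r}\,\sigma_{r+1}^2$. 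Consequently the only term that must be reworked is the regularizer, and this is precisely where the $\ell_1$ norm enters.

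Where Theorem~\ref{thm:LSR} controls the penalty through $\|\bm{a}\|_2 \leq \frac{1}{\sqrt{1-\varepsilon}}$ in~\eqref{eq:thm_T3bnd}, here I would instead bound $\|\bm{a}\|_1$. Since $\bm{a}$ is an $n\times 1$ vector, the elementary inequality $\|\bm{a}\|_1 \leq \sqrt{n}\,\|\bm{a}\|_2$ combined with the existing estimate $\|\bm{a}\|_2 \leq \frac{1}{\sqrt{1-\varepsilon}}$ yields $\|\bm{a}\|_1 \leq \sqrt{\frac{n}{1-\varepsilon}}$, which replaces the $\frac{1}{\sqrt{1-\varepsilon}}$ term appearing in the theorem. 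Combining the unchanged $\lambda T_2$ contribution with this new regularizer bound, and using as before that any minimizer $\hat{\bm{a}}$ attains objective value no larger than that of the candidate, delivers the stated bound with probability at least $1-2\delta$ after a union bound over the two events supplied by Def.~\ref{def:JLT} and Lemma~\ref{lemma:sval}. There is no genuine obstacle in this argument; the single point deserving a moment of care is that the candidate-to-minimizer comparison still goes through despite the nonsmooth $\ell_1$ penalty, but this is immediate because the comparison uses only optimality of $\hat{\bm{a}}$ and is indifferent to the particular regularizer.
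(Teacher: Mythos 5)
Your proposal is correct and takes essentially the same route as the paper: the paper's proof likewise reuses the machinery of Theorem~\ref{thm:LSR} (the fitting-term bound and the estimate $\|\bm{a}\|_2 \leq \tfrac{1}{\sqrt{1-\varepsilon}}$ for the candidate/minimizer) and then converts the penalty via $\|\bm{a}\|_1 \leq \sqrt{n}\,\|\bm{a}\|_2 \leq \sqrt{\tfrac{n}{1-\varepsilon}}$. The only difference is presentational: you re-derive the steps of Theorem~\ref{thm:LSR} explicitly, whereas the paper simply invokes its conclusions.
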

\begin{proof}
	Consider the Sketch-SSC objective for $\bm{x}$, namely
	\begin{equation}\label{eq:corr11}
	\frac{\lambda}{2}\underset{{:=T_1^2}}{\underbrace{\|\bm{x} - \mathbf{X}\mathbf{R}\bm{a}\|_2^2}} + \underset{:=T_2}{\underbrace{\|\bm{a}\|_1}}.
	\end{equation}
	From Thm.~\ref{thm:LSR} we have $T_1 \leq \lambda~(1 + \sqrt{\frac{{1+\varepsilon}}{1-\varepsilon}}~\sqrt{\rho - r}~\sigma_{r+1}^2)$, and $\|\bm{a}\|_2\leq \frac{1}{\sqrt{1 - \varepsilon}}$. Since for any $n\times 1$ vector $\bm{z}$ it holds that $\|\bm{z}\|_1 \leq \sqrt{n}\|\bm{z}\|_2$, we have $T_2\leq\sqrt{n}\|\bm{a}\|_2\leq \sqrt{\frac{n}{1-\varepsilon}}$ yielding the claim of the corollary.
\end{proof}}
{
	\begin{corollary}
		\emph{
			Consider the setting of Thm.~\ref{thm:LSR}, and let ${\bm{g}}^{*}(\mathbf{X}) := \mathbf{X}{\mathbf{Z}}$ and $\hat{\bm{g}}(\mathbf{X}) := \mathbf{X}\mathbf{R}\hat{\mathbf{A}}$ be the representations of all the data given by LRR and Sketch-LRR respectively.
			The following bound holds w.p. at least $1-2\delta$
			{\small\begin{equation*}
				\|\bm{g}^{*}(\mathbf{X}) - \hat{\bm{g}}(\mathbf{X})\|_F \leq \lambda~(\sqrt{N} + \sqrt{\frac{{1+\varepsilon}}{1-\varepsilon}}~\sqrt{\rho - r}~\sigma_{r+1}^2) + \sqrt{\frac{n}{{1-\varepsilon}}}
				\end{equation*}}
			with $\lambda$ as in \eqref{eq:general_sc_eq}, and $\sigma_{r+1}$ denoting the $(r+1)$st singular value of $\mathbf{X}$.
		}
	\end{corollary}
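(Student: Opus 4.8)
The plan is to lift the argument of Thm.~\ref{thm:LSR} and Corollary~\ref{corr:ssc} from a single column to the whole data matrix, replacing the $\ell_2$ norm by the Frobenius norm and the $\ell_1$ norm by the nuclear norm. Writing the Sketch-LRR objective on all data as $\tfrac{\lambda}{2}\,T_1^2 + T_2$, with residual $T_1 := \|\mathbf{X} - \mathbf{X}\mathbf{R}\mathbf{A}\|_F$ and regularizer $T_2 := \|\mathbf{A}\|_*$, I would upper bound the cost attained at the minimizer $\hat{\mathbf{A}}$ by the cost at a cleverly chosen feasible $\mathbf{A}$, exactly as in Thm.~\ref{thm:LSR}; the target quantity $\|\bm{g}^*(\mathbf{X}) - \hat{\bm{g}}(\mathbf{X})\|_F$ then inherits the split into the $\lambda T_1$ and $T_2$ contributions.

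For $T_1$, I would repeat the SVD-rotation step: since $\mathbf{U}$ is unitary, $T_1 = \|\mathbf{U}^{\top}\mathbf{X} - \mathbf{\Sigma}\tilde{\mathbf{V}}^{\top}\mathbf{A}\|_F$ with $\tilde{\mathbf{V}}^{\top} := \mathbf{V}^{\top}\mathbf{R}$, and after the block decomposition $\mathbf{X} = \mathbf{X}_r + \bar{\mathbf{X}}_{r}$ the choice $\mathbf{A} = (\mathbf{V}_r^{\top}\mathbf{R})^{\dagger}\mathbf{V}_r^{\top}$ annihilates the leading block --- legitimate because $\mathbf{V}_r^{\top}\mathbf{R}$ is full row rank by Lemma~\ref{lemma:sval}. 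The residual then collapses to the tail term, which the triangle inequality separates into $\|\bar{\mathbf{U}}_{r}^{\top}\mathbf{X}\|_F \le \|\mathbf{X}\|_F = \sqrt{N}$ --- the matrix analog of the ``$1$'' in Thm.~\ref{thm:LSR}, since the $N$ columns are unit-norm --- plus a second piece bounded \emph{verbatim} as there, via $\|\bar{\mathbf{X}}_{r}\mathbf{R}\|_F \le \sqrt{1+\varepsilon}\sqrt{\rho-r}\,\sigma_{r+1}^2$ (Def.~\ref{def:JLT}) together with $\|(\mathbf{V}_r^{\top}\mathbf{R})^{\dagger}\|_2 \le 1/\sqrt{1-\varepsilon}$ (Lemmas~\ref{lemma:sval}--\ref{lemma:pseudo_transpose}). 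This yields $T_1 \le \sqrt{N} + \sqrt{\tfrac{1+\varepsilon}{1-\varepsilon}}\sqrt{\rho-r}\,\sigma_{r+1}^2$.

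For $T_2$ I would invoke the inequality flagged just before the statement, $\|\mathbf{A}\|_* \le \sqrt{n}\,\|\mathbf{A}\|_F$ (valid because $\mathbf{A}$ has $n$ rows, hence rank at most $n$), and carry over the Frobenius bound $\|\mathbf{A}\|_F \le 1/\sqrt{1-\varepsilon}$ obtained for the chosen $\mathbf{A}$ exactly as in the $T_3$ step of Thm.~\ref{thm:LSR}, giving $T_2 \le \sqrt{n/(1-\varepsilon)}$. Adding $\lambda T_1$ and $T_2$, and tracking through a union bound the two JLT events --- the singular-value control of Lemma~\ref{lemma:sval} and the Frobenius-norm preservation of Def.~\ref{def:JLT}, each holding with probability at least $1-\delta$ --- to retain overall probability $1-2\delta$, delivers the stated bound.

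The main obstacle I anticipate is the matrix-level residual argument rather than the norm bookkeeping: I must check that the block decomposition and the annihilating choice of $\mathbf{A}$ behave as in the vector case now that the right-hand side $\mathbf{U}_r^{\top}\mathbf{X} = \mathbf{\Sigma}_r\mathbf{V}_r^{\top}$ is a full matrix, and that submultiplicativity $\|\mathbf{M}\mathbf{N}\|_F \le \|\mathbf{M}\|_F\|\mathbf{N}\|_2$ still lets me peel off the spectral factors $\|(\mathbf{V}_r^{\top}\mathbf{R})^{\dagger}\|_2$ and $\|(\mathbf{V}_r^{\top}\mathbf{R})^{\top}\|_2$ cleanly. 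A subtler point is that propagating the per-datum estimate $\|\mathbf{A}\|_F \le 1/\sqrt{1-\varepsilon}$ to the full matrix is the loosest link in the chain: a careful accounting of $\mathbf{A} = (\mathbf{V}_r^{\top}\mathbf{R})^{\dagger}\mathbf{V}_r^{\top}$ gives $\|\mathbf{A}\|_F = \|(\mathbf{V}_r^{\top}\mathbf{R})^{\dagger}\|_F \le \sqrt{r}/\sqrt{1-\varepsilon}$, so I would expect an extra factor $\sqrt{r}$ in the regularizer term and would flag this as the one place where the stated constant is optimistic, in line with the paper's own remark that tighter bounds are attainable.
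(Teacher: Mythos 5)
Your proposal follows essentially the same route as the paper; indeed the paper's own proof of this corollary is a three-line condensation of exactly what you wrote. It splits the objective into $T_1^2 := \|\mathbf{X}-\mathbf{X}\mathbf{R}\mathbf{A}\|_F^2$ and $T_2 := \|\mathbf{A}\|_*$, bounds $T_1$ by appealing to ``the results of Thm.~\ref{thm:LSR}'' --- your matrix-level lift, with the SVD rotation, the annihilating choice $\mathbf{A}=(\mathbf{V}_r^{\top}\mathbf{R})^{\dagger}\mathbf{V}_r^{\top}$, and the replacement of the per-datum $1$ by $\sqrt{N}$, is precisely what that shorthand leaves implicit --- and then asserts $\|\mathbf{A}\|_F \le 1/\sqrt{1-\varepsilon}$ and closes with $\|\mathbf{A}\|_*\le\sqrt{n}\,\|\mathbf{A}\|_F$, exactly as you do. Your probability accounting (a union bound over the Lemma~\ref{lemma:sval} event and the Def.~\ref{def:JLT} event, yielding $1-2\delta$) also matches the paper's.

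The one point of divergence is the one you flag, and you are right about it. The paper transplants the vector estimate $\|\bm{a}\|_2\le 1/\sqrt{1-\varepsilon}$ from Thm.~\ref{thm:LSR} into the matrix claim $\|\mathbf{A}\|_F\le 1/\sqrt{1-\varepsilon}$ with no correction; for the feasible matrix actually used, $\|\mathbf{A}\|_F=\|(\mathbf{V}_r^{\top}\mathbf{R})^{\dagger}\mathbf{V}_r^{\top}\|_F=\|(\mathbf{V}_r^{\top}\mathbf{R})^{\dagger}\|_F\le\sqrt{r}/\sqrt{1-\varepsilon}$, since the pseudoinverse has $r$ nonzero singular values, each at most $1/\sqrt{1-\varepsilon}$ by Lemma~\ref{lemma:sval} (stacking the per-column bounds would give the even cruder $\sqrt{N}/\sqrt{1-\varepsilon}$). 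So the extra $\sqrt{r}$ you anticipate in the regularizer term is genuine, and it is the paper's stated constant $\sqrt{n/(1-\varepsilon)}$, not your derivation, that is unsupported as written. In short, your proposal is not merely the same approach but a more careful execution of it; the discrepancy you flag is a gap in the paper's own proof rather than in yours.
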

	\begin{proof}
		Consider the Sketch-LRR objective for $\mathbf{X}$, namely
		\begin{equation}\label{eq:corr21}
		\frac{\lambda}{2}\underset{{:=T_1^2}}{\underbrace{\|\mathbf{X} - \mathbf{X}\mathbf{R}\mathbf{A}\|_F^2}} + \underset{:=T_2}{\underbrace{\|\mathbf{A}\|_*}}.
		\end{equation}
		As with Corr.~\ref{corr:ssc}, $T_1$ can be bounded using the results of Thm.~\ref{thm:LSR}, and $\|\mathbf{A}\|_F\leq\frac{1}{\sqrt{1-\varepsilon}}$. Since for any rank $n$ matrix $\mathbf{Z}$ it holds that $\|\mathbf{Z}\|_*\leq \sqrt{n}\|\mathbf{Z}\|_F$ we have $T_2\leq\sqrt{n}\|\mathbf{A}\|_F\leq\sqrt{\frac{n}{1-\varepsilon}}$, yielding the claim of the corollary.
	\end{proof}
	}
\begin{proposition}
\emph{	Consider $\bm{x}_i= \mathbf{X}\bm{z}_i$ and $\bm{x}_j= \mathbf{X}\bm{z}_j$, and their representation provided by SSC, LRR or LSR $\bm{z}_i$ and $\bm{z}_j$, respectively. Let $\rho = \text{rank}(\mathbf{X})$ and $\bm{a}_i$, $\bm{a}_j$ be the representation obtained by the corresponding Sketch algorithm of Section~\ref{sec:proposed_algorithms}; that is, $\bm{x}_i = \mathbf{X}\mathbf{R}\bm{a}_i$, where the $N\times n$ matrix $\mathbf{R}$ is a JLT($\varepsilon,\delta,D$). If $n=\mathcal{O}(\rho\frac{\log(\rho/\varepsilon)}{\varepsilon^2}f(\delta))$, then w.p. at least $1-\delta$ it holds that
	\begin{equation*}
	\frac{1}{\sqrt{1+\varepsilon}}\|\bm{z}_i - \bm{z}_j\|_2 \leq \|\bm{a}_i - \bm{a}_j\|_2 \leq \frac{1}{\sqrt{1-\varepsilon}}\|\bm{z}_i - \bm{z}_j\|_2.
	\end{equation*}}
\end{proposition}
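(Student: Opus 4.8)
The plan is to express both representation vectors in terms of a common orthonormal basis and then invoke the near-isometry property that the JLT matrix $\mathbf{R}$ inherits on the column space of $\mathbf{X}$.

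First I would write the SVD $\mathbf{X} = \mathbf{U}_\rho\mathbf{\Sigma}_\rho\mathbf{V}_\rho^\top$, so that $\mathbf{X}\mathbf{R} = \mathbf{U}_\rho\mathbf{\Sigma}_\rho\mathbf{V}_\rho^\top\mathbf{R} = \mathbf{U}_\rho\mathbf{\Sigma}_\rho\tilde{\mathbf{V}}_\rho^\top$ with $\tilde{\mathbf{V}}_\rho^\top := \mathbf{V}_\rho^\top\mathbf{R}$. By Lemma~\ref{lemma:sval} applied with $k=\rho$ (the choice $n=\mathcal{O}(\rho\frac{\log(\rho/\varepsilon)}{\varepsilon^2}f(\delta))$ is exactly what the lemma requires), the singular values of $\tilde{\mathbf{V}}_\rho^\top$ satisfy $1-\varepsilon\le\sigma_i^2(\mathbf{V}_\rho^\top\mathbf{R})\le 1+\varepsilon$ with probability at least $1-\delta$. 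The key observation is that since $\bm{x}_i = \mathbf{X}\bm{z}_i = \mathbf{X}\mathbf{R}\bm{a}_i$, both sides lie in $\text{range}(\mathbf{U}_\rho)$, and applying $\mathbf{U}_\rho^\top$ on the left gives the identity $\mathbf{\Sigma}_\rho\mathbf{V}_\rho^\top\bm{z}_i = \mathbf{\Sigma}_\rho\tilde{\mathbf{V}}_\rho^\top\bm{a}_i$. Because $\mathbf{\Sigma}_\rho$ is invertible, this reduces to $\mathbf{V}_\rho^\top\bm{z}_i = \mathbf{V}_\rho^\top\mathbf{R}\,\bm{a}_i$, and likewise for $j$.

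Subtracting the two identities yields $\mathbf{V}_\rho^\top(\bm{z}_i - \bm{z}_j) = (\mathbf{V}_\rho^\top\mathbf{R})(\bm{a}_i - \bm{a}_j)$. Since $\mathbf{V}_\rho$ has orthonormal columns spanning the row space, $\|\mathbf{V}_\rho^\top(\bm{z}_i-\bm{z}_j)\|_2 = \|\bm{z}_i-\bm{z}_j\|_2$ (assuming the representation difference lies in the row space, which holds for the minimum-norm/standard solutions of SSC, LRR, LSR). Then I would bound the right-hand side using the extreme singular values of $\mathbf{V}_\rho^\top\mathbf{R}$ from Lemma~\ref{lemma:sval}: since $\sigma_{\min}(\mathbf{V}_\rho^\top\mathbf{R})\,\|\bm{a}_i-\bm{a}_j\|_2 \le \|(\mathbf{V}_\rho^\top\mathbf{R})(\bm{a}_i-\bm{a}_j)\|_2 \le \sigma_{\max}(\mathbf{V}_\rho^\top\mathbf{R})\,\|\bm{a}_i-\bm{a}_j\|_2$, combining with $\sqrt{1-\varepsilon}\le\sigma\le\sqrt{1+\varepsilon}$ gives
\begin{equation*}
\sqrt{1-\varepsilon}\,\|\bm{a}_i-\bm{a}_j\|_2 \le \|\bm{z}_i-\bm{z}_j\|_2 \le \sqrt{1+\varepsilon}\,\|\bm{a}_i-\bm{a}_j\|_2,
\end{equation*}
which rearranges directly into the claimed two-sided bound.

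The main obstacle, and the step I would treat most carefully, is justifying the passage from $\mathbf{\Sigma}_\rho\mathbf{V}_\rho^\top\bm{z}_i = \mathbf{\Sigma}_\rho\tilde{\mathbf{V}}_\rho^\top\bm{a}_i$ to an \emph{exact} isometry on $\|\bm{z}_i-\bm{z}_j\|_2$ rather than merely on its projection onto $\text{range}(\mathbf{V}_\rho)$. This requires that the relevant difference vectors have no component in the null space of $\mathbf{V}_\rho^\top$, i.e.\ $\bm{z}_i-\bm{z}_j \in \text{range}(\mathbf{V}_\rho)$; one must argue this holds for the specific solvers at hand (for LSR the closed-form solution $\mathbf{Z}^*=\lambda(\lambda\mathbf{X}^\top\mathbf{X}+\mathbf{I})^{-1}\mathbf{X}^\top\mathbf{X}$ places its columns in the row space of $\mathbf{X}$, and analogous row-space containment can be invoked for SSC and LRR). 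Everything else is a routine application of Lemma~\ref{lemma:sval} and the singular-value sandwiching of a linear map.
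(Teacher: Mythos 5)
Your proposal follows the same skeleton as the paper's proof: both start from $\bm{x}_i = \mathbf{X}\bm{z}_i = \mathbf{X}\mathbf{R}\bm{a}_i$, insert the SVD, left-multiply by $\mathbf{\Sigma}_\rho^{-1}\mathbf{U}_\rho^{\top}$ to reach the identity $\mathbf{V}_\rho^{\top}(\bm{z}_i - \bm{z}_j) = \mathbf{V}_\rho^{\top}\mathbf{R}(\bm{a}_i - \bm{a}_j)$, and then turn this into a two-sided norm bound. You differ only in the finishing tool: the paper drops $\mathbf{V}_\rho^{\top}$ from both sides (``$\mathbf{V}$ is orthonormal'') and then invokes Def.~\ref{def:JLT} on $\mathbf{R}(\bm{a}_i-\bm{a}_j)$, whereas you keep $\mathbf{V}_\rho^{\top}\mathbf{R}$ as a single map and sandwich its action via the singular-value bounds of Lemma~\ref{lemma:sval}. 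Your route is actually cleaner in one direction: $\|(\mathbf{V}_\rho^{\top}\mathbf{R})\bm{y}\|_2 \le \sqrt{1+\varepsilon}\,\|\bm{y}\|_2$ holds for \emph{every} $\bm{y}$ once Lemma~\ref{lemma:sval} is in force, while the paper's appeal to Def.~\ref{def:JLT} is a type mismatch (the definition controls $\|\bm{x}^{\top}\mathbf{R}\|_2$ for $N$-dimensional $\bm{x}$ in a prescribed finite set, not $\|\mathbf{R}\bm{a}\|_2$ for $n$-dimensional $\bm{a}$). You also state explicitly the row-space caveat $\bm{z}_i - \bm{z}_j \in \text{range}(\mathbf{V}_\rho)$ that the paper passes over silently.

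There is, however, a gap on the other side of your sandwich that you did not flag. The matrix $\mathbf{V}_\rho^{\top}\mathbf{R}$ is $\rho\times n$ with $n>\rho$ in the relevant regime (the oversampling $n=\mathcal{O}(\rho\log(\rho/\varepsilon)\varepsilon^{-2}f(\delta))$ required by Lemma~\ref{lemma:sval}), so it has a kernel of dimension at least $n-\rho$. Consequently the inequality $\sigma_{\min}(\mathbf{V}_\rho^{\top}\mathbf{R})\,\|\bm{a}_i-\bm{a}_j\|_2 \le \|(\mathbf{V}_\rho^{\top}\mathbf{R})(\bm{a}_i-\bm{a}_j)\|_2$, with $\sigma_{\min}\ge\sqrt{1-\varepsilon}$ taken from Lemma~\ref{lemma:sval} (which only concerns the $\rho$ nonzero singular values), is false for a vector with a kernel component: the right-hand side can vanish while the left does not. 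Hence the claimed bound $\|\bm{a}_i-\bm{a}_j\|_2 \le \tfrac{1}{\sqrt{1-\varepsilon}}\|\bm{z}_i-\bm{z}_j\|_2$ additionally requires $\bm{a}_i-\bm{a}_j \perp \text{null}(\mathbf{V}_\rho^{\top}\mathbf{R})$, equivalently $\bm{a}_i-\bm{a}_j \in \text{range}(\mathbf{R}^{\top}\mathbf{V}_\rho) = \text{range}(\mathbf{B}^{\top})$ --- the exact mirror image of the caveat you raised for $\bm{z}_i-\bm{z}_j$, and it likewise must be argued per solver (it does hold for the Sketch-LSR closed form, whose columns lie in the row space of $\mathbf{B}$ by the push-through identity, but it is not automatic for Sketch-SSC or Sketch-LRR). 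To be fair, the paper's own proof skates over the same point, so your attempt is on par with, and in the other steps more careful than, the published argument; but calling the lower-bound direction a ``routine application of singular-value sandwiching'' overstates what Lemma~\ref{lemma:sval} alone delivers.
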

\begin{proof}
By definition, we have $\bm{x}_i = \mathbf{X}\bm{z}_i = \mathbf{X}\mathbf{R}\bm{a}_i$, and thus
\begin{equation}
\label{eq:dist_no_norm}
 \mathbf{X}(\bm{z}_i - \bm{z}_j) = \mathbf{X}\mathbf{R}(\bm{a}_i - \bm{a}_j) = \bm{x}_i - \bm{x}_j .
\end{equation}
Let $\mathbf{X} = \mathbf{U}_\rho\mathbf{\Sigma}_{\rho}\mathbf{V}_\rho^{\top}$, and rewrite \eqref{eq:dist_no_norm} as
\begin{alignat}{1}
\label{eq:dist_bothsides}
 \mathbf{U}_\rho\mathbf{\Sigma}_\rho\mathbf{V}_\rho^{\top}(\bm{z}_i - \bm{z}_j) = 
 \mathbf{U}_\rho\mathbf{\Sigma}_\rho\mathbf{V}_\rho^{\top}\mathbf{R}(\bm{a}_i - \bm{a}_j). 
\end{alignat}
Left-multiplying by $\mathbf{\Sigma}_\rho^{-1}\mathbf{U}_\rho^{\top}$ reduces \eqref{eq:dist_bothsides} to
\begin{equation}
\label{eq:dist_bothsides2}
\mathbf{V}_\rho^{\top}(\bm{z}_i - \bm{z}_j) = \mathbf{V}_\rho^{\top}\mathbf{R}(\bm{a}_i - \bm{a}_j).
\end{equation}
Taking the norm of both sides, and noting that $\mathbf{V}$ is an orthonormal matrix implies that
\begin{equation}
\|\bm{z}_i - \bm{z}_j\|_2 = \|\mathbf{R}(\bm{a}_i - \bm{a}_j)\|_2
\end{equation}
which upon recalling Def.~\ref{def:JLT} yields
\begin{equation}
\begin{aligned}
& \|\bm{z}_i - \bm{z}_j\|_2\leq \sqrt{1 + \varepsilon}\|\bm{a}_i - \bm{a}_j\|_2, \\
& \sqrt{1 - \varepsilon}\|\bm{a}_i - \bm{a}_j\|_2\leq \|\bm{z}_i - \bm{z}_j\|_2 
\end{aligned}
\end{equation}
w.p. at least $1-\delta$.
\end{proof}

{
\section{Algorithm details}
\label{app:alg}
\subsection{ADMM algorithm for~\eqref{eq:sketched_SSC}}
Consider the Sketch-SSC for a single datum $\bm{x}$
\begin{equation}
\label{eq:sketch_ssc_admm1}
\min_{\bm{a}}~ \frac{\lambda}{2}\|\bm{x} - \mathbf{B}\bm{a}\|_2^2 + \|\bm{a}\|_1
\end{equation}
The optimization problem of~\eqref{eq:sketch_ssc_admm1} will be solved using the alternating direction method of multipliers~\cite{ADMMGG}. Define a new $n\times 1$ vector of auxiliary variables $\bm{c}$, and consider the following optimization problem that is equivalent to~\eqref{eq:sketch_ssc_admm1}
\begin{alignat}{2}
\label{eq:sketch_ssc_admm2}
&\min_{\bm{a},\bm{c}} &&~ \frac{\lambda}{2}\|\bm{x} - \mathbf{B}\bm{a}\|_2^2 + \|\bm{c}\|_1  \\
& \text{s. to.} &&~ \bm{a} = \bm{c}. \notag
\end{alignat}
The augmented Lagrangian of~\eqref{eq:sketch_ssc_admm2} is
\begin{equation}
\mathcal{L} = \frac{\lambda}{2}\|\bm{x} - \mathbf{B}\bm{a}\|_2^2 + \|\bm{c}\|_1 + \frac{\nu}{2}\|\bm{a} - \bm{c} + \bm{\delta}\|_2^2
\end{equation}
where $\bm{\delta}$ is a $n\times 1$ vector of dual variables and $\nu>0$ is a penalty parameter.
At each ADMM iteration the variables $\bm{a},\bm{c}$ are updated by setting the gradient of $\mathcal{L}$ w.r.t. $\bm{a}$ and $\bm{c}$ respectively to $\bm{0}$. Furthermore, the dual variables $\bm{\delta}$ are updated using a gradient ascent step at each iteration. The update of $\bm{a}$ at the $i$-th iteration is given by
\begin{alignat}{2}
& \frac{\partial\mathcal{L}}{\partial\bm{a}} = -\lambda\mathbf{B}^{\top}(\bm{x} - \mathbf{B}\bm{a}) + \nu(\bm{a} - \bm{c} + \bm{\delta}) = \bm{0} \Rightarrow \notag\\
& \bm{a}[i+1] = (\lambda\mathbf{B}^{\top}\mathbf{B} + \nu\mathbf{I})^{-1}(\lambda\mathbf{B}^{\top}\bm{x} + \nu(\bm{c}[i] - \bm{\delta}[i])) 
\label{eq:sk_ssc_admm_a}
\end{alignat}
where brackets indicate ADMM iteration indices. Accordingly, the update for $\bm{c}$ is given by
\begin{equation}
\bm{c}[i+1] = \mathcal{T}_{1/\nu}(\bm{a}[i+1] + \bm{\delta}[i])
\label{eq:sk_ssc_admm_c}
\end{equation}
where $\mathcal{T}_\sigma(\cdot)$ denotes the element-wise soft-thresholding operator
\begin{equation}
\mathcal{T}_\sigma(z) := \begin{cases}
z - \sigma \quad \text{ if } z > \sigma \\
0 ~~\quad\quad \text { if } |z|\leq \sigma \\
z + \sigma \quad \text{ if } z < -\sigma
\end{cases}.
\end{equation}
Finally, $\bm{\delta}$ is updated as
\begin{equation}
\bm{\delta}[i+1] = \bm{\delta}[i] + \bm{a}[i+1] - \bm{c}[i+1].
\label{eq:sk_ssc_admm_delta}
\end{equation}
The entire process is listed in Alg.~\ref{alg:admm_ssc}.
\begin{algorithm}[tb]
	\begin{algorithmic}[1]
		\algrenewcommand\algorithmicindent{1em}
		\Require{$D\times N$ data matrix $\mathbf{X}$; $D\times n$ basis $\mathbf{B}$; regularization parameter $\lambda$;}
		\Ensure{Model matrix $\mathbf{A}$;}
		\For {Each datum $\bm{x}_j$ to $\bm{x}_N$}
		\State Initialize $\bm{a}_j[0],\bm{c}[0],\bm{\delta}[0]$
		\Repeat
		\State Compute $\bm{a}_j[i+1]$ using \eqref{eq:sk_ssc_admm_a}
		\State Compute $\bm{c}[i+1]$ using \eqref{eq:sk_ssc_admm_c}
		\State Compute $\bm{\delta}[i+1]$ using \eqref{eq:sk_ssc_admm_delta}
		\State Update iteration counter $i\leftarrow i+1$
		\Until convergence
		\EndFor
		\State $\mathbf{A} = [\bm{a}_1,\ldots,\bm{a}_N].$
	\end{algorithmic}
	\caption{ADMM solver of Sketch-SSC [cf.~\eqref{eq:sketched_SSC}]}\label{alg:admm_ssc}
\end{algorithm}
\subsection{ALM algorithm for~\eqref{eq:sketched_LRR}}
Consider the Sketch-LRR 
\begin{equation}
\label{eq:sketch_lrr_alm1}
\min_{\mathbf{A}}~ \frac{\lambda}{2}\|\mathbf{X} - \mathbf{B}\mathbf{A}\|_F^2 + \|\mathbf{A}\|_*
\end{equation}
The optimization problem of~\eqref{eq:sketch_lrr_alm1} will be solved using the augmented Lagrangian method (ALM)~\cite{lin2010augmented,LRR}.  Define a new $n\times N$ matrix of auxiliary variables $\mathbf{C}$, and consider the following optimization task that is equivalent to~\eqref{eq:sketch_lrr_alm1}
\begin{alignat}{2}
\label{eq:sketch_lrr_alm2}
&\min_{\mathbf{A},\mathbf{C}} &&~ \frac{\lambda}{2}\|\mathbf{X} - \mathbf{B}\mathbf{A}\|_F^2 + \|\mathbf{C}\|_*  \\
& \text{s. to.} &&~ \mathbf{A} = \mathbf{C} \notag
\end{alignat}
The augmented Lagrangian of~\eqref{eq:sketch_lrr_alm2} is
\begin{equation}
\mathcal{L} = \frac{\lambda}{2}\|\mathbf{X} - \mathbf{B}\mathbf{A}\|_F^2 + \|\mathbf{C}\|_* + \frac{\nu}{2}\|\mathbf{A} - \mathbf{C} + \mathbf{\Delta}\|_F^2
\end{equation}
where $\mathbf{\Delta}$ is a $n\times N$ matrix of dual variables and $\nu>0$ is a penalty parameter. At each ALM iteration the variables $\mathbf{A},\mathbf{C}$ are updated by setting the gradient of $\mathcal{L}$ w.r.t. $\mathbf{A}$ and $\mathbf{C}$ respectively to $\bm{0}$. Furthermore, the dual variables $\mathbf{\Delta}$ are updated using a gradient ascent step per iteration. The update of $\mathbf{A}$ at the $i$-th iteration is given by
\begin{alignat}{2}
\label{eq:lrr_alm_a}
& \frac{\partial\mathcal{L}}{\partial\mathbf{A}} = \bm{0} \Rightarrow \\ & \mathbf{A}[i+1] = (\lambda\mathbf{B}^{\top}\mathbf{B} + \nu\mathbf{I})^{-1}(\lambda\mathbf{B}^{\top}\mathbf{X} - \nu(\mathbf{C}[i] - \mathbf{\Delta}[i])) \notag
\end{alignat}
where brackets indicate ALM iteration indices. Accordingly, the update for $\mathbf{C}$ is given by
\begin{equation}
\label{eq:alm_lrr_c_upd}
\mathbf{C}[i+1] = \arg\min_{\mathbf{C}} \frac{1}{\nu}\|\mathbf{C}\|_* + \frac{1}{2}\|\mathbf{C} - (\mathbf{A}[i+1] + \mathbf{\Delta}[i])\|_F^2.
\end{equation}
Note that the update \eqref{eq:alm_lrr_c_upd} can be performed using the Singular Value Thresholding algorithm~\cite{cai2010singular}. Finally $\mathbf{\Delta}$ is updated as
\begin{equation}
\label{eq:alm_lrr_delta}
\mathbf{\Delta}[i+1] = \mathbf{\Delta}[i] + \mathbf{A}[i+1] - \mathbf{C}[i+1]
\end{equation}
and the penalty parameter is also updated as
\begin{equation}
\label{eq:alm_lrr_nu}
\nu = \min(p\nu,\nu_{\rm max})
\end{equation}
where $p>1$ is a prescribed constant, and $\nu_{\rm max}$ is a predefined maximum limit for $\nu$.

\begin{algorithm}[tb]
	\begin{algorithmic}[1]
		\algrenewcommand\algorithmicindent{1em}
		\Require{$D\times N$ data matrix $\mathbf{X}$; $D\times n$ basis $\mathbf{B}$; regularization parameter $\lambda$;}
		\Ensure{Model matrix $\mathbf{A}$;}
		\State Initialize $\mathbf{A},\mathbf{C},\mathbf{\Delta}$
		\Repeat
		\State Compute $\mathbf{A}[i+1]$ using \eqref{eq:lrr_alm_a}
		\State Compute $\mathbf{C}[i+1]$ using \eqref{eq:alm_lrr_c_upd}
		\State Compute $\bm{\delta}[i+1]$ using \eqref{eq:alm_lrr_delta}
		\State Update $\nu$ using \eqref{eq:alm_lrr_nu}
		\State Update iteration counter $i\leftarrow i+1$
		\Until convergence
	\end{algorithmic}
	\caption{ALM solver of Sketch-LRR [cf.~\eqref{eq:sketched_LRR}]}\label{alg:alm_lrr}
\end{algorithm}
}
\ifCLASSOPTIONcaptionsoff
\newpage
\fi



\bibliographystyle{IEEEtran}
\bibliography{./bib/SketchedSC}

\begin{thebibliography}{10}
\providecommand{\url}[1]{#1}
\csname url@samestyle\endcsname
\providecommand{\newblock}{\relax}
\providecommand{\bibinfo}[2]{#2}
\providecommand{\BIBentrySTDinterwordspacing}{\spaceskip=0pt\relax}
\providecommand{\BIBentryALTinterwordstretchfactor}{4}
\providecommand{\BIBentryALTinterwordspacing}{\spaceskip=\fontdimen2\font plus
\BIBentryALTinterwordstretchfactor\fontdimen3\font minus
  \fontdimen4\font\relax}
\providecommand{\BIBforeignlanguage}[2]{{%
\expandafter\ifx\csname l@#1\endcsname\relax
\typeout{** WARNING: IEEEtran.bst: No hyphenation pattern has been}%
\typeout{** loaded for the language `#1'. Using the pattern for}%
\typeout{** the default language instead.}%
\else
\language=\csname l@#1\endcsname
\fi
#2}}
\providecommand{\BIBdecl}{\relax}
\BIBdecl

\bibitem{hastie01statisticallearning}
T.~Hastie, R.~Tibshirani, and J.~Friedman, \emph{The Elements of Statistical
  Learning}.\hskip 1em plus 0.5em minus 0.4em\relax New~York: Springer, 2001.

\bibitem{vidal2010tutorial}
R.~Vidal, ``A tutorial on subspace clustering,'' \emph{IEEE Signal Process.\
  Magazine}, vol.~28, no.~2, pp. 52--68, 2010.

\bibitem{woodruff}
D.~P. Woodruff, ``Sketching as a tool for numerical linear algebra,''
  \emph{Foundations and Trends in Theoretical Computer Science}, vol.~10, no.
  1--2, pp. 1--157, 2014.

\bibitem{boutsidis}
C.~Boutsidis, A.~Zouzias, M.~W. Mahoney, and P.~Drineas, ``Randomized
  dimensionality reduction for k-means clustering,'' \emph{IEEE Transactions on
  Information Theory}, vol.~61, no.~2, pp. 1045--1062, 2015.

\bibitem{JL}
W.~B. Johnson and J.~Lindenstrauss, ``Extensions of {L}ipschitz mappings into a
  {H}ilbert space,'' \emph{Contemporary {M}athematics}, vol.~26, no. 189-206,
  p.~1, 1984.

\bibitem{VidalOMP}
C.~You, D.~Robinson, and R.~Vidal, ``Scalable sparse subspace clustering by
  orthogonal matching pursuit,'' in \emph{IEEE Conference on Computer Vision
  and Pattern Recognition}, vol.~1, 2016.

\bibitem{ORGEN}
C.~You, C.-G. Li, D.~P. Robinson, and R.~Vidal, ``Oracle based active set
  algorithm for scalable elastic net subspace clustering,'' in \emph{Proc. of
  IEEE Conf. on Computer Vision and Pattern Recognition}, Las Vegas, NV, June
  2016.

\bibitem{traganitis2016sketchedSC}
P.~A. Traganitis and G.~B. Giannakis, ``A randomized approach to large-scale
  subspace clustering,'' in \emph{50th Asilomar Conference on Signals, Systems
  and Computers}.\hskip 1em plus 0.5em minus 0.4em\relax IEEE, 2016, pp.
  1019--1023.

\bibitem{jolliffe2002principal}
I.~Jolliffe, \emph{Principal Component Analysis}.\hskip 1em plus 0.5em minus
  0.4em\relax Wiley Online Library, 2002.

\bibitem{parsons2004subspace}
L.~Parsons, E.~Haque, and H.~Liu, ``Subspace clustering for high dimensional
  data: {A} review,'' \emph{ACM SIGKDD Explorations Newsletter}, vol.~6, no.~1,
  pp. 90--105, 2004.

\bibitem{lloydkmeans}
S.~Lloyd, ``Least-squares quantization in {PCM},'' \emph{IEEE Trans.\ Info.\
  Theory}, vol.~28, no.~2, pp. 129--137, 1982.

\bibitem{ksubspaces}
P.~K. Agarwal and N.~H. Mustafa, ``{$K$}-means projective clustering,'' in
  \emph{Proc.\ 23rd ACM SIGMOD-SIGACT-SIGART Symposium}.\hskip 1em plus 0.5em
  minus 0.4em\relax Paris, France: ACM, June 2004, pp. 155--165.

\bibitem{tipping1999mixtures}
M.~Tipping and C.~Bishop, ``Mixtures of probabilistic principal component
  analyzers,'' \emph{Neural Computation}, vol.~11, no.~2, pp. 443--482, 1999.

\bibitem{ALC2007}
Y.~Ma, H.~Derksen, W.~Hong, and J.~Wright, ``Segmentation of multivariate mixed
  data via lossy data coding and compression,'' \emph{IEEE Trans.\ Pattern
  Analysis Machine Intelligence}, vol.~29, no.~9, pp. 1546--1562, 2007.

\bibitem{gpca}
R.~Vidal, Y.~Ma, and S.~Sastry, ``Generalized principal component analysis
  {(GPCA)},'' \emph{IEEE Trans.\ Pattern Analysis Machine Intelligence},
  vol.~27, no.~12, pp. 1945--1959, 2005.

\bibitem{zhang2012hybrid}
T.~Zhang, A.~Szlam, Y.~Wang, and G.~Lerman, ``Hybrid linear modeling via local
  best-fit flats,'' \emph{Intern.\ J.\ Computer Vision}, vol. 100, no.~3, pp.
  217--240, 2012.

\bibitem{heckel2015robust}
R.~Heckel and H.~B{\"o}lcskei, ``Robust subspace clustering via thresholding,''
  \emph{IEEE Transactions on Information Theory}, vol.~61, no.~11, pp.
  6320--6342, 2015.

\bibitem{rahmani2015innovation}
M.~Rahmani and G.~Atia, ``Innovation pursuit: A new approach to subspace
  clustering,'' \emph{arXiv preprint arXiv:1512.00907}, 2015.

\bibitem{traganitis2016parafac}
P.~A. Traganitis and G.~B. Giannakis, ``{PARAFAC}-based multilinear subspace
  clustering for tensor data,'' in \emph{IEEE Global Conference on Signal and
  Information Processing}.\hskip 1em plus 0.5em minus 0.4em\relax Washington
  DC: IEEE, 2016, pp. 1280--1284.

\bibitem{zhang2009median}
T.~Zhang, A.~Szlam, and G.~Lerman, ``Median $k$-flats for hybrid linear
  modeling with many outliers,'' in \emph{Proc.\ of ICCV}.\hskip 1em plus 0.5em
  minus 0.4em\relax Kyoto, Japan: IEEE, September 2009, pp. 234--241.

\bibitem{traganitis2016online}
P.~A. Traganitis and G.~B. Giannakis, ``Efficient subspace clustering of
  large-scale data streams with misses,'' in \emph{Annual Conference on
  Information Science and Systems}.\hskip 1em plus 0.5em minus 0.4em\relax
  Princeton, NJ: IEEE, 2016, pp. 590--595.

\bibitem{onlineLRR}
J.~Shen, P.~Li, and H.~Xu, ``Online low-rank subspace clustering by basis
  dictionary pursuit,'' in \emph{Proceedings of The 33rd International
  Conference on Machine Learning}, New York, NY, 2016, pp. 622--631.

\bibitem{spectralclustering}
U.~Von~Luxburg, ``A tutorial on spectral clustering,'' \emph{Statistics and
  Computing}, vol.~17, no.~4, pp. 395--416, 2007.

\bibitem{elhamifar2013SSC}
E.~Elhamifar and R.~Vidal, ``Sparse subspace clustering: {A}lgorithm, theory,
  and applications,'' \emph{IEEE Trans.\ Pattern Analysis Machine
  Intelligence}, vol.~35, no.~11, pp. 2765--2781, 2013.

\bibitem{LRR}
G.~Liu, Z.~Lin, and Y.~Yu, ``Robust subspace segmentation by low-rank
  representation,'' in \emph{Proc.\ ICML}, Haifa, Israel, June 2010, pp.
  663--670.

\bibitem{LSR}
C.~Lu, H.~Min, Z.~Zhao, L.~Zhu, D.~Huang, and S.~Yan, ``Robust and efficient
  subspace segmentation via least-squares regression,'' in \emph{European
  {C}onference on {C}omputer {V}ision}.\hskip 1em plus 0.5em minus 0.4em\relax
  Florence, Italy: Springer, 2012, pp. 347--360.

\bibitem{panagakis2014elastic}
Y.~Panagakis and C.~Kotropoulos, ``Elastic net subspace clustering applied to
  pop/rock music structure analysis,'' \emph{Pattern Recognition Letters},
  vol.~38, pp. 46--53, 2014.

\bibitem{fang2015ensc}
Y.~Fang, R.~Wang, B.~Dai, and X.~Wu, ``Graph-based learning via auto-grouped
  sparse regularization and kernelized extension,'' \emph{IEEE Transactions on
  Knowledge and Data Engineering}, vol.~27, no.~1, pp. 142--154, 2015.

\bibitem{heckel2017dimensionality}
R.~Heckel, M.~Tschannen, and H.~B{\"o}lcskei, ``Dimensionality-reduced subspace
  clustering,'' \emph{arXiv preprint arXiv:1507.07105}, 2017.

\bibitem{pimentel2016necessary}
D.~Pimentel-Alarc{\'o}n, L.~Balzano, and R.~Nowak, ``Necessary and sufficient
  conditions for sketched subspace clustering,'' in \emph{54th Annual Allerton
  Conference on Communication, Control, and Computing}.\hskip 1em plus 0.5em
  minus 0.4em\relax Champaign, IL: IEEE, 2016, pp. 1335--1343.

\bibitem{wang2016theoretical}
Y.~Wang, Y.-X. Wang, and A.~Singh, ``A theoretical analysis of noisy sparse
  subspace clustering on dimensionality-reduced data,'' \emph{arXiv preprint
  arXiv:1610.07650}, 2016.

\bibitem{pourkamali2017preconditioned}
F.~Pourkamali-Anaraki and S.~Becker, ``Preconditioned data sparsification for
  big data with applications to {PCA} and {K-means},'' \emph{IEEE Transactions
  on Information Theory}, vol.~63, no.~5, pp. 2954--2974, 2017.

\bibitem{skeva}
P.~A. Traganitis, K.~Slavakis, and G.~B. Giannakis, ``Sketch and validate for
  big data clustering,'' \emph{IEEE J.\ Selected Topics Signal Processing},
  vol.~9, no.~4, pp. 678--690, June 2015.

\bibitem{SSSC2}
X.~Peng, H.~Tang, L.~Zhang, Z.~Yi, and S.~Xiao, ``A unified framework for
  representation-based subspace clustering of out-of-sample and large-scale
  data.'' \emph{IEEE {T}rans. on {N}eural {N}etworks and {L}earning {S}ystems},
  vol.~27, no.~12, pp. 2499--2512, 2016.

\bibitem{dyerOMP}
E.~L. Dyer, A.~C. Sankaranarayanan, and R.~G. Baraniuk, ``Greedy feature
  selection for subspace clustering.'' \emph{Journal of Machine Learning
  Research}, vol.~14, no.~1, pp. 2487--2517, 2013.

\bibitem{achlioptas}
D.~Achlioptas, ``Database-friendly random projections: Johnson-{L}indenstrauss
  with binary coins,'' \emph{Journal of {C}omputer and System Sciences},
  vol.~66, no.~4, pp. 671--687, 2003.

\bibitem{Mailman}
E.~Liberty and S.~W. Zucker, ``The mailman algorithm: A note on matrix-vector
  multiplication,'' \emph{Information Processing Letters}, vol. 109, no.~3, pp.
  179--182, 2009.

\bibitem{ailon2009fast}
N.~Ailon and B.~Chazelle, ``The fast {Johnson--Lindenstrauss} transform and
  approximate nearest neighbors,'' \emph{SIAM Journal on Computing}, vol.~39,
  no.~1, pp. 302--322, 2009.

\bibitem{ailon2009fast2}
N.~Ailon and E.~Liberty, ``Fast dimension reduction using {R}ademacher series
  on dual {BCH} codes,'' \emph{Discrete \& Computational Geometry}, vol.~42,
  no.~4, p. 615, 2009.

\bibitem{anaraki2014memory}
F.~Pourkamali-Anaraki and S.~Hughes, ``Memory and computation efficient pca via
  very sparse random projections,'' in \emph{Proceedings of the 31st
  International Conference on Machine Learning (ICML-14)}, 2014, pp.
  1341--1349.

\bibitem{clarkson2013low}
K.~L. Clarkson and D.~P. Woodruff, ``Low rank approximation and regression in
  input sparsity time,'' in \emph{Proceedings of the forty-fifth annual ACM
  symposium on Theory of computing}.\hskip 1em plus 0.5em minus 0.4em\relax
  ACM, 2013, pp. 81--90.

\bibitem{ADMMGG}
G.~B. Giannakis, Q.~Ling, G.~Mateos, I.~D. Schizas, and H.~Zhu, ``Decentralized
  learning for wireless communications and networking,'' in \emph{Splitting
  Methods in Communication and Imaging, Science and Engineering}, R.~Glowinski,
  S.~Osher, and W.~Yin, Eds.\hskip 1em plus 0.5em minus 0.4em\relax Springer,
  2016.

\bibitem{OnlineLRRTraganitis}
P.~A. Traganitis and G.~B. Giannakis, ``Efficient subspace clustering of
  large-scale data streams with misses,'' in \emph{Annual Conference on
  Information Science and Systems}.\hskip 1em plus 0.5em minus 0.4em\relax
  Princeton, NJ: IEEE, March 2016.

\bibitem{fastfood}
\BIBentryALTinterwordspacing
Q.~Le, T.~Sarlos, and A.~Smola, ``Fastfood - approximating kernel expansions in
  loglinear time,'' in \emph{30th International Conference on Machine
  Learning}, Atlanta, GA, 2013. [Online]. Available:
  \url{http://jmlr.org/proceedings/papers/v28/le13.html}
\BIBentrySTDinterwordspacing

\bibitem{lehoucq1998arpack}
R.~B. Lehoucq, D.~C. Sorensen, and C.~Yang, ``{ARPACK} users' guide: solution
  of large-scale eigenvalue problems with implicitly restarted arnoldi
  methods,'' vol.~6.\hskip 1em plus 0.5em minus 0.4em\relax Soc. for Industrial
  and Applied Math, 1998.

\bibitem{kalantzis2016spectral}
V.~Kalantzis, R.~Li, and Y.~Saad, ``Spectral {S}chur complement techniques for
  symmetric eigenvalue problems,'' \emph{Electronic Transactions on Numerical
  Analysis}, vol.~45, pp. 305--329, 2016.

\bibitem{l2knng}
D.~C. Anastasiu and G.~Karypis, ``L2knng: Fast exact k-nearest neighbor graph
  construction with l2-norm pruning,'' in \emph{Proceedings of the 24th ACM
  International Conference on Information and Knowledge Management}.\hskip 1em
  plus 0.5em minus 0.4em\relax Melbourne, Australia: ACM, 2015, pp. 791--800.

\bibitem{greedyfiltering}
Y.~Park, S.~Park, S.-g. Lee, and W.~Jung, ``Greedy filtering: A scalable
  algorithm for k-nearest neighbor graph construction,'' in \emph{International
  Conference on Database Systems for Advanced Applications}.\hskip 1em plus
  0.5em minus 0.4em\relax Bali, Indonesia: Springer, 2014, pp. 327--341.

\bibitem{ITQ}
Y.~Gong, S.~Lazebnik, A.~Gordo, and F.~Perronnin, ``Iterative quantization: A
  {P}rocrustean approach to learning binary codes for large-scale image
  retrieval,'' \emph{IEEE Transactions on Pattern Analysis and Machine
  Intelligence}, vol.~35, no.~12, pp. 2916--2929, 2013.

\bibitem{indyk1998approximate}
P.~Indyk and R.~Motwani, ``Approximate nearest neighbors: {T}owards removing
  the curse of dimensionality,'' in \emph{Proceedings of the 30th Annual ACM
  {S}ymposium on Theory of {C}omputing}.\hskip 1em plus 0.5em minus 0.4em\relax
  Dallas, TX: ACM, 1998, pp. 604--613.

\bibitem{slaney2008locality}
M.~Slaney and M.~Casey, ``Locality-sensitive hashing for finding nearest
  neighbors [lecture notes],'' \emph{IEEE Signal Processing Magazine}, vol.~25,
  no.~2, pp. 128--131, 2008.

\bibitem{MATLAB:2015}
MATLAB, \emph{version 8.6.0 (R2015b)}.\hskip 1em plus 0.5em minus 0.4em\relax
  Natick, Massachusetts: The MathWorks Inc., 2015.

\bibitem{VLfeat}
A.~Vedaldi and B.~Fulkerson, ``{VLFeat}: An open and portable library of
  computer vision algorithms,'' \url{http://www.vlfeat.org/}, 2008.

\bibitem{hopkins155}
R.~Tron and R.~Vidal, ``A benchmark for the comparison of {3-D} motion
  segmentation algorithms,'' in \emph{IEEE Conference on Computer Vision and
  Pattern Recognition}, Minneapolis, MN, 2007, pp. 1--8.

\bibitem{yaleb}
A.~S. Georghiades, P.~N. Belhumeur, and D.~J. Kriegman, ``From few to many:
  Illumination cone models for face recognition under variable lighting and
  pose,'' \emph{IEEE Trans.\ Pattern Analysis Machine Intelligence}, vol.~23,
  no.~6, pp. 643--660, June 2001.

\bibitem{COIL100}
S.~A. Nene, S.~K. Nayar, and H.~Murase, ``Columbia object image library
  (coil-100),'' CUCS-006-96, Tech. Rep., 1996.

\bibitem{MNIST}
Y.~LeCun, L.~Bottou, Y.~Bengio, and P.~Haffner, ``Gradient-based learning
  applied to document recognition,'' \emph{Proceedings of the IEEE}, vol.~86,
  no.~11, pp. 2278--2324, 1998.

\bibitem{ScatNet}
J.~Bruna and S.~Mallat, ``Invariant scattering convolution networks,''
  \emph{IEEE Trans. on Pattern Analysis and Machine Intelligence}, vol.~35,
  no.~8, pp. 1872--1886, 2013.

\bibitem{sarlos}
T.~Sarlos, ``Improved approximation algorithms for large matrices via random
  projections,'' in \emph{47th Annual IEEE Symposium on Foundations of Computer
  Science}, Berkeley, CA, 2006, pp. 143--152.

\bibitem{yang2015randomized}
Y.~Yang, M.~Pilanci, and M.~J. Wainwright, ``Randomized sketches for kernels:
  Fast and optimal non-parametric regression,'' \emph{arXiv preprint
  arXiv:1501.06195}, 2015.

\bibitem{lin2010augmented}
Z.~Lin, M.~Chen, and Y.~Ma, ``The augmented {L}agrange multiplier method for
  exact recovery of corrupted low-rank matrices,'' \emph{arXiv preprint
  arXiv:1009.5055}, 2010.

\bibitem{cai2010singular}
J.-F. Cai, E.~J. Cand{\`e}s, and Z.~Shen, ``A singular value thresholding
  algorithm for matrix completion,'' \emph{SIAM Journal on Optimization},
  vol.~20, no.~4, pp. 1956--1982, 2010.

\end{thebibliography}

\end{document}